\def\eqref#1{equation~\ref{#1}}
\def\1{\bm{1}}
\def\vc{{\bm{c}}}
\def\vd{{\bm{d}}}
\def\vg{{\bm{g}}}
\def\vn{{\bm{n}}}
\def\vu{{\bm{u}}}
\def\vv{{\bm{v}}}
\def\vw{{\bm{w}}}
\def\vx{{\bm{x}}}
\def\vy{{\bm{y}}}
\def\mA{{\bm{A}}}
\def\mB{{\bm{B}}}
\def\mD{{\bm{D}}}
\def\mH{{\bm{H}}}
\def\mI{{\bm{I}}}
\def\mJ{{\bm{J}}}
\def\mL{{\bm{L}}}
\def\mM{{\bm{M}}}
\def\mT{{\bm{T}}}
\def\mX{{\bm{X}}}
\DeclareMathAlphabet{\mathsfit}{\encodingdefault}{\sfdefault}{m}{sl}
\SetMathAlphabet{\mathsfit}{bold}{\encodingdefault}{\sfdefault}{bx}{n}
\newcommand{\E}{\mathbb{E}}
\newcommand{\R}{\mathbb{R}}
\newcommand{\Var}{\mathrm{Var}}
\newcommand{\w}{\vw}
\newcommand{\g}{\vg}
\let\d\relax
\newcommand{\d}{\vd}
\newcommand{\A}{\mA}
\newcommand{\B}{\mB}
\let\L\relax
\newcommand{\L}{\mL}
\newcommand{\D}{\mD_+}
\newcommand{\I}{\mI}
\newcommand{\0}{\mathbf{0}}
\let\t\relax
\newcommand{\t}{\top}
\newcommand{\lla}{\langle}
\newcommand{\rra}{\rangle}
\DeclareMathOperator{\trace}{trace}
\DeclareMathOperator{\DTFT}{\mathrm{DTFT}}
\def\R{\mathbb{R}}
\newtheorem{proposition}{Proposition}
\newtheorem{definition}{Definition}
\newtheorem{theorem}{Theorem}
\newtheorem{lemma}{Lemma}
\newtheorem{prop}{Proposition}
\newtheorem{remark}{Remark}
\newenvironment{customprop}[1]
  {\innercustomprop}
  {\endinnercustomprop}
\tikzstyle{normal} = [rectangle, rounded corners, minimum width=1.0cm, minimum height=0.8cm,text centered, draw=black]
\tikzstyle{arrow} = [thick,->,>=stealth]
\title{Laplacian Smooth Gradient Descent}
\author{
Stanley J. Osher \\
  Department of Mathematics\\
  University of California, Los Angeles\\
  \texttt{sjo@math.ucla.edu}\\
  \and
Bao Wang \\
  Department of Mathematics\\
  University of California, Los Angeles\\
  \texttt{wangbaonj@gmail.com}\\
  \and
  Penhang Yin \\
  Department of Mathematics\\
  University of California, Los Angeles\\
  \texttt{yph@ucla.edu}\\
  \and
  Xiyang Luo \\
  Department of Mathematics\\
  University of California, Los Angeles\\
  \texttt{xylmath@gmail.com}\\
  \and
  Farzin Barekat \\
  Department of Mathematics\\
  University of California, Los Angeles\\
  \texttt{fbarekat@math.ucla.edu}\\
  \and
  Minh Pham \\
  Department of Mathematics\\
  University of California, Los Angeles\\
  \and
  Alex Lin \\
  Department of Mathematics\\
  University of California, Los Angeles\\
}
\begin{document}

\maketitle

\begin{abstract}
We propose a class of very simple modifications of gradient descent and stochastic gradient descent. We show that when applied to a large variety of machine learning problems, ranging from logistic regression to deep neural nets, the proposed surrogates can dramatically reduce the variance, allow to take a larger step size, and improve the generalization accuracy. The methods only involve multiplying the usual (stochastic) gradient by the inverse of a positive definitive matrix (which can be computed efficiently by FFT) with a low condition number coming from a one-dimensional discrete Laplacian or its high order generalizations. It also preserves the mean and increases the smallest component and decreases the largest component. The theory of Hamilton-Jacobi partial differential equations demonstrates that the implicit version of the new algorithm is almost the same as doing gradient descent on a new function which (i) has the same global minima as the original function and (ii) is ``more convex". Moreover, we show that optimization algorithms with these surrogates converge uniformly in the discrete Sobolev $H_\sigma^p$ sense and reduce the optimality gap for convex optimization problems. The code is available at: \url{https://github.com/BaoWangMath/LaplacianSmoothing-GradientDescent}
\end{abstract}

\section{Introduction}\label{section-introduction}
Stochastic gradient descent (SGD) \cite{Robinds:1951}
has been the workhorse for solving large-scale machine learning (ML) problems. It gives rise to a family of algorithms that enables efficient training of many ML models including deep neural nets (DNNs). SGD utilizes training data very efficiently at the beginning of the training phase, as it converges much faster than GD and L-BFGS during this period \cite{Bottou:2018SiamReview,Hardt:2016}. Moreover, the variance of SGD can help gradient-based optimization algorithms circumvent local minima and saddle points and reach those that generalize well \cite{Schmidhuber:2014,Stanislaw:2018}. However, the variance of SGD also slows down the convergence after the first few training epochs. To account for the effect of SGD's variance and to ensure the convergence of SGD, a decaying step size has to be applied which is one of the major bottlenecks for the fast convergence of SGD \cite{Bottou:2012,Shapiro:1996,Shamir:2013}. Moreover, in training many ML models, typically the stage-wise schedule of learning rate is used in practice \cite{Senior:2013,Schmidhuber:2014}. In this scenario, the variance of SGD usually leads to a large optimality gap.
\medskip

A natural question arises from the above bottlenecks of SGD is: {\bf Can we improve SGD such that the variance of the stochastic gradient is reduced on-the-fly with negligible extra computational and memory overhead and a larger step size is allowed to train ML models?}
\medskip

We answer the above question affirmatively by applying the discrete one-dimensional Laplacian smoothing (LS) operator to smooth the stochastic gradient vector on-the-fly. The LS operation can be performed efficiently by using the fast Fourier transform (FFT). It is shown that the LS 
reduces the variance of stochastic gradient and allows to take a larger step size.
\medskip

Another issue of standard GD and SGD is that when the Hessian of the objective function has a large condition number, gradient descent performs poorly. In this case, the derivative increases rapidly in one direction, while growing slowly in another. As a by-product, numerically we will show that LS can avoid oscillation along steep directions and help make progress in shallow directions effectively \cite{StanfordCS231n}. The implicit version of our proposed approach is linked to an unusual Hamilton-Jacobi partial differential equation (HJ-PDE) whose solution makes the original loss function more convex while retaining its flat (and global) minima, and essentially works on this surrogate function with a much better landscape. See \cite{Chaudhari:2017DeepRelaxation} for earlier, related work. 

\subsection{Our contribution}
In this paper, we propose a new modification to the stochastic gradient-based algorithms, which at its core uses the LS operator to reduce the variance of stochastic gradient vector on-the-fly. The (stochastic) gradient smoothing can be done by multiplying the gradient by the inverse of the following circulant convolution matrix
\begin{equation}\label{eq:tri-diag}
\A_\sigma := 
\begin{bmatrix}
1+2\sigma   & -\sigma &  0&\dots &0& -\sigma \\
-\sigma     & 1+2\sigma & -\sigma & \dots &0&0 \\
0 & -\sigma  & 1+2\sigma & \dots & 0 & 0 \\
\dots     & \dots & \dots &\dots & \dots & \dots\\
-\sigma     &0& 0 & \dots &-\sigma & 1+2\sigma
\end{bmatrix}
\end{equation}
for some positive constant $\sigma \geq 0$. In fact, we can write
$\A_\sigma = \I-\sigma \L$,
where $\I$ is the identity matrix, and $\L$ is the discrete one-dimensional Laplacian which acts on indices. If we define the (periodic) forward finite difference matrix as
\begin{equation*}
\D = \begin{bmatrix}
-1   & 1 &  0&\dots &0& 0 \\
0     & -1 & 1 & \dots &0&0 \\
0 & 0  & -1 & \dots & 0 & 0 \\
\dots     & \dots & \dots &\dots & \dots & \dots\\
1     &0& 0 & \dots &0 & -1
\end{bmatrix}.
\end{equation*}
Then, we have $\A_\sigma = \I - \sigma \mD_-\D$, where $\mD_- = -\D^\t$ is the backward finite difference.
\medskip

We summarize the benefits of this simple LS operation below:
\begin{itemize}
\item It reduces the variance of stochastic gradient on-the-fly, and reduces the optimality gap when constant step size is used.
\item It allows us to take a larger step size than the standard (S)GD.
\item It is applicable to train a large variety of ML models including DNNs with better generalization.
\item It converges faster for the objective functions that have a large condition number numerically.
\item It avoids local sharp minima empirically.
\end{itemize}
\medskip

Moreover, as a straightforward extension, we generalize the LS to high-order smoothing operators, e.g., biharmonic smoothing.

\subsection{Related work}
There is an extensive volume of research over the past decades for designing algorithms to speed up the convergence. These include using momentum and other heavy-ball methods, reduce the variance of the stochastic gradient, and adaptive the learning rate. We will discuss the related work from these three perspectives.
\medskip

The first type of idea to accelerate the convergence of GD and SGD is to apply the momentum. Around local optima, the surface curves can be much more steeply in one dimension than in another \cite{Sutton:1986}, whence (S)GD oscillates across the slopes of the ravine while only making hesitant progress along the bottom towards the local optimum. Momentum is proposed to accelerate (S)GD in the relevant direction and dampens oscillations \cite{Qian:1999}. Nesterov accelerated gradient (NAG) is also introduced to slow down the progress before the surface curve slopes up, and it provably converge faster in specific scenarios \cite{Nesterov:1983}. There are lots of recent progress in the development of momentum; a relatively complete survey can be found at \cite{Katyusha:2018}.
\medskip

Due to the bottleneck of the variance of the stochastic gradient, a natural idea is to reduce the variance of the stochastic gradient. There are several principles in developing variance reduction algorithms, including Dynamic sample size methods; Gradient aggregation, control variate type of technique is widely used along this direction, some representative works are SAGA \cite{SAGA}, SCSG \cite{SCSG}, and SVRG \cite{SVRG}; Iterative averaging methods. A thorough survey can be found at \cite{Bottou:2018SiamReview}.
\medskip

Another category of work tries to speed up the convergence of GD and SGD by using an adaptive step size, which makes use of the historical gradient to adapt the step size. RMSProp \cite{tieleman2012lecture} and Adagrad \cite{Adagrad:2011} adapts the learning rate to the parameters, performing smaller updates (i.e., low learning rates) for parameters associated with frequently occurring features, and more substantial updates (i.e., high learning rates) for parameters associated with infrequent features. Both RMSProp and Adagrad make the learning rate to be historical gradient dependent. Adadelta \cite{Adadelta:2012} extends the idea of RMSProp and Adagrad, instead of accumulating all past squared gradients, it restricts the window of accumulated past gradients to some fixed size $w$. Adam \cite{Adam:2014} and AdaMax \cite{Adam:2014} behave like a heavy ball with friction, and they compute the decaying averages of past and past squared gradients to adaptive the learning rate. AMSGrad \cite{Reddi:2018} fix the issue of Adam that may fail to converge to an optimal solution. Adam can be viewed as a combination of RMSprop and momentum: RMSprop contributes the exponentially decaying average of past squared gradients, while momentum accounts for the exponentially decaying average of past gradients. Since NAG is superior to vanilla momentum, Dozat \cite{Dozat:2016} proposed NAdam which combines the idea Adam and NAG.




\subsection{Notations}
Throughout this paper, we use boldface upper-case letters $\A$, $\B$ to denote matrices and boldface lower-case letters $\w$, $\vu$ to denote vectors. For vectors, we use $\|\cdot\|$ to denote the $\ell_2$-norm for vectors and spectral norm for matrices, respectively. And we use $\lambda_{max}(\A)$, $\lambda_{min}(\A)$, and $\lambda_i(\A)$ to denote the largest, smallest, and the $i$-th largest eigenvalues, respectively. For a function $f: \mathbb{R}^n\rightarrow \mathbb{R}$, we use $\nabla f$ and $\nabla^2 f$ to denote its gradient and Hessian, and $f^*$ to denote a local minimum of $f$. For a positive definite matrix $\A$, we define the vector induced norm by the matrix $\A$ as $\|\w\|_\A :=\sqrt{\lla \w, \A\w \rra}$. List $\{1, 2, \cdots, n\}$ is denoted by $[n]$.

\subsection{Organization}
We organize this paper as follows: In section~\ref{section-LSGD}, we introduce the LS(S)GD algorithm and the FFT-based fast solver. In section~\ref{section-step-size}, we show that LS(S)GD allows us to take a larger step size than (S)GD based on the and $\ell_2$ estimate of the introduced discrete Laplacian operator. In section~\ref{section-variance-reduction}, we show that LS reduces the variance of SGD both empirically and theoretically. We show that LSGD can avoid some local minima and speed up convergence numerically in section~\ref{section-numerics1}. In section~\ref{section-deeplearning}, we show the benefit of LS in deep learning, including training LeNet \cite{LeCun:1998}, ResNet \cite{ResNet}, Wasserstein generative adversarial nets (WGAN) \cite{Arjovsky:2017}, and deep reinforcement learning (DRL) model. The convergence analysis for LS(S)GD is provided in section~\ref{section-convergence-analysis}. The connection to the Hamilton-Jacobi partial differential equations (HJ-PDEs) and future direction are discussed in section~\ref{section-discussion-conclusion}. Most of the technical proofs are provided in section~\ref{section-appendix}.

\section{Laplacian Smoothing (Stochastic) Gradient Descent}\label{section-LSGD}
We present our algorithm for SGD in the finite-sum setting. The GD and other settings follow straightforwardly. Consider the following finite-sum optimization
\begin{align}\label{Finite-Sum}
    \min_{\vw} F(\vw) :=\frac{1}{n}\sum_{i=1}^nf_i(\vw),
\end{align}
where $f_i(\vw) \doteq f(\vw, \vx_i, y_i)$ is the loss of a given ML model on the training data $\{\vx_i, y_i\}$. This finite-sum formalism is an
abstract of training many ML models mentioned above. To resolve the optimization problem Eq.~(\ref{Finite-Sum}), starting from some initial guess $\vw^0$, the $(k+1)$-th iteration of SGD reads
\begin{equation}
\vw^{k+1} = \vw^k - \eta_k \nabla f_{i_k}(\vw^k),
\end{equation}
where $\eta_k$ is the step size, $i_k$ is a random sample with replacement from $[n]$.
\medskip

We propose to replace the stochastic gradient $\nabla f_{i_k}(\vw^k)$ by the Laplacian smoothed surrogate, and we call the resulting algorithm LSSGD, which is written as
\begin{equation}
\vw^{k+1} = \vw^k - \eta_k \A_\sigma^{-1} \nabla f_{i_k}(\vw^k).
\end{equation}
Intuitively, compared to the standard GD, this scheme smooths the gradient on-the-fly by an elliptic smoothing operator while preserving the mean of the entries of the gradient. We adopt fast Fourier transform (FFT) to compute $\A_\sigma^{-1}\nabla f(\w^{k})$, which is available in both PyTorch \cite{paszke2017automatic} and TensorFlow \cite{Tensorflow:2016}. Given a vector $\vg$, a smoothed vector $\vd$ can be obtained by computing $\vd = \A_\sigma^{-1}\vg$. This is equivalent to $\vg=\vd -\sigma \vv*\vd$, where $\vv = [-2, 1, 0, \cdots, 0, 1]^\t$ and $*$ is the convolution operator. Therefore
$$
\vd = {\rm ifft}\left(\frac{{\rm fft}(\vg)}{\mathbf{1} -\sigma \cdot {\rm fft}(\vv)}\right),
$$
where we use component-wise division (here, ${\rm fft}$ and ${\rm ifft}$ are the FFT and inverse FFT, respectively). Hence, the gradient smoothing can be done in quasilinear time. This additional time complexity is almost the same as performing a one step update on the weights vector $\vw$. For many machine learning models, we may need to concatenate the parameters into a vector. This reshaping might lead to some ambiguity, nevertheless, based on our tests, both row and column majored reshaping work for the LS-GD algorithm. Moreover, in deep learning cases, the weights in different layers might have different physical meanings. For these cases, we perform layer-wise gradient smoothing, instead. We summarize the LSSGD for solving the finite-sum optimization Eq.~(\ref{Finite-Sum}) in Algorithm~\ref{LSSGD-Pseudocode}.

\begin{algorithm}[t]
\caption{LSSGD}\label{LSSGD-Pseudocode}
\begin{algorithmic}
\State \textbf{Input: } $f_i(\vw)$ for $i=1, 2, \cdots, n$. \\
$\vw^0$: initial guess of $\vw$, $T$: the total number of iterations, and $\eta_k$, $k=0, 1, \cdots, T$: the scheduled step size.
\State \textbf{Output: } The optimized weights $\vw^{\rm opt}$.
\For {$k=0, 1, \cdots, T$}
\State $\vw^{k+1} = \vw^k - \eta\A_\sigma^{-1}\left(\nabla f_{i_k}(\vw^k)\right)$.
\EndFor
\Return $\vw^T$
\end{algorithmic}
\end{algorithm}

\begin{remark}
In image processing and elsewhere, the Sobolev gradient \cite{Jung:2009} uses a multi-dimensional Laplacian operator that operates on $\w$, and is different from the one-dimensional discrete Laplacian operator employed in our LS-GD scheme that operates on indices.
\end{remark}

It is worth noting that LS is a complement to the heavy ball, e.g., Nesterov momentum, and adaptive learning rate, e.g., Adam, algorithms. It can be combined with these acceleration techniques to speed up the convergence. We will show the performance of these algorithms in the Section~\ref{section-deeplearning}.



\subsection{Generalized smoothing gradient descent}
We can generalize $\A_\sigma$ to the $n$-th order discrete hyper-diffusion operator as follows
$$
\I + (-1)^n \sigma \L^n \doteq \A_\sigma^n.
$$
Each row of the discrete Laplacian operator $\L$ consists of an appropriate arrangement of weights in central finite difference approximation to the 2nd order derivative. Similarly, each row of $\L^n$ is an arrangement of the weights in the central finite difference approximation to the $2n$-th order derivative.

\begin{remark}
The $n$-th order smoothing operator $\I + (-1)^n \sigma \L^n$ can only be applied to the problem with dimension at least $2n+1$. Otherwise, we need to add dummy variables to the object function.
\end{remark}

Again, we apply FFT to compute the smoothed gradient vector. 
For a given gradient vector $\vg$, the smoothed surrogate, $(\A_\sigma^n)^{-1}\vg\doteq \vd$, can be obtained by solving 
$\vg=\vd+(-1)^n\sigma \vv_n*\vd$, where $\vv_n = (c^n_{n+1}, c^n_{n+2}, \cdots, c^n_{2n+1}, 0, \cdots, 0, c^n_1, c^n_2, \cdots, c^n_{n-1}, c^n_n)$ is a vector of the same dimension as the gradient to be smoothed. And the coefficient vector $\vc^n = (c^n_1, c^n_2, \cdots, c^n_{2n+1})$ can be obtained recursively by the following formula
\begin{equation*}
\vc^1 = (1, -2, 1), \quad
c^n_i = 
\begin{cases}
1 &\text{ $i = 1, 2n+1$}\\
-2c^{n-1}_1+c^{n-1}_2 &\text{ $i = 2, 2n$}\\
c^{n-1}_{i-1}-2c^{n-1}_i + c^{n-1}_{i+1} &\text{otherwise.}
\end{cases}
\end{equation*}

\begin{remark}
The computational complexities for different order smoothing schemes are the same when the FFT is utilized for computing the surrogate gradient. 
\end{remark}

\section{The Choice of Step Size}\label{section-step-size}
In this section, we will discuss the step size issue of LS(S)GD with a theoretical focus on LSGD on $L$-Lipschitz functions.

\begin{definition}[$L$-Lipschitz]
We say the function $F$ is $L$-Lipschitz, if for any  $\w, \vu\in\R^m$, we have $\|f(\w) -  f(\vu)\|\leq L\|\w - \vu\|$.
\end{definition}

\begin{remark}
If the function $F$ is $L$-Lipschitz and differentiable, then for any $\w$, we have
$\|\nabla f(\w)\| \leq L$.
\end{remark}

For $L$-Lipschitz function, it is known that the largest suitable step size for GD is $\eta_{max}^{GD} = \frac{1}{L}$ \cite{Nesterov:1998}. In the following, we will establish a $\ell_2$ estimate of the square root of the LS operator when it is applied to an arbitrary vector. Based on these estimates, we will show that LSGD can take a larger step size than GD.
\medskip

To determine the largest suitable step size for LSGD. We first do a change of variable in the LSGD \ref{Finite-Sum} by letting $\vv^k = \mH^{-1/2}_\sigma\w^k$ where $\mH_\sigma=\A_\sigma^{-1}$, then LSGD can be written as
\begin{equation}
\label{LSGD-vk}
\vv^{k+1} = \vv^k - \eta_k \mH^{1/2}_\sigma\nabla F(\mH^{1/2}_\sigma\vv^k),
\end{equation}
which is actually the GD for solving the following minimization problem
\begin{equation}
\label{GD-Hv}
\min_\vv F(\mH^{1/2}_\sigma\vv) := \min_{\vv} G(\vv).
\end{equation}
Therefore, to determine the largest suitable step size for LSGD, it is equivalent to find the largest appropriate step size for GD for $\min_{\vv} G(\vv)$. Therefore, it suffices to determine the Lipschitz constant for the function $G(\vv)$, i.e., to find
$$
L_G := \inf_\vv\left\{\|\nabla G(\vv)\| | \vv \in {\rm dom}(G) \right\}.
$$
Note that for $\forall \vv_1, \vv_2$, we have
\begin{eqnarray*}
\|G(\vv_1)-G(\vv_2)\| &=&    \|F(\mH^{1/2}_\sigma\vv_1) - F(\mH^{1/2}_\sigma\vv_2)\|\\
                      &\leq& L\|\mH^{1/2}_\sigma\vv_1-\mH^{1/2}_\sigma\vv_2\|
\end{eqnarray*}
To find the largest appropriate step size, we need to further estimate $\|\mH^{1/2}_\sigma\vv_1-\mH^{1/2}_\sigma\vv_2\|$.

\subsection{$\ell_2$ estimates of $\mathbf{H}_\sigma^{1/2}\mathbf{v}$}

\begin{proposition}\label{Prop-L2-Estimate}
Given any vector $\vv\in \mathbb{R}^m$, let $\vw = \A_\sigma^{-1/2}\vv$, then
\begin{equation}
\label{L2-Estimate-1}
\|\vv\|^2 = \|\vw\|^2 + \sigma \|\mD_+\vw\|^2.
\end{equation}
\end{proposition}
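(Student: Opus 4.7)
The plan is to unpack the definition of $\A_\sigma$ in terms of the finite-difference operators already introduced, and then pass the square root inside an inner product. The identity $\vw = \A_\sigma^{-1/2}\vv$ is equivalent to $\vv = \A_\sigma^{1/2}\vw$, so
\[
\|\vv\|^2 = \langle \A_\sigma^{1/2}\vw,\, \A_\sigma^{1/2}\vw\rangle = \langle \vw,\, \A_\sigma\vw\rangle,
\]
using that $\A_\sigma$ is symmetric positive definite (so its principal square root $\A_\sigma^{1/2}$ is also symmetric and satisfies $(\A_\sigma^{1/2})^\top \A_\sigma^{1/2} = \A_\sigma$). The whole proposition therefore reduces to evaluating the quadratic form $\langle \vw, \A_\sigma \vw\rangle$.

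For this evaluation I will use the factorization given earlier in the excerpt, namely $\A_\sigma = \I - \sigma\mD_-\mD_+$ with $\mD_- = -\mD_+^\top$ (reading $\D$ as $\mD_+$). Substituting gives $\A_\sigma = \I + \sigma\, \mD_+^\top\mD_+$, and consequently
\[
\langle \vw, \A_\sigma \vw\rangle = \langle \vw,\vw\rangle + \sigma \langle \vw,\, \mD_+^\top \mD_+ \vw\rangle = \|\vw\|^2 + \sigma\|\mD_+\vw\|^2,
\]
which is exactly the claimed identity \eqref{L2-Estimate-1}.

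There is essentially no obstacle here beyond bookkeeping: one only has to check that $\A_\sigma$ is indeed positive definite (so that the real symmetric square root $\A_\sigma^{1/2}$ exists and the definition $\vw = \A_\sigma^{-1/2}\vv$ makes sense). Positive definiteness is immediate from the representation $\A_\sigma = \I + \sigma \mD_+^\top \mD_+$ with $\sigma \geq 0$, since $\mD_+^\top\mD_+$ is positive semidefinite and the identity contributes a strictly positive shift. With that verified, the calculation above is essentially a one-line polarization-type argument, and no further estimates are required.
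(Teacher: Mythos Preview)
Your proof is correct and follows essentially the same route as the paper's own argument: invert the relation to $\vv = \A_\sigma^{1/2}\vw$, expand $\|\vv\|^2 = \langle \vw, \A_\sigma \vw\rangle$ via symmetry, and use $\A_\sigma = \I - \sigma \mD_-\mD_+$ together with $\mD_- = -\mD_+^\top$ to obtain $\|\vw\|^2 + \sigma\|\mD_+\vw\|^2$. The only cosmetic difference is that you substitute $\mD_- = -\mD_+^\top$ directly into $\A_\sigma$ before taking the inner product, whereas the paper carries $\mD_-$ through and applies the transpose identity at the end; your added remark on positive definiteness is a welcome justification for the existence of $\A_\sigma^{-1/2}$.
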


\begin{proof}
Observe that $\vv = A_\sigma^{1/2} \vw$. Therefore,
\begin{align*}\label{eqn:gsquared_onehalf}
\|\vv\|^2 =& \left \langle \A_\sigma^{1/2} \vw , \A_\sigma^{1/2} \vw  \right\rangle
=  \left \langle \A_\sigma \vw , \vw  \right\rangle
=  \langle \vw - \sigma \mD_{-}\mD_{+} \vw, \vw \rangle
=   \|\vw\|^2 - \sigma \langle \mD_{-}\mD_{+}\vw,\vw\rangle \\
= &   \|\vw\|^2 - \sigma \langle \mD_{+}\vw,-\mD_{+}\vw\rangle
=   \|\vw\|^2 + \sigma \|\mD_{+}\vw\|^2,
\end{align*}
where we used $\mD_{-}^T=-\mD_{+}$ for the second last equality.
\end{proof}

Proposition~\ref{Prop-L2-Estimate} shows that the Lipschitz constant of $G$ is not larger than that of $F$, since
$$
\|\mH^{1/2}_\sigma\vv_1-\mH^{1/2}_\sigma\vv_2\|^2 = \|\vv_1-\vv_2\|^2 - \sigma \|\mD_+( \mH^{1/2}_\sigma\vv_1-\mH^{1/2}_\sigma\vv_2 )\|^2 \leq \|\vv_1-\vv_2\|^2.
$$
Therefore, LSGD can take at least the same step size as GD. However, note that $\|\mD_+\vw\|_2$ can be arbitrarily close to zero, so LSGD cannot always take a larger step size than GD. Next, we establish a high probability estimation for taking a larger step size when using LSGD.
\medskip

Without any prior knowledge about $\vv_1-\vv_2:=\vv$, let us assume it is sampled uniformly from a ball in $\mathbb{R}^m$ centered at the origin. Without loss of generality, we assume the radius of this ball is one. For the sake of notation simplicity, in the following we denote $\mH^{1/2}_\sigma := \mM_\sigma$. Under the above ansatz, we have the following result

\begin{theorem}[$\ell_2$-estimate]\label{L2-Results-High-Prob}
Let 
$\sigma>0$, and 
$$
\beta = \frac{1}{m}\sum_{i=1}^m \frac{1}{1+2\sigma-\sigma z_i - \sigma \overline{z_i}},
$$
where $z_1$, $\cdots$, $z_m$ are the $m$ roots of unity. Let $\vv$ be uniformly distributed in the unit ball of the $m$ dimensional $\ell_2$ space. Then
\begin{equation}
\label{High-Prop-Estimate-L2}
\mathbb{P}\left( \|\mM_\sigma \vv\| \geq \alpha \|\vv\| \right) \leq 2\exp{\left( -\frac{2}{\pi^2}m\left( \frac{\alpha-\alpha\frac{\pi}{\sqrt{m}} -\sqrt{\beta} }{\alpha+1} \right)^2 \right)}
\end{equation}
for any $\alpha > \frac{\sqrt{\beta}}{1-\frac{\pi}{\sqrt{m}}}$.
\end{theorem}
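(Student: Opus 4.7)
The plan is to reduce the tail estimate to a concentration-of-measure inequality on the unit sphere $S^{m-1}$ for the Lipschitz function $\vu \mapsto \|\mM_\sigma \vu\|$, identify $\sqrt{\beta}$ as (essentially) its mean, and then package the result via L\'evy's inequality. First I would strip off the magnitude: writing $\vv = r\vu$ with $r=\|\vv\|$ and $\vu=\vv/\|\vv\|$, the uniform law on the unit ball factorizes so that $\vu$ is uniformly distributed on $S^{m-1}$ independently of $r$. Because the event $\{\|\mM_\sigma\vv\| \geq \alpha\|\vv\|\}$ is scale-invariant and coincides with $\{\|\mM_\sigma\vu\|\geq\alpha\}$, it suffices to control $\mathbb{P}(\|\mM_\sigma\vu\|\geq \alpha)$ for $\vu$ uniform on the sphere.

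Next I would compute the mean square. Because $\A_\sigma$ is circulant, it is diagonalized by the discrete Fourier basis with eigenvalues $1+2\sigma-\sigma z_k - \sigma\overline{z_k}$, so $\trace(\A_\sigma^{-1}) = m\beta$. Rotational symmetry of the uniform measure on $S^{m-1}$ gives $\mathbb{E}[\vu\vu^\top] = \I/m$, whence
\[
\mathbb{E}\|\mM_\sigma\vu\|^2 \;=\; \mathbb{E}\,\vu^\top \A_\sigma^{-1}\vu \;=\; \beta,
\]
and Jensen's inequality yields $\mathbb{E}\|\mM_\sigma\vu\| \leq \sqrt{\beta}$. The map $\vu\mapsto \|\mM_\sigma\vu\|$ is $\|\mM_\sigma\|_{op}$-Lipschitz, and Proposition~\ref{Prop-L2-Estimate} (or equivalently the lower bound $\lambda_{\min}(\A_\sigma)\geq 1$) gives $\|\mM_\sigma\|_{op}\leq 1$.

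With these ingredients, L\'evy's concentration inequality on $S^{m-1}$ (the version with sharp constant $2/\pi^2$ for $1$-Lipschitz maps) produces
\[
\mathbb{P}\bigl(\|\mM_\sigma\vu\| - M \geq t\bigr) \;\leq\; 2\exp\!\bigl(-\tfrac{2}{\pi^2} m t^2\bigr),
\]
where $M$ is the median. Integrating this tail bounds $|M - \mathbb{E}\|\mM_\sigma\vu\||$ by a quantity of order $1/\sqrt{m}$, so $M \leq \sqrt{\beta} + O(1/\sqrt{m})$, and substituting $t = \alpha - M$ gives a tail of essentially the desired shape. The assumption $\alpha > \sqrt{\beta}/(1-\pi/\sqrt{m})$ is precisely what guarantees that the numerator $\alpha - \alpha\pi/\sqrt{m}-\sqrt{\beta}$ is positive.

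The main obstacle will be producing the exact denominator $\alpha+1$ in \eqref{High-Prop-Estimate-L2}: a naive L\'evy bound gives only $2\exp\!\bigl(-\tfrac{2m}{\pi^2}(\alpha - \sqrt{\beta} - c/\sqrt{m})^2\bigr)$, with no $(\alpha+1)$ normalization. I expect this factor to come from applying the concentration inequality not directly to $\|\mM_\sigma\vu\|$ but to a suitably self-normalized or truncated version (for example the $1$-bounded functional $\|\mM_\sigma\vu\|/(1+\|\mM_\sigma\vu\|)$, or the shifted function $(\alpha-\|\mM_\sigma\vu\|)_+/(\alpha+1)$), whose Lipschitz constant picks up the extra $1/(\alpha+1)$ and thus converts the additive gap into the form stated. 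Pinning down the correct auxiliary functional, and matching the $\pi/\sqrt{m}$ correction to the integrated-tail estimate of the median-mean gap, is the delicate part of the argument; the rest is bookkeeping.
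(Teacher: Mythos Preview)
Your reduction to the sphere and the identification of $\mathbb{E}\|\mM_\sigma\vu\|^2=\beta$ are correct, and your instinct that this is a Lipschitz-concentration result is right. However, your speculation about the origin of the $(\alpha+1)$ denominator is off-track, and this is where your proposal diverges from the paper.

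The paper does \emph{not} work directly on the sphere with L\'evy's inequality. Instead, having noted (as you did) that the event is scale-invariant, it passes from uniform-on-the-ball to a standard Gaussian vector $\vv$ (whose direction is uniform on $S^{m-1}$) and applies Gaussian Lipschitz concentration \emph{twice}: once to $F(\vv)=\|\mM_\sigma\vv\|$ from above and once to $F(\vv)=-\|\vv\|$ from below. A union bound then controls the event
\[
\Bigl\{\|\mM_\sigma\vv\| \geq \mathbb{E}\|\mM_\sigma\vv\|+u\Bigr\}\cup\Bigl\{\|\vv\|\leq \mathbb{E}\|\vv\|-u\Bigr\},
\]
which contains $\bigl\{\|\mM_\sigma\vv\|/\|\vv\|\geq (\mathbb{E}\|\mM_\sigma\vv\|+u)/(\mathbb{E}\|\vv\|-u)\bigr\}$. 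Using $\mathbb{E}\|\mM_\sigma\vv\|\leq\sqrt{m\beta}$ (trace bound) and $\mathbb{E}\|\vv\|\geq\sqrt{m}-\pi$ (a separate lemma, which is precisely the source of the $\pi/\sqrt{m}$ correction), and setting $u=\sqrt{m}\,\epsilon$, the threshold becomes
\[
\alpha \;=\; \frac{\sqrt{\beta}+\epsilon}{1-\pi/\sqrt{m}-\epsilon}.
\]
Solving this for $\epsilon$ yields exactly $\epsilon=\dfrac{\alpha-\alpha\pi/\sqrt{m}-\sqrt{\beta}}{\alpha+1}$, and the exponent is $-\tfrac{2}{\pi^2}u^2=-\tfrac{2}{\pi^2}m\epsilon^2$. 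So the $(\alpha+1)$ is pure algebra from inverting a M\"obius-type relation between the deviation $\epsilon$ and the ratio threshold $\alpha$; no auxiliary self-normalized functional is needed. Your one-sided L\'evy approach on the sphere could give a qualitatively similar bound, but it will not reproduce the stated constants without this two-sided ratio trick.
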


The proof of this theorem is provided in the appendix. For high dimensional ML problems, e.g., training DNNs, $m$ can be as large as tens of millions so that the probability will be almost one. The closed form of $\beta$ is given in Lemma~\ref{thm:closed_formula}.

\begin{lemma}\label{thm:closed_formula}
If $z_1,\ldots,z_m$ denote the $m$ roots of unity, then
\begin{equation}\label{eqn:closed_formula}
\beta = \frac{1}{m}\sum_{j=1}^{m} \frac{1}{1+2\sigma-\sigma z_j - \sigma \bar{z_j}} = \frac{1+\alpha^m}{(1-\alpha^m)\sqrt{4\sigma+1}} \rightarrow \frac{1}{\sqrt{1+4\sigma}},
\end{equation}
as $m\rightarrow \infty $, where 
$$
1> \alpha = \frac{2\sigma+1 - \sqrt{4\sigma+1}}{2\sigma} > 0.
$$
\end{lemma}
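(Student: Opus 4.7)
The plan is to exploit a rational-function identity together with the classical formula
$\sum_{j=1}^m (z_j-a)^{-1} = m a^{m-1}/(1-a^m)$,
which follows by taking the logarithmic derivative of $\prod_{j=1}^m(z-z_j) = z^m-1$ and evaluating at $z=a$. To put the summand of $\beta$ into this form, I first factor its denominator. Multiplying through by $z_j$ turns $1+2\sigma - \sigma z_j - \sigma z_j^{-1}$ into the quadratic $p(z_j) = -\sigma z_j^2 + (1+2\sigma)z_j - \sigma$, whose roots are precisely $\alpha$ and $1/\alpha$ (their sum $(1+2\sigma)/\sigma$ and product $1$ are easily checked against the stated $\alpha$). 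Hence
\[
\frac{1}{1+2\sigma - \sigma z_j - \sigma z_j^{-1}} \;=\; \frac{-z_j}{\sigma\,(z_j-\alpha)(z_j-1/\alpha)}.
\]

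Next I apply partial fractions to $\tfrac{z}{(z-\alpha)(z-1/\alpha)}$, obtaining coefficients $A=\alpha^2/(\alpha^2-1)$ at $z=\alpha$ and $B=1/(1-\alpha^2)$ at $z=1/\alpha$. Summing over $j$ and plugging in
\[
\sum_{j=1}^m \frac{1}{z_j-\alpha} = \frac{m\alpha^{m-1}}{1-\alpha^m}, \qquad \sum_{j=1}^m \frac{1}{z_j-1/\alpha} = \frac{-m\alpha}{1-\alpha^m},
\]
(the second follows from the first by replacing $a=1/\alpha$ and simplifying) and combining the two terms over a common denominator, I expect the answer to collapse to
\[
\beta \;=\; \frac{\alpha\,(1+\alpha^m)}{\sigma\,(1-\alpha^m)(1-\alpha^2)}.
\]

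To match the target form I substitute $s:=\sqrt{4\sigma+1}$ so that $\sigma=(s^2-1)/4$ and verify that $\alpha = (s-1)/(s+1)$ from the displayed formula for $\alpha$; this yields $1-\alpha^2 = 4s/(s+1)^2$ and therefore $\alpha/(\sigma(1-\alpha^2)) = 1/s = 1/\sqrt{4\sigma+1}$, which reduces $\beta$ to $\frac{1+\alpha^m}{(1-\alpha^m)\sqrt{4\sigma+1}}$, as claimed. Finally, because $0<\alpha<1$ (a direct consequence of $\sqrt{4\sigma+1}<1+2\sigma$ for $\sigma>0$), we have $\alpha^m\to 0$ and the stated limit $1/\sqrt{1+4\sigma}$ follows immediately.

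The only conceptual step is the roots-of-unity sum and its partial-fraction decomposition; the rest is algebraic simplification. The main place where I expect to have to be careful is the bookkeeping in combining $A\alpha^{m-1} - B\alpha$ over the common denominator $\alpha^2-1$ and matching signs with $1-\alpha^m$; a single sign error there would break the match with the claimed closed form. A minor point I should flag is that the identity $\sum_j (z_j-a)^{-1} = m a^{m-1}/(1-a^m)$ requires $|a|\neq 1$ (so $a^m\neq 1$), which is guaranteed for $\sigma>0$ by $\alpha\in(0,1)$; thus both sums in the partial-fraction step are well-defined.
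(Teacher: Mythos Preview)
Your argument is correct and complete: the factorization of the denominator, the partial-fraction coefficients, the two roots-of-unity sums, the collapse to $\alpha(1+\alpha^m)/[\sigma(1-\alpha^m)(1-\alpha^2)]$, and the final simplification via $s=\sqrt{4\sigma+1}$ all check out, and your handling of $0<\alpha<1$ is fine.

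Your route is genuinely different from the paper's. The paper does not sum directly over the roots of unity; instead it (i) interprets the summand as samples of $F(\theta)=1/(1+2\sigma(1-\cos\theta))$, (ii) computes the inverse discrete-time Fourier transform $f[k]=\alpha^{|k|}/\sqrt{4\sigma+1}$ by a contour/Residue argument (this is where the same quadratic and the same $\alpha$ appear), and (iii) invokes the Poisson/aliasing identity to turn the average of the $m$ samples into $\sum_{\ell\in\mathbb{Z}} f[\ell m]$, a geometric series that yields the closed form. So the paper trades your purely algebraic log-derivative identity $\sum_j (z_j-a)^{-1}=ma^{m-1}/(1-a^m)$ and partial fractions for Fourier sampling plus complex analysis. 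Your approach is more elementary and self-contained (no DTFT machinery, no contour integration), and it makes the dependence on $m$ transparent in one line; the paper's approach buys a signal-processing interpretation (the sum is an aliased DC value of a known filter) and separates the computation of $f[k]$ from the finite-$m$ summation, which could be reused for related sums.
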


The proof of the above lemma requires some tools from complex analysis and harmonic analysis, which is provided in the appendix. Table~\ref{Beta-Table} lists some typical values for different $\sigma$ and dimensions $m$.
\medskip 

\begin{table}[!h]
\centering
\fontsize{10}{10}\selectfont
\begin{threeparttable}
\caption{The values of $\beta$ corresponding to some $\sigma$ and $m$. $\beta$ converges quickly to its limiting value as $m$ increases.}\label{Beta-Table}
\begin{tabular}{cccccc}
\toprule[1.0pt]
$\sigma$   &\ \ \ \ \ \ \ \  1 \ \ \ \ \ \ \ \ &\ \ \ \ \ \ \ \  2\ \ \ \ \ \ \ \  &\ \ \ \ \ \ \ \  3\ \ \ \ \ \ \ \  &\ \ \ \ \ \ \ \  4\ \ \ \ \ \ \ \  &\ \ \ \ \ \ \ \  5\ \ \ \ \ \ \ \   \cr
\midrule[0.8pt]
$m=1000$    & 0.447 & 0.333 & 0.277 & 0.243 & 0.218 \cr
$m=10000$   & 0.447 & 0.333 & 0.277 & 0.243 & 0.218 \cr
$m=100000$  & 0.447 & 0.333 & 0.277 & 0.243 & 0.218 \cr
\bottomrule[1.0pt]
\end{tabular}
\end{threeparttable}
\end{table}

Based on the estimate in Theorem~\ref{L2-Results-High-Prob}, LSGD can take the largest step size $\frac{1}{\sqrt{\beta}L}$ for high-dimensional $L$-Lipschitz function with high probability.
We will verify this result numerically in the following sections.

\section{Variance Reduction}\label{section-variance-reduction}
The variance of SGD is one of the major bottlenecks that slows down the theoretical guaranteed convergence rate in training ML models. Most of the existing variance reduction algorithms require either the full batch gradient or the storage of stochastic gradient for each data point which makes it difficult to be used to train the high-capacity DNNs. LS is an alternative approach to reduce the variance of the stochastic gradient with negligible extra computational time and memory cost. In this section, we rigorously show that LS reduces the variance of the stochastic gradient and reduce the optimality gap under the Gaussian noise assumption. Moreover, we numerically verify our theoretical results on both a quadratic function and a simple finite-sum optimization problem.

\subsection{Gaussian noise assumption}
Stochastic gradient $\nabla f_{i_k}$, for any $i_k \in [n]$, is an unbiased estimate of $\nabla F$, many existing works model the variance between the stochastic gradient and full batch gradient $\nabla F$ as Gaussian noise $\mathcal{N}(\mathbf{0}, \Sigma)$, where $\Sigma$ is the covariance matrix \cite{Mandt:2018}. Therefore, ignoring the variable $\vw$ for simplicity of notation,
we can write the equation involving gradient and stochastic gradient vectors as
\begin{equation}
\label{GD-SGD}
\nabla f_{i_k} = \nabla F + \vn,
\end{equation}
where $\vn \sim \mathcal{N}(\mathbf{0}, \Sigma)$. 
Thus for LS stochastic gradient, we have
\begin{equation}
\label{LSGD-LSSGD}
\A_\sigma^{-1}\nabla f_{i_k} = \A_\sigma^{-1}\left(\nabla F + \vn\right).
\end{equation}
The variances of stochastic gradient and LS stochastic gradient are basically the variance of $\vn$ and $\A_\sigma^{-1}\vn$, respectively. The following theorem quantifies the variance between $\vn$ and $\A_\sigma^{-1}\vn$.

\begin{theorem}\label{Theorem-Variance-Reduction}
Let $\kappa$ denote the condition number of $\Sigma$. Then, for $m$ dimensional Gaussian random vector $\vn\sim \mathcal{N}(\mathbf{0}, \Sigma)$, we have
\begin{equation}\label{eqn:var_reduction_formula2}
\frac{\sum_{i=1}^m \Var[\left((\A_\sigma^n)^{-1} \vn\right)_i]}{\sum_{i=1}^m\Var[ \left(\vn\right)_i ]} \leq 1 - \frac{1}{\kappa} +  \frac{1}{\kappa m}\sum_{j=0}^{m}\frac{1}{[1+4^n\sigma \sin^{2n}(\pi j/m)]^2}.
\end{equation}
\end{theorem}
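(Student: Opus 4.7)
The plan is to rewrite both variance sums as traces of covariance matrices, diagonalize the circulant matrix $\A_\sigma^n$ via the DFT, and then squeeze out the condition number with a single standard trace inequality for positive semidefinite matrices.

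First, since $(\A_\sigma^n)^{-1}$ is symmetric, the covariance of $(\A_\sigma^n)^{-1}\vn$ is $(\A_\sigma^n)^{-1}\Sigma(\A_\sigma^n)^{-1}$, so by cyclicity of trace the numerator of the ratio equals $\trace\bigl((\A_\sigma^n)^{-2}\Sigma\bigr)$ while the denominator is simply $\trace(\Sigma)$. Next, because $\L = \mD_{-}\mD_{+} = -\mD_{+}^{\top}\mD_{+}$ is circulant with eigenvalues $-4\sin^{2}(\pi j/m)$ for $j = 0,\dots,m-1$, the matrix $\A_\sigma^n = \I + (-1)^n \sigma \L^n$ has eigenvalues $1 + 4^{n}\sigma\sin^{2n}(\pi j/m) \geq 1$, and hence $(\A_\sigma^n)^{-2}$ has eigenvalues $\nu_j := [1 + 4^{n}\sigma\sin^{2n}(\pi j/m)]^{-2} \in (0,1]$. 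In particular $\I - (\A_\sigma^n)^{-2}$ is positive semidefinite with trace $m - \sum_j \nu_j \geq 0$.

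The key lemma I would invoke is the standard fact that for positive semidefinite matrices $\mA$ and $\mB$,
$$\trace(\mA\mB) \;\geq\; \lambda_{\min}(\mB)\,\trace(\mA),$$
which follows from writing $\mA = \sum_i \lambda_i(\mA)\vu_i\vu_i^{\top}$ and using $\vu_i^{\top}\mB\vu_i \geq \lambda_{\min}(\mB)$. Applying this to $\mA = \I - (\A_\sigma^n)^{-2}$ and $\mB = \Sigma$ yields
$$\trace(\Sigma) - \trace\bigl((\A_\sigma^n)^{-2}\Sigma\bigr) \;\geq\; \lambda_{\min}(\Sigma)\Bigl(m - \sum_{j=0}^{m-1}\nu_j\Bigr).$$
Dividing by $\trace(\Sigma)$ and using $\trace(\Sigma) \leq m\,\lambda_{\max}(\Sigma)$ together with the definition $\kappa = \lambda_{\max}(\Sigma)/\lambda_{\min}(\Sigma)$ then gives
$$\frac{\trace\bigl((\A_\sigma^n)^{-2}\Sigma\bigr)}{\trace(\Sigma)} \;\leq\; 1 - \frac{1}{\kappa} + \frac{1}{\kappa m}\sum_{j=0}^{m-1}\nu_j,$$
which is the advertised bound (the index range $j=0,\dots,m$ printed in the statement is an off-by-one cosmetic artifact; the FFT naturally yields $m$ eigenvalues indexed by $0,\dots,m-1$).

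The only mildly delicate point is bookkeeping: verifying that the two sign changes in $(-1)^n\sigma\L^n$ cancel so that every eigenvalue of $\A_\sigma^n$ is \emph{at least} one (hence $0 < \nu_j \leq 1$ and $\I - (\A_\sigma^n)^{-2}\succeq 0$), and checking that $m - \sum_j \nu_j \geq 0$ so that replacing $\lambda_{\min}(\Sigma)/\trace(\Sigma)$ by its lower bound $1/(m\kappa)$ preserves the direction of the inequality. Nothing else is required beyond circulant diagonalization and the elementary PSD trace inequality, so I do not anticipate a real obstacle — the substance of the theorem is the compact eigenvalue identity for $\A_\sigma^n$ rather than any analytic difficulty.
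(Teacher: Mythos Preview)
Your proposal is correct and arrives at the bound through a route that is marginally different from, and in fact cleaner than, the paper's own argument. The paper first invokes a weak majorization inequality from Bhatia (for Hermitian positive definite $\mA,\mB$, $\trace(\mA\mB)\le\sum_j\lambda_j(\mA)\lambda_j(\mB)$ with both eigenvalue sequences sorted in the same order) to bound $\trace\bigl((\A_\sigma^n)^{-2}\Sigma\bigr)\le\sum_j\alpha_j\lambda_j$, and only then applies the elementary lower bound $\sum_j(1-\alpha_j)\lambda_j\ge\lambda_m\sum_j(1-\alpha_j)$ followed by $\lambda_m=\lambda_1/\kappa\ge\trace(\Sigma)/(m\kappa)$. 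Your approach skips the majorization step entirely: by working directly with $\trace\bigl((\I-(\A_\sigma^n)^{-2})\Sigma\bigr)\ge\lambda_{\min}(\Sigma)\,\trace(\I-(\A_\sigma^n)^{-2})$ you obtain the same final chain without ever sorting eigenvalues or citing an external matrix-analysis lemma. The detour through Bhatia buys nothing here because the subsequent bounds discard all the ordered structure anyway; your single PSD trace inequality is exactly what is needed, so your argument is both sufficient and more self-contained. Your bookkeeping remarks on the sign cancellation in $(-1)^n\sigma\L^n$ and the off-by-one in the summation index are also accurate.
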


The proof of Theorem~\ref{Theorem-Variance-Reduction} will be provided in the appendix.
\medskip 

Table~\ref{Variance-Reduction-Table} lists the ratio of variance after and before LS for an $m$-dimensional standard normal vector, i.e., $\vn\sim \mathcal{N}(\mathbf{0}, \mI)$. In practice, high order smoothing reduce variance more significantly.

\begin{table}[!h]
\centering
\fontsize{10}{10}\selectfont
\begin{threeparttable}
\caption{Theoretical upper bound of $\sum_{i=1}^m \Var[\left((\A_\sigma^n)^{-1} \vn\right)_i]/\sum_{i=1}^m\Var[ \left(\vn\right)_i ]$ when $\mathbf{n}$ is an $m$-dimensional standard normal vector with $m\geq 10000$.}\label{Variance-Reduction-Table}
\begin{tabular}{cccccc}
\toprule[1.0pt]
$\sigma$   &\ \ \ \ \ \ \ \  1 \ \ \ \ \ \ \ \ &\ \ \ \ \ \ \ \  2\ \ \ \ \ \ \ \  &\ \ \ \ \ \ \ \  3\ \ \ \ \ \ \ \  &\ \ \ \ \ \ \ \  4\ \ \ \ \ \ \ \  &\ \ \ \ \ \ \ \  5\ \ \ \ \ \ \ \   \cr
\midrule[0.8pt]
$n=1$  & 0.268 & 0.185 & 0.149 & 0.129 & 0.114 \cr
$n=2$  & 0.279 & 0.231 & 0.207 & 0.192 & 0.181 \cr
$n=3$  & 0.290 & 0.256 & 0.238 & 0.226 & 0.218 \cr
\bottomrule[1.0pt]
\end{tabular}
\end{threeparttable}
\end{table}

\medskip

Moreover, LS preserves the mean (Proposition~\ref{prop:max}), decreases the largest component and increases the smallest component (Proposition~\ref{prop:conser}) for any vector.

\begin{proposition}\label{prop:max}
For any vector $\g\in\R^m$, $\d = \A_\sigma^{-1}\g$, 
let $j_{\max} = \arg\max_i d_i$ and $j_{\min} = \arg\min_i d_i $.
We have $\max_i d_i = d_{j_{\max}} \leq g_{j_{\max}} \leq \max_i g_i$ and  $\min_i d_i = d_{j_{\min}} \geq g_{j_{\min}} \geq \min_i g_i$.
\end{proposition}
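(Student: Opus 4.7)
The plan is to apply a discrete maximum principle argument using the row-wise structure of $\A_\sigma$. The defining relation $\A_\sigma \d = \g$ expands, for each index $i$, as
\[
g_i = (1+2\sigma)d_i - \sigma d_{i-1} - \sigma d_{i+1} = d_i + \sigma(d_i - d_{i-1}) + \sigma(d_i - d_{i+1}),
\]
where indices are taken modulo $m$ (reflecting the circulant structure). This identity is the entire engine of the proof.

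Next, I would evaluate this identity at $i = j_{\max}$. By definition of $j_{\max}$, both differences $d_{j_{\max}} - d_{j_{\max}-1}$ and $d_{j_{\max}} - d_{j_{\max}+1}$ are non-negative, so the two $\sigma$-weighted terms are non-negative (this uses $\sigma \geq 0$). Hence $g_{j_{\max}} \geq d_{j_{\max}} = \max_i d_i$. The trivial bound $g_{j_{\max}} \leq \max_i g_i$ then chains on to give the first claim. For the minimum, I would run the symmetric argument at $i = j_{\min}$: the differences $d_{j_{\min}} - d_{j_{\min}\pm 1}$ are non-positive, so $g_{j_{\min}} \leq d_{j_{\min}} = \min_i d_i$, and $g_{j_{\min}} \geq \min_i g_i$ is trivial. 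Alternatively one can derive the minimum inequality from the maximum one applied to $-\g$, since $\A_\sigma^{-1}$ is linear.

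There is no real obstacle here; the only subtlety worth flagging is ensuring the indexing at the boundary rows of $\A_\sigma$ is treated correctly. Because $\A_\sigma$ is the \emph{circulant} matrix displayed in \eqref{eq:tri-diag}, the first and last rows also have the form $(1+2\sigma)d_i - \sigma d_{i-1 \bmod m} - \sigma d_{i+1 \bmod m} = g_i$, so the three-point stencil identity holds uniformly in $i$ and the argument above applies without modification at any maximizing or minimizing index. This completes the proof in a few lines.
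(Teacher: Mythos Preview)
Your proof is correct and is essentially identical to the paper's: both expand the relation $\g=\A_\sigma\d$ at the maximizing index to get $g_{j_{\max}} = d_{j_{\max}} + \sigma(2d_{j_{\max}} - d_{j_{\max}-1} - d_{j_{\max}+1})$ (with periodic indices), observe the second term is nonnegative, and then handle the minimum symmetrically. The only cosmetic difference is that you split the Laplacian term as two separate differences, and you add an explicit remark about the circulant boundary rows.
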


\begin{proof}
Since $\g = \A_{\sigma}\d$, it holds that 
$$
g_{j_{\max}} = d_{j_{\max}} + \sigma( 2 d_{j_{\max}} - d_{j_{\max}-1} - d_{j_{\max}+1}),
$$
where periodicity of subindex are used if necessary. Since $ 2 d_{j_{\max}} - d_{j_{\max}-1} - d_{j_{\max}+1}\geq 0$, We have $\max_i d_i = d_{j_{\max}} \leq g_{j_{\max}} \leq \max_i g_i$. A similar argument can show that $\min_i d_i = d_{j_{\min}} \geq g_{j_{\min}} \geq \min_i g_i$.
\end{proof}

\begin{proposition}\label{prop:conser}
The operator $\A_\sigma^{-1}$ preserves the sum of components. For any $\g\in\R^m$ and $\d = \A_\sigma^{-1}\g$, we have $\sum_j d_j = \sum_j g_j$, or equivalently, $\1^\t \d = \1^\t \g$. 
\end{proposition}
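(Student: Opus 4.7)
The plan is to exploit the fact that $\1$ is an eigenvector of $\A_\sigma$ with eigenvalue $1$, and then transfer this to $\A_\sigma^{-1}$. Observe from the explicit form in Eq.~(\ref{eq:tri-diag}) that every row of $\A_\sigma$ contains one entry equal to $1+2\sigma$ and two entries equal to $-\sigma$ (with the remaining entries zero, accounting for the circulant wrap-around), so the row sums are all $1$. Equivalently, $\A_\sigma \1 = \1$, which expresses the fact that the discrete Laplacian $\L$ annihilates constants.

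First I would note that $\A_\sigma$ is symmetric: this is immediate from the tri-diagonal-plus-corners structure in Eq.~(\ref{eq:tri-diag}), or equivalently from $\A_\sigma = \I - \sigma \mD_-\mD_+$ together with $\mD_- = -\mD_+^\t$. Then I would compute
\begin{equation*}
\1^\t \d \;=\; \1^\t \A_\sigma^{-1} \g \;=\; \bigl(\A_\sigma^{-1}\1\bigr)^\t \g \;=\; \1^\t \g,
\end{equation*}
where the second equality uses symmetry of $\A_\sigma^{-1}$ (inherited from $\A_\sigma$) and the last equality uses $\A_\sigma \1 = \1$, which gives $\A_\sigma^{-1}\1 = \1$. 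Alternatively, one can avoid inverting altogether by writing $\1^\t \g = \1^\t \A_\sigma \d = (\A_\sigma \1)^\t \d = \1^\t \d$.

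There is essentially no obstacle here; the only thing to check carefully is that $\A_\sigma$ really is invertible so that $\d = \A_\sigma^{-1}\g$ makes sense, but this is already used throughout the paper and follows from $\A_\sigma = \I - \sigma \mD_-\mD_+$ with $-\mD_-\mD_+$ positive semidefinite, making $\A_\sigma$ symmetric positive definite with smallest eigenvalue $1$ (corresponding to the constant eigenvector). Hence $\A_\sigma^{-1}$ exists, is symmetric, and fixes $\1$, which is exactly what the proposition asserts in the form $\1^\t \d = \1^\t \g$.
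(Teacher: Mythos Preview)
Your proof is correct and essentially the same as the paper's: the paper writes $\1^\t \g = \1^\t(\I+\sigma\D^\t\D)\d = \1^\t\d$ using $\D\1=\0$, which is exactly your ``alternative'' route (and equivalent to your row-sum observation $\A_\sigma\1=\1$). The only cosmetic difference is that your primary version passes through $\A_\sigma^{-1}$ and symmetry, while the paper stays with $\A_\sigma$ directly.
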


\begin{proof}
Since $\g = \A_\sigma \d$, 
$$
\sum_i g_i = \mathbf{1}^\top \mathbf{g} = \mathbf{1}^\top (\I + \sigma \D^\t \D )\d = \mathbf{1}^\top \d = \sum_i d_i,
$$
where we used $\D\mathbf{1}  = \0$.
\end{proof}

\subsection{Reduce the optimality gap}
A direct benefit of variance reduction is that it reduces the optimality gap in SGD when constant step size is applied. We state the corresponding result in the following.

\begin{proposition}\label{Prop-Optimality-Gap}
Suppose $f$ is convex with the global minimizer $\w^*$, and $f^* = f(\w^*)$. Consider the following iteration with constant learning rate $\eta>0$
$$
\w^{k+1} = \w^k - \eta (\A_\sigma^n)^{-1} \g^k
$$ 
where $\g^k$ is the sampled gradient in the $k$-th iteration at $\w^k$ satisfying $\E[\g^k] = \nabla f(\w^k)$. Denote $G_{\A_\sigma^n}: = \lim_{K\to\infty}\frac{1}{K}\sum_{k=0}^{K-1}\|\g^k\|^2_{(\A_\sigma^n)^{-1}}$ and $\overline{\w}^K := \sum_{k=0}^{K-1} \w^k /K$ the ergodic average of iterates. Then the optimality gap is
$$
\lim_{K\to \infty} \E[f(\overline{\w}^K)] - f^* \leq  \frac{\eta G_{\A_\sigma^n}}{2}.
$$
\end{proposition}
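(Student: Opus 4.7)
The plan is to mimic the classical constant-step-size SGD optimality-gap argument, but with the Euclidean norm replaced by the norm induced by $\A_\sigma^n$, so that the preconditioner $(\A_\sigma^n)^{-1}$ in the update is absorbed cleanly. Concretely, I would track the Lyapunov quantity $V^k := \|\w^k - \w^*\|_{\A_\sigma^n}^2$, which is well-defined because $\A_\sigma^n$ is symmetric positive definite.

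First, I would expand $V^{k+1}$ directly from the update rule. Using $\A_\sigma^n (\A_\sigma^n)^{-1} = \I$, the cross term collapses to the ordinary inner product $\langle \w^k - \w^*, \g^k\rangle$, while the quadratic term becomes exactly $\eta^2\|\g^k\|_{(\A_\sigma^n)^{-1}}^2$. That is,
\begin{equation*}
V^{k+1} = V^k - 2\eta\langle \w^k - \w^*,\g^k\rangle + \eta^2 \|\g^k\|_{(\A_\sigma^n)^{-1}}^2.
\end{equation*}
Taking conditional expectation on the history up to step $k$ and using $\E[\g^k\mid\w^k] = \nabla f(\w^k)$, together with convexity of $f$ in the form $\langle \w^k - \w^*,\nabla f(\w^k)\rangle \geq f(\w^k) - f^*$, yields
\begin{equation*}
\E[V^{k+1}] \;\leq\; \E[V^k] \;-\; 2\eta\bigl(\E[f(\w^k)] - f^*\bigr) \;+\; \eta^2\,\E\bigl[\|\g^k\|_{(\A_\sigma^n)^{-1}}^2\bigr].
\end{equation*}

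Next I would telescope this inequality from $k=0$ to $K-1$, giving
\begin{equation*}
2\eta\sum_{k=0}^{K-1}\bigl(\E[f(\w^k)] - f^*\bigr) \;\leq\; V^0 \;+\; \eta^2 \sum_{k=0}^{K-1}\E\bigl[\|\g^k\|_{(\A_\sigma^n)^{-1}}^2\bigr],
\end{equation*}
divide through by $2\eta K$, and invoke Jensen's inequality with convexity of $f$ to replace the average of $f(\w^k)$ by $f(\overline{\w}^K)$. Finally, I would pass to the limit $K\to\infty$: the boundary term $V^0/(2\eta K)$ vanishes, and the average on the right converges by definition to $G_{\A_\sigma^n}$, which yields the stated bound $\eta G_{\A_\sigma^n}/2$.

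The only subtle points, not real obstacles, are (i) justifying that the conditional unbiasedness assumption $\E[\g^k]=\nabla f(\w^k)$ can be promoted to $\E[\langle \w^k - \w^*,\g^k\rangle] = \E[\langle \w^k - \w^*,\nabla f(\w^k)\rangle]$ by the tower property, and (ii) interchanging $\lim$ and $\E$ when passing to the limit in the ergodic average, for which it suffices that the limit defining $G_{\A_\sigma^n}$ exists (as assumed in the statement) and that $\E[f(\overline{\w}^K)]$ converges, which follows from the telescoped inequality itself. Aside from those bookkeeping remarks, the argument is a direct instance of the standard SGD optimality-gap proof carried out in the $\A_\sigma^n$-weighted geometry.
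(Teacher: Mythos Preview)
Your proposal is correct and follows essentially the same route as the paper: expand $\|\w^{k+1}-\w^*\|_{\A_\sigma^n}^2$, use unbiasedness of $\g^k$ and convexity of $f$ to bound the cross term, telescope, apply Jensen to pass to $\overline{\w}^K$, and let $K\to\infty$. Your write-up is in fact a bit more careful than the paper's (you make the conditional-expectation step and the tower property explicit, whereas the paper silently drops and reinserts expectations on the $\|\g^k\|_{(\A_\sigma^n)^{-1}}^2$ term), but the argument is the same.
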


\begin{proof}
Since $f$ is convex, we have
\begin{equation}\label{eq:3}
    \langle \nabla f(\w^k), \w^{k} - \w^* \rangle\geq f(\w^k) - f^*.
\end{equation}
Furthermore, 
\begin{align*}
& \; \E[\|\w^{k+1} -\w^*\|_{\A_\sigma^n}^2] = \E[\|\w^{k} -  \eta (\A_\sigma^n)^{-1}\g^k -\w^* \|_{\A_\sigma^n}^2] \\
 = & \; \E[\|\w^{k} -\w^* \|^2_{\A_\sigma^n}] - 2\eta \E[\langle \g^k, \w^k - \w^* \rangle]+ \eta^2\E[\|(\A_\sigma^n)^{-1}\g^t\|_{\A_\sigma^n}^2] \\
\leq & \; \E[\|\w^{k} -\w^* \|_{\A_\sigma^n}^2] - 2\eta \E[\langle \nabla f(\w^k), \w^k - \w^* \rangle]+ \eta^2 \|\g^k\|_{(\A_\sigma^n)^{-1}}^2 \\
\leq & \; \E[\|\w^{k} -\w^* \|_{\A_\sigma^n}^2] - 2\eta(\E[f(\w^k)]-f^*) + \eta^2 \|\g^k\|_{(\A_\sigma^n)^{-1}}^2, 
\end{align*}
where the last inequality is due to (\ref{eq:3}). We rearrange the terms and arrive at
\begin{align*}
\E[ f(\w^k)]-f^*\leq & \, \frac{1}{2\eta}(\E[\|\w^{k} -\w^* \|_{\A_\sigma^n}^2]  - \E[\|\w^{k+1} -\w^*\|_{\A_\sigma^n}^2])  + \frac{\eta  \|\g^k\|_{(\A_\sigma^n)^{-1}}^2 }{2}. 
\end{align*}
Summing over $k$ from $0$ to $K-1$ and averaging and using the convexity of $f$, we have
\begin{align*}
 \E[f(\overline{\w}^K)] - f^*\leq & \; \frac{\sum_{k=0}^{K-1} \E[f(\w^k)]}{K} - f^* \leq \frac{1}{2\eta K} \E[\|\w^{0} -\w^* \|_{\A_\sigma^n}^2] + \frac{\sum_{k=0}^{K-1}\|\g^k\|_{(\A_\sigma^n)^{-1}}^2 }{2K} \eta .
\end{align*}
Taking the limit as $K\to \infty$ above establishes the result.
\end{proof}

\begin{remark}
Since $G_{\A_\sigma^n}$ is smaller than the corresponding value without LS. It shows that the optimality gap is reduced when LS is used with a constant step size. In practice, this is also true for the stage-wise step size since it is a constant in each stage of the training phase.
\end{remark}

\subsubsection{Optimization for quadratic function}
In this part, we empirically show the advantages of the LS(S)GD and its generalized schemes for the convex optimization problems. Consider searching the minima $\vx^*$ of the quadratic function $f(\vx)$ defined in Eq.~(\ref{Convex-Test-1}).
\begin{equation}
\label{Convex-Test-1}
f(x_1, x_2, \cdots, x_{100}) = \sum_{i=1}^{50}x_{2i-1}^2 + \sum_{i=1}^{50} \frac{x_{2i}^2}{10^2}.
\end{equation}

To simulate SGD, we add Gaussian noise to the gradient vector, i.e., at any given point $\vx$, we have
$$
\tilde{\nabla}_\epsilon f(\vx) := \nabla f(\vx) + \epsilon \mathcal{N}(\mathbf{0}, \I),
$$
where the scalar $\epsilon$ controls the noise level, $\mathcal{N}(\mathbf{0}, \I)$ is the Gaussian noise vector with zero mean and unit variance in each coordinate. The corresponding numerical schemes can be formulated as
\begin{equation}
\label{Numerical-Scheme}
\vx^{k+1} = \vx^k - \eta_k (\A_\sigma^n)^{-1} \tilde{\nabla}_\epsilon f(\vx^k),
\end{equation}
where $\sigma$ is the smoothing parameter selected to be $10.0$ to remove the intense noise. We take diminishing step sizes with initial values $0.1$ for SGD/smoothed SGD; $0.9$ and $1.8$ for GD/smoothed GD, respectively. Without noise, the smoothing allows us to take larger step sizes, rounding to the first digit, $0.9$ and $1.8$ are the largest suitable step size for GD and smoothed version here. We study both constant learning rate and exponentially decaying learning rate, i.e., after every 1000 iteration the learning rate is divided by 10. We apply different schemes that corresponding to $n=0, 1, 2$ in Eq.~(\ref{Numerical-Scheme}) to the problem (Eq.~(\ref{Convex-Test-1})), with the initial point $\vx^0=(1, 1, \cdots, 1)$.
\medskip 

Figure.~\ref{fig:Optimality-Gap1} shows the iteration v.s. optimality gap when the constant learning rate is used. In the noise free case, all three schemes converge linearly.
When there is noise, our smoothed gradient helps to reduce the optimality gap and converges faster after a few iterations.

\begin{figure}[ht]
\centering
\begin{tabular}{cc}
\includegraphics[scale=0.45]{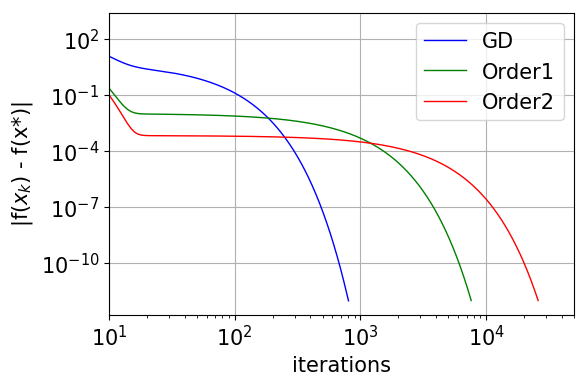} &
\includegraphics[scale=0.45]{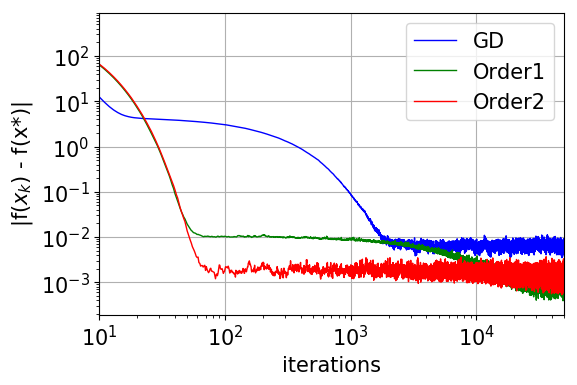} \\
(a) $\epsilon = 0$ &(b) $\epsilon = 0.05$\\
\includegraphics[scale=0.45]{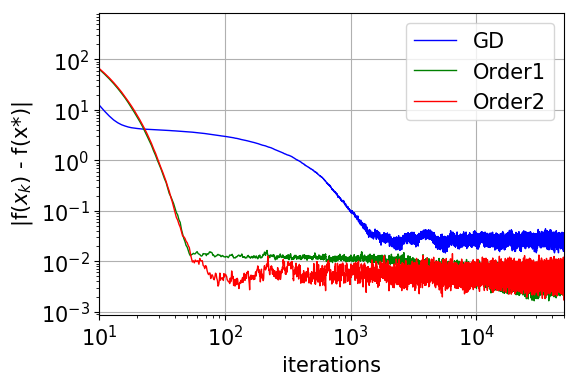} &
\includegraphics[scale=0.45]{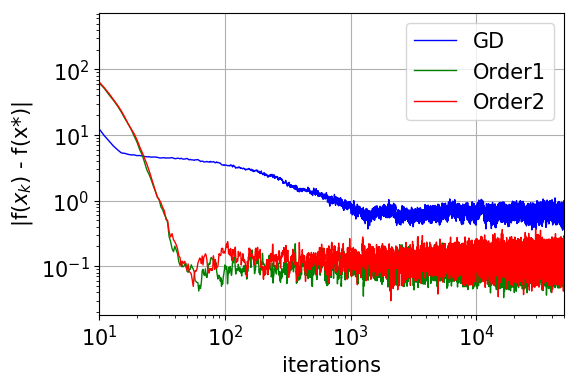}  \\
(c) $\epsilon = 0.1$ &(d) $\epsilon = 0.5$\\
\end{tabular}
\caption{Iterations v.s. optimality gap for GD and smoothed GD with order 1 and order 2 smoothing for the problem in Eq.(\ref{Convex-Test-1}). Constant step size is used.
}
\label{fig:Optimality-Gap1}
\end{figure}

The exponentially decaying learning rate helps our smoothed SGD to reach a point with a smaller optimality gap, and the higher order smoothing further reduces the optimality gap, as shown in Fig.~\ref{fig:Optimality-Gap3}. This is due to the noise removal properties of the smoothing operators.


\begin{figure}[ht]
\centering
\begin{tabular}{cc}
\includegraphics[scale=0.45]{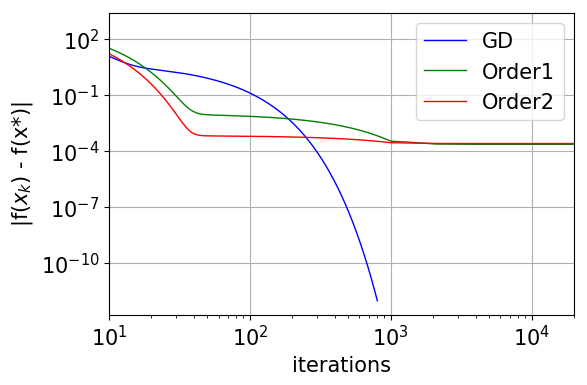} &
\includegraphics[scale=0.45]{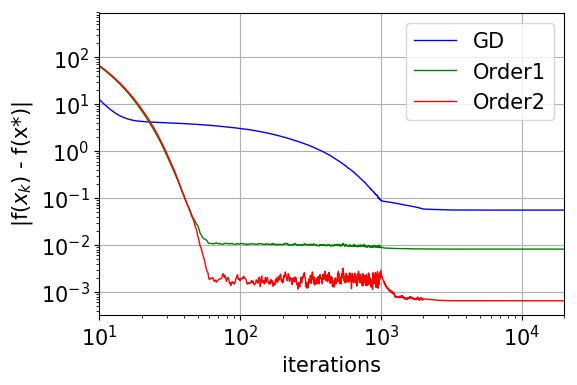} \\
(a) $\epsilon = 0$ &(b) $\epsilon = 0.05$\\
\includegraphics[scale=0.45]{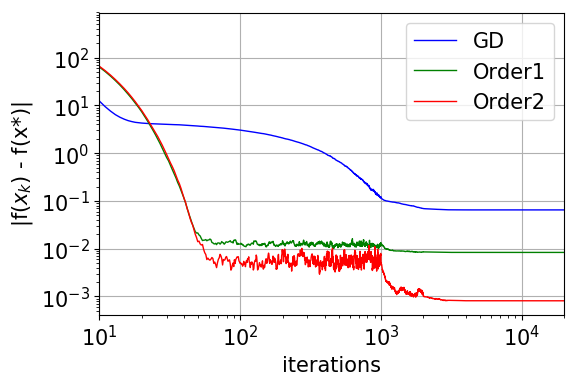} &
\includegraphics[scale=0.45]{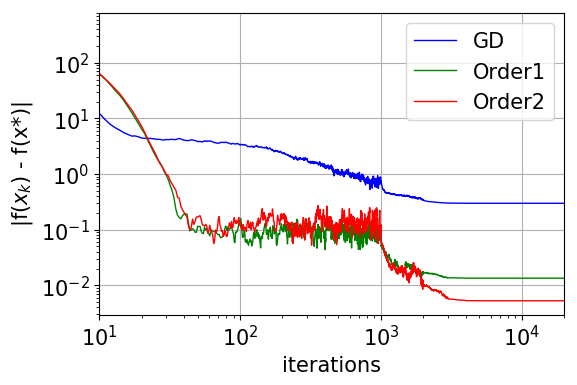}  \\
(c) $\epsilon = 0.1$ &(d) $\epsilon = 0.5$\\
\end{tabular}
\caption{Iterations v.s. optimality gap for GD and smoothed GD with order 1 and 2 smoothing for the problem in Eq.(\ref{Convex-Test-1}). Exponentially decaying step size is utilized here.
}
\label{fig:Optimality-Gap3}
\end{figure}

\subsubsection{Find the center of multiple points}
Consider searching the center of a given set of 5K random points $\{\mathbf{x}_i\in \mathbb{R}^{50}\}_{i=1}^{5000}$. \footnote{We thank professor Adam Oberman for suggesting this problem to us.} This problem can be formulate as the following finite-sum optimization
\begin{equation}
\label{Find-Center}
\min_\vx F(\vx) := \frac{1}{N}\sum_{i=1}^N f_i(\vx) = \frac{1}{N}\sum_{i=1}^N \|\vx_i - \vx\|^2.
\end{equation}
We solve this optimization problem by running either SGD or LSSGD for 20K iterations starting from the same random initial point with batch size 20. The initial step size is set to be 1.0 and 1.2, respectively, for SGD and LSSGD,  and decays 1.1 times after every 10 iterations. 
As the learning rate decays, the variance of the stochastic gradient decays \cite{Welling:2011}, thus we decay $\sigma$ 10 times after every 1K iterations. Figure~\ref{fig:SGD} (a) plots a 2D cross section of the trajectories of SGD and LSSGD, and it shows that the trajectory of SGD is more noisy than that of LSSGD. Figure~\ref{fig:SGD} (b) plots the iteration v.s. loss for both SGD and LSSGD averaged over 3 independent runs. LSSGD converges faster than SGD and has a smaller optimality gap than LSSGD. This numerical result verifies our theoretical results on the optimality gap (Proposition~\ref{Prop-Optimality-Gap}).

\begin{figure}[!h]
\centering
\begin{tabular}{cc}
\includegraphics[width=0.45\columnwidth]{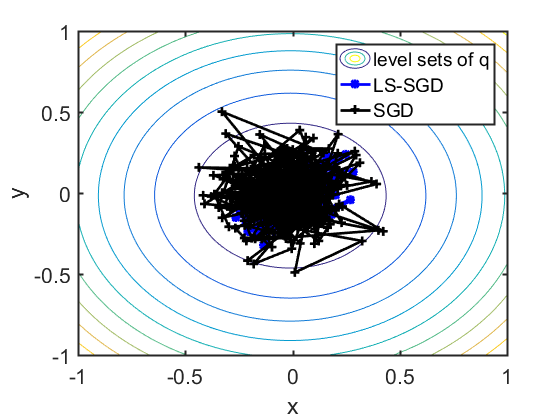}&
\includegraphics[width=0.45\columnwidth]{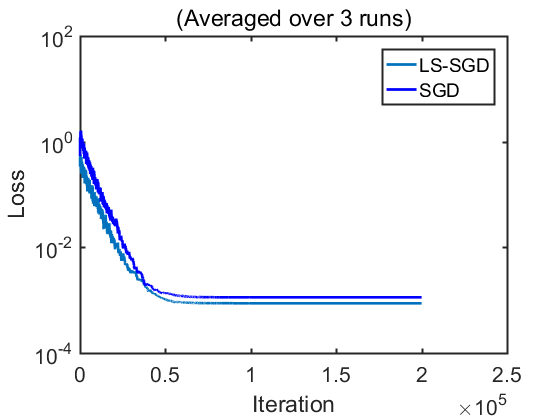}\\
\end{tabular}
\caption{Left: a cross section of the trajectories of SGD and LSSGD. Right: Iteration v.s. Loss for SGD and LS-SGD.}
\label{fig:SGD}
\end{figure}

\subsubsection{Multi-class Logistic regression}
Consider applying the proposed optimization sch--emes to train the multi-class Logistic regression model. We run 200 epochs of SGD and different order smoothing algorithms to maximize the likelihood of multi-class Logistic regression with batch size 100. 
And we apply the exponentially decaying learning rate with initial value $0.5$ and decay 10 times after every 50 epochs. We train the model with only 10 $\%$ randomly selected MNIST training data and test the trained model on the entire testing images. We further compare with SVRG under the same setting. Figure.~\ref{fig:softmax} shows the histograms of generalization accuracy of the model trained by SGD (a); SVRG (b); LS-SGD (order 1) (c); LS-SGD (oder 2) (d). It is seen that SVRG somewhat improves the generalization with higher averaged accuracy. However, the first and the second order LSSGD type algorithms lift the averaged generalization accuracy by more than $1 \%$ and reduce tnt of Electrical Engineering and Computer Sciences
University ofhe variance of the generalization accuracy over 100 independent trials remarkably. 


\subsection{Iteration v.s. loss}
In this part, we show the evolution of the loss in training the multi-class Logistic regression model by SGD, SVRG, LSGD with first and second order smoothing, respectively. As illustrated in Fig.~\ref{fig:Iter-Loss}.
At each iteration, among 100 independent experiments, SGD has the largest variance, SGD with first order smoothed gradient significantly reduces the variance of loss among different experiments. The second order smoothing can further reduce the variance. The variance of loss in each iteration among 100 experiments is minimized when SVRG is used to train the multi-class Logistic model. However, the generalization performance of the model trained by SVRG is not as good as the ones trained by LS-SGD, or higher order smoothed gradient descent (Fig.~\ref{fig:softmax} (b)).

\begin{figure}[ht]
\begin{center}
\begin{tabular}{cc}
\includegraphics[scale=0.45]{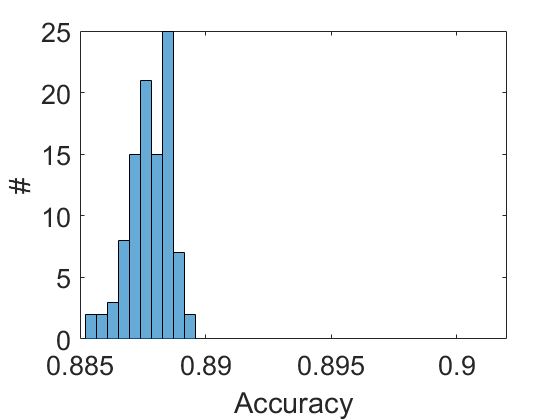} &
\includegraphics[scale=0.45]{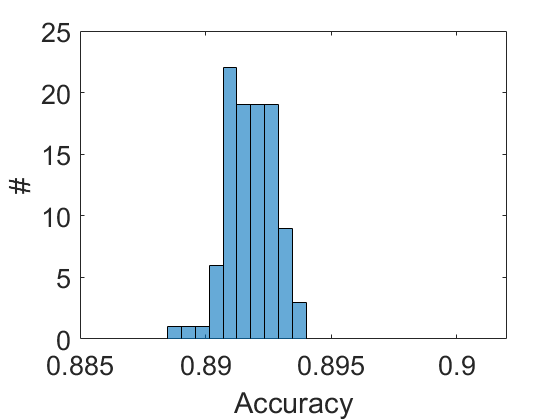} \\
(a) SGD &(b) SVRG\\
\includegraphics[scale=0.45]{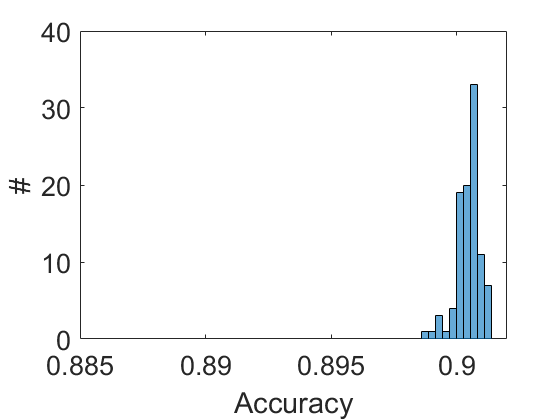} &
\includegraphics[scale=0.45]{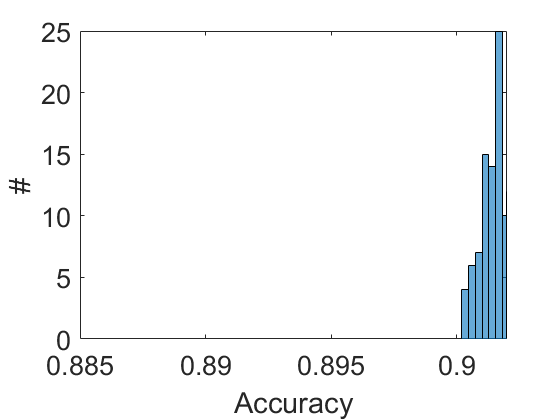}  \\
(c) LS-GD: Order 1 &(d) LS-GD: Order 2 \\
\end{tabular}
\caption{Histogram of testing accuracy over 100 independent experiments of the multi-class Logistic regression model trained on randomly selected $10\%$ MNIST data by different algorithms.}
\label{fig:softmax}
\end{center}
\end{figure}

\begin{figure}[ht]
\centering
\begin{tabular}{cc}
\includegraphics[scale=0.45]{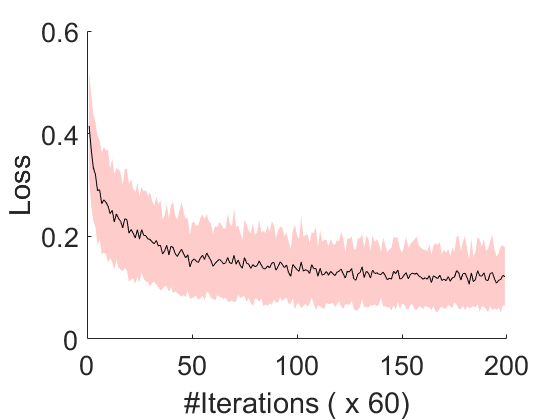} &
\includegraphics[scale=0.45]{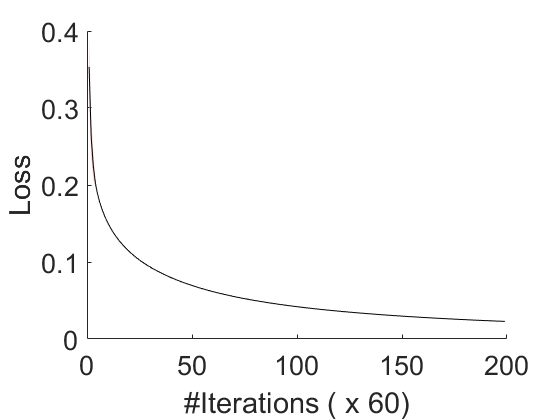} \\
(a) SGD &(b) SVRG \\
\includegraphics[scale=0.45]{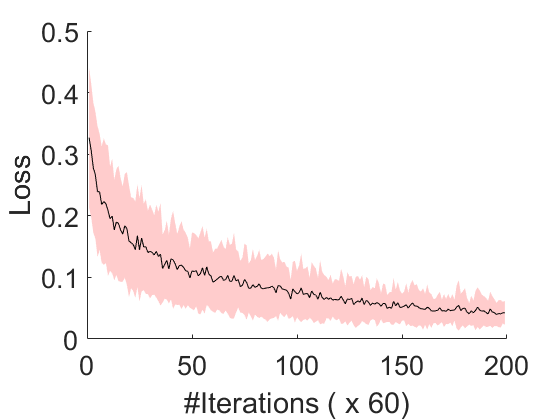} &
\includegraphics[scale=0.45]{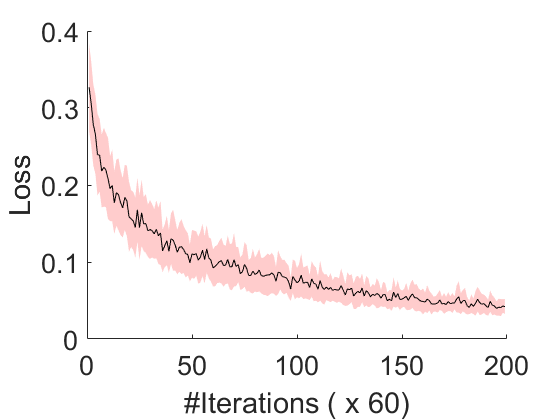}  \\
(c) LS-GD: Order 1 &(d) LS-GD: Order 2\\
\end{tabular}
\caption{Iterations v.s. loss for SGD, SVRG, and  LS-SGD with order 1 and order 2 gradient smoothing for training the multi-class Logistic regression model.
}
\label{fig:Iter-Loss}
\end{figure}

\subsection{Variance reduction in stochastic gradient}
We verify the efficiency of variance reduction numerically in this part. We simplify the problem by applying the multi-class Logistic regression only to the digits 1 and 2 of the MNIST training data. In order to compute the variance of the (LS)-stochastic gradients, we first compute descent path of (LS)-GD by applying the full batch (LS)-GD with learning rate $0.5$ starting from the same random initialization. We record the full batch (LS)-gradient on each point along the descent path. Then we compute the (LS)-stochastic gradients on each points along the path by using different batch sizes and smoothing parameters $\sigma$. In computing (LS)-stochastic gradients we run 100 independent experiments. Then we compute the variance of the (LS)-stochastic gradient among these 100 experiments and regarding the full batch (LS)-gradient as the mean on each point along the full batch (LS)-GD descent path. For each pair of batch size and $\sigma$, we report the maximum variance over all the coordinates of the gradient and all the points along the descent path. We list the variance results in Table~\ref{Table:Variance} (note the case $\sigma=0$ corresponds to the SGD). These results show that compared to the SGD, LSGD with $\sigma=3$ can reduce the maximum variance  $\sim {\bf 100}$ times for different batch sizes. It is worth noting that the high order smoothing reduces more variance than the lower order smoothing, this might due to the fact that the noise of SGD is not Gaussian.

\begin{table*}[h]
\centering
\fontsize{10.0}{10}\selectfont
\begin{threeparttable}
\caption{The maximum variance of the stochastic gradient generated by LS-SGD with different $\sigma$ and batch size. $\sigma=0$ recovers the SGD.}
\begin{tabular}{cccccc}
\toprule
Batch Size     &\ \ \ \ \ \ \ \ \  2\ \ \ \ \ \ \ \   &\ \ \ \ \ \ \ \ 5\ \ \ \ \ \ \ \   &\ \ \ \ \ \ \ \ 10\ \ \ \ \ \ \ \  &\ \ \ \ \ \ \ \ 20\ \ \ \ \ \ \ \  &\ \ \ \ \ \ \ \ 50\ \ \ \ \ \ \ \  \cr
\midrule
$\sigma = 0$   & 1.50E-1  &5.49E-2  &2.37E-2 &1.01E-2 &4.40E-3 \cr
$\sigma = 1$   & 3.40E-3  &1.30E-3  &5.45E-4 &2.32E-4 &9.02E-5 \cr
$\sigma = 2$   & 2.00E-3  &7.17E-4  &3.46E-4 &1.57E-4 &5.46E-5 \cr
$\sigma = 3$   & 1.40E-3  &4.98E-4  &2.56E-4 &1.17E-4 &3.97E-5 \cr
\bottomrule
\end{tabular}
\label{Table:Variance}
\end{threeparttable}
\end{table*}

\section{Numerical Results on Avoid Local Minima and Speed Up Convergence}\label{section-numerics1}
We first show that LS-GD can bypass sharp minima and reach the global minima.
We consider the following function, in which we `drill' narrow holes on a smooth convex function,
\begin{eqnarray}\label{Function-3D-Test}
f(x, y, z) = -4e^{-\left((x-\pi)^2+(y-\pi)^2+(z-\pi)^2\right)}-
\\ \nonumber
4\sum_{i}\cos(x)\cos(y)e^{-\beta\left((x-r\sin(\frac{i}{2})-\pi)^2+(y-r\cos(\frac{i}{2})-\pi)^2\right)},
\end{eqnarray}
where the summation is taken over the index set 
$\{i\in \mathbb{N}| \; 0 \leq i < 4\pi\}$, $r$ and $\beta$ 
are the parameters that determine the location and narrowness of the local minima and are set to $1$ and $\frac{1}{\sqrt{500}}$, respectively. We 
do GD and LS-GD
starting from a random point in the neighborhoods of the narrow minima, i.e., $(x_0, y_0, z_0)\in \{\bigcup_i U_\delta(r\sin(\frac{i}{2})+\pi, r\cos(\frac{i}{2})+\pi, \pi)| \; 0\leq i<4\pi, i\in \mathbb{N}\}$, where $U_\delta(P)$ is a neighborhood of the point $P$ with radius $\delta$. Our experiments (Fig. \ref{Demo1}) show that, if $\delta \leq 0.2$ GD will converge to a narrow local minima, while LS-GD convergences to the wider global minima.
\medskip

\begin{figure}[ht]
\centering
\begin{tabular}{cc}
\includegraphics[scale=0.45]{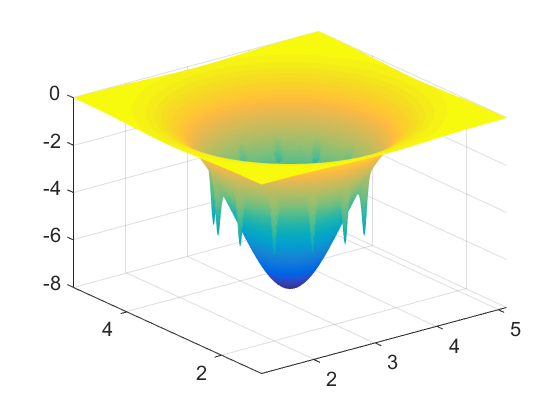} &
\includegraphics[scale=0.45]{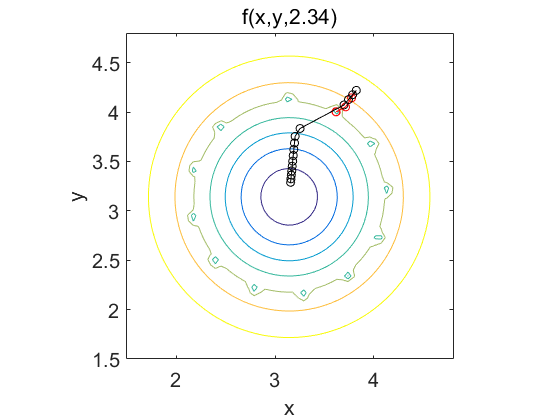}  \\
(a) & (b)\\
\end{tabular}
\caption{Demo of GD and LS-GD. Panel (a) depicts the slice of the function (Eq.(\ref{Function-3D-Test})) with $z=2.34$; panel (b) shows the paths of GD (red) and LS-GD (black). We take the step size to be 0.02 for both GD and LS-GD. $\sigma=1.0$ is utilized for LS-GD.}
\label{Demo1}
\end{figure}

Next, let us compare LSGD with some popular optimization methods on the benchmark 2D-Rosenbrock function which is a non-convex function. The global minimum is inside a long, narrow, parabolic shaped flag valley. To find the valley is trivial. To converge to the global minimum, however, is difficult. The function is defined by
\begin{equation}
\label{Rosenbrock}
f(x, y) = (a-x)^2 + b(y-x^2)^2,
\end{equation}
it has a global minimum at $(x, y) = (a, a^2)$, and we set $a=1$ and $b=100$ in the following experiments.
\medskip

Starting from the initial point with coordinate $(-3, -4)$, we run 2K iterations of the following optimizers including GD, GD with Nesterov momentum \cite{Nesterov:1983}, Adam \cite{Adam:2014}, RMSProp \cite{tieleman2012lecture}, and LSGD ($\sigma=0.5$). The step size used for all these methods is $3e-3$. Figure~\ref{fig:Loss-Rosenbrock} plots the iteration v.s. objective value, and it shows that GD together with Nesterov momentum converges faster than all the other algorithms. The second best algorithm is LSGD. Meanwhile, Nesterov momentum can be used to speed up LSGD, and we will show this numerically in training DNNs in section~\ref{section-deeplearning}.
\medskip

\begin{figure}[ht]
\centering
\begin{tabular}{c}
\includegraphics[scale=0.38]{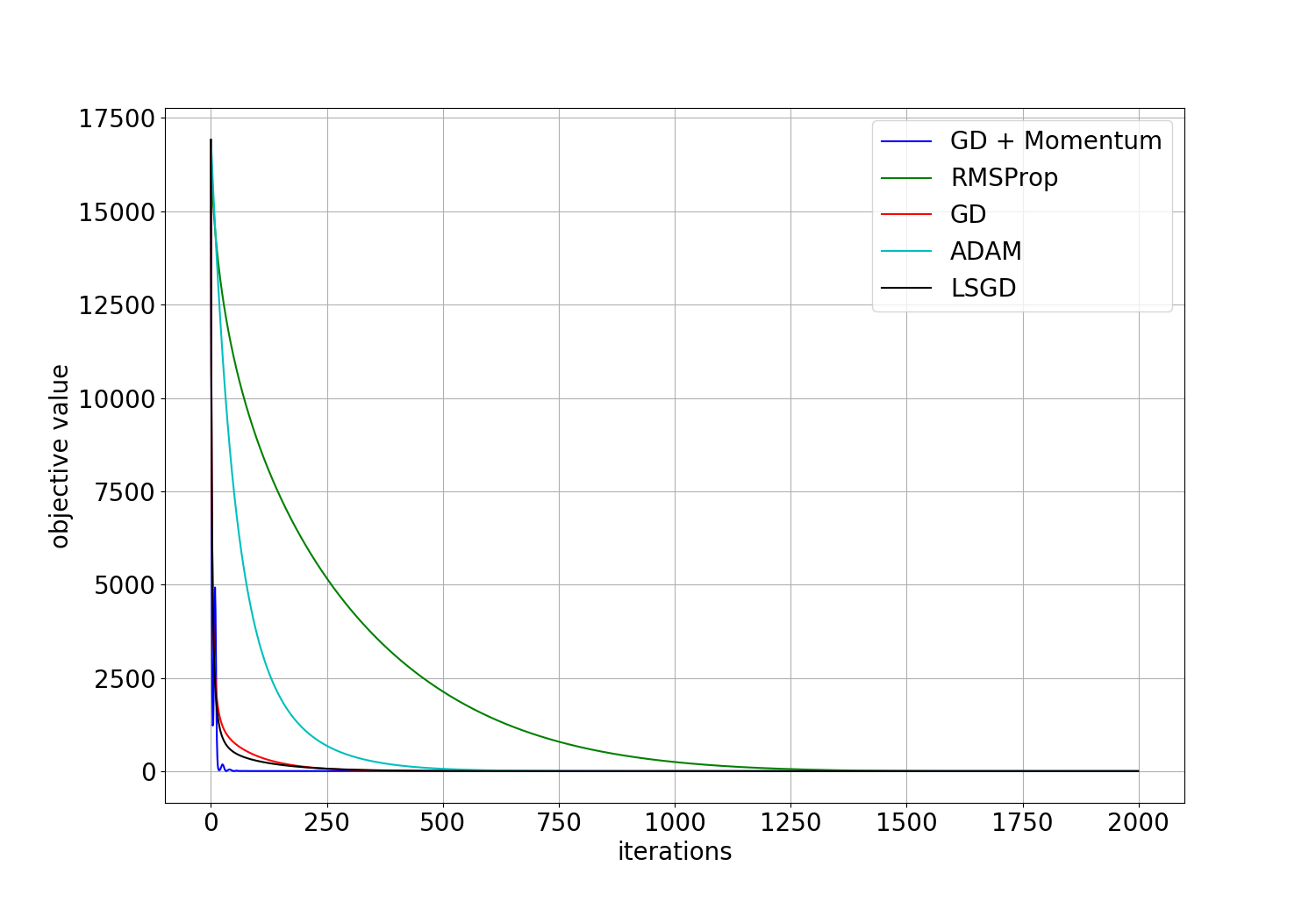}  \\
\end{tabular}
\caption{Iteration v.s. loss of different optimization algorithms in optimize the Rosenbrock function.}
\label{fig:Loss-Rosenbrock}
\end{figure}

Figure~\ref{fig:Snapshot} depicts some snapshots (The 300th, 600th, 900th, and 1200th iteration, respectively) of the trajectories of different optimization algorithms. These figures show that even though GD with momentum converge faster but it suffers from some overshoots, and they detour to converge to the local minima. All the other algorithms go along a direct path to the minima, and LSGD converges fastest.
\medskip

\begin{figure}[ht]
\centering
\begin{tabular}{cc}
\includegraphics[scale=0.18]{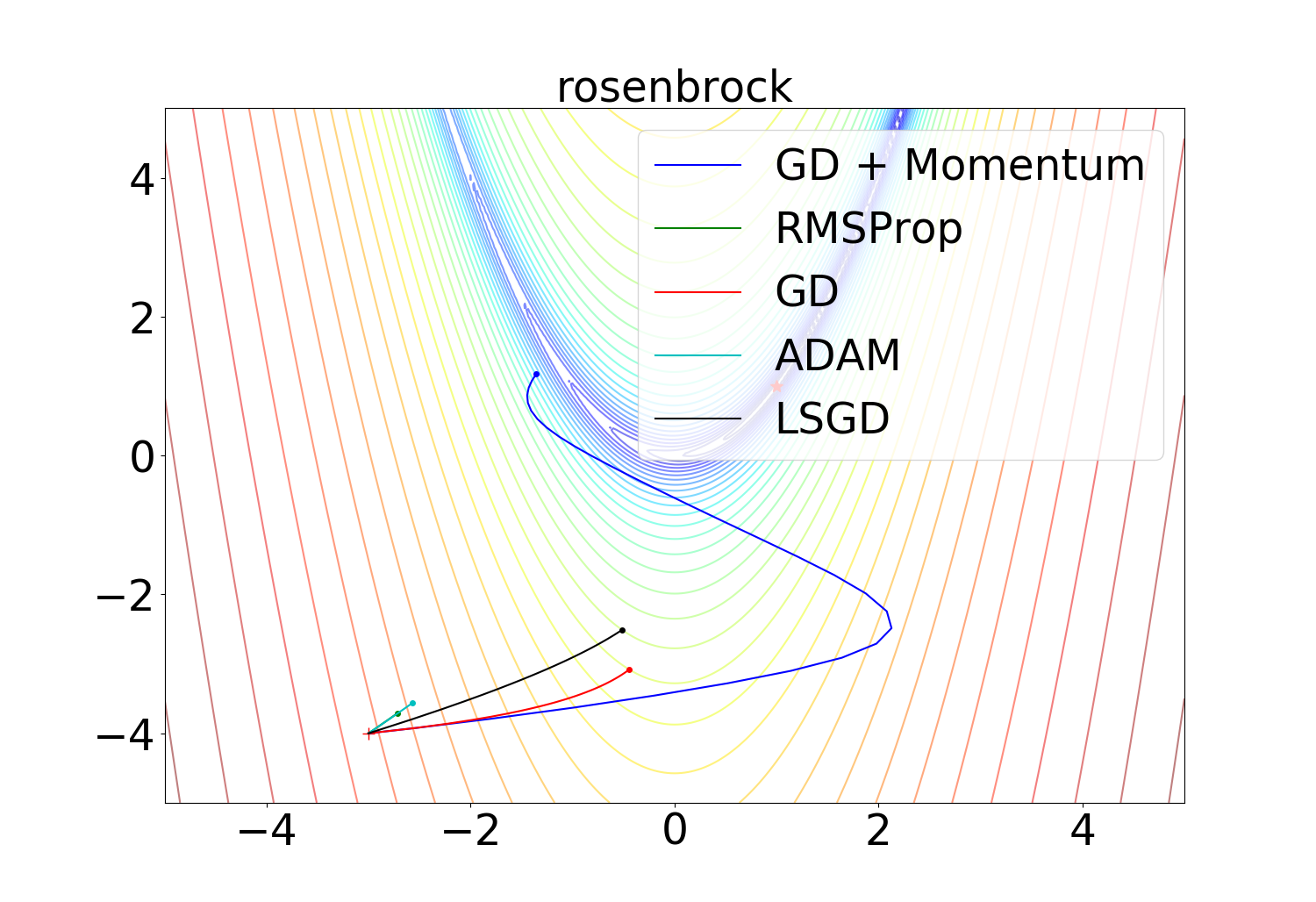} &
\includegraphics[scale=0.18]{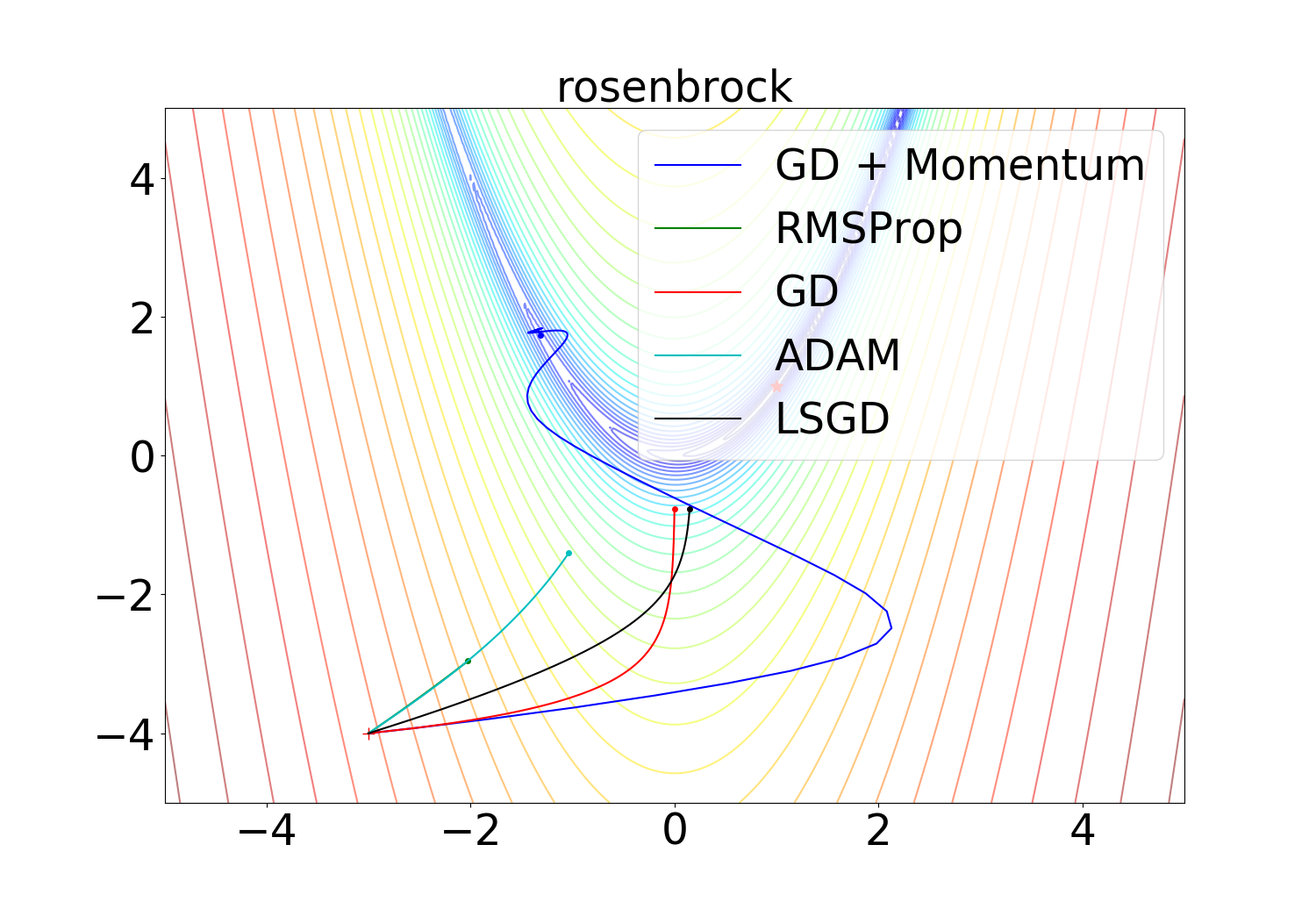} \\
Iteration: 300 & (b) Iteration: 600 \\
\includegraphics[scale=0.18]{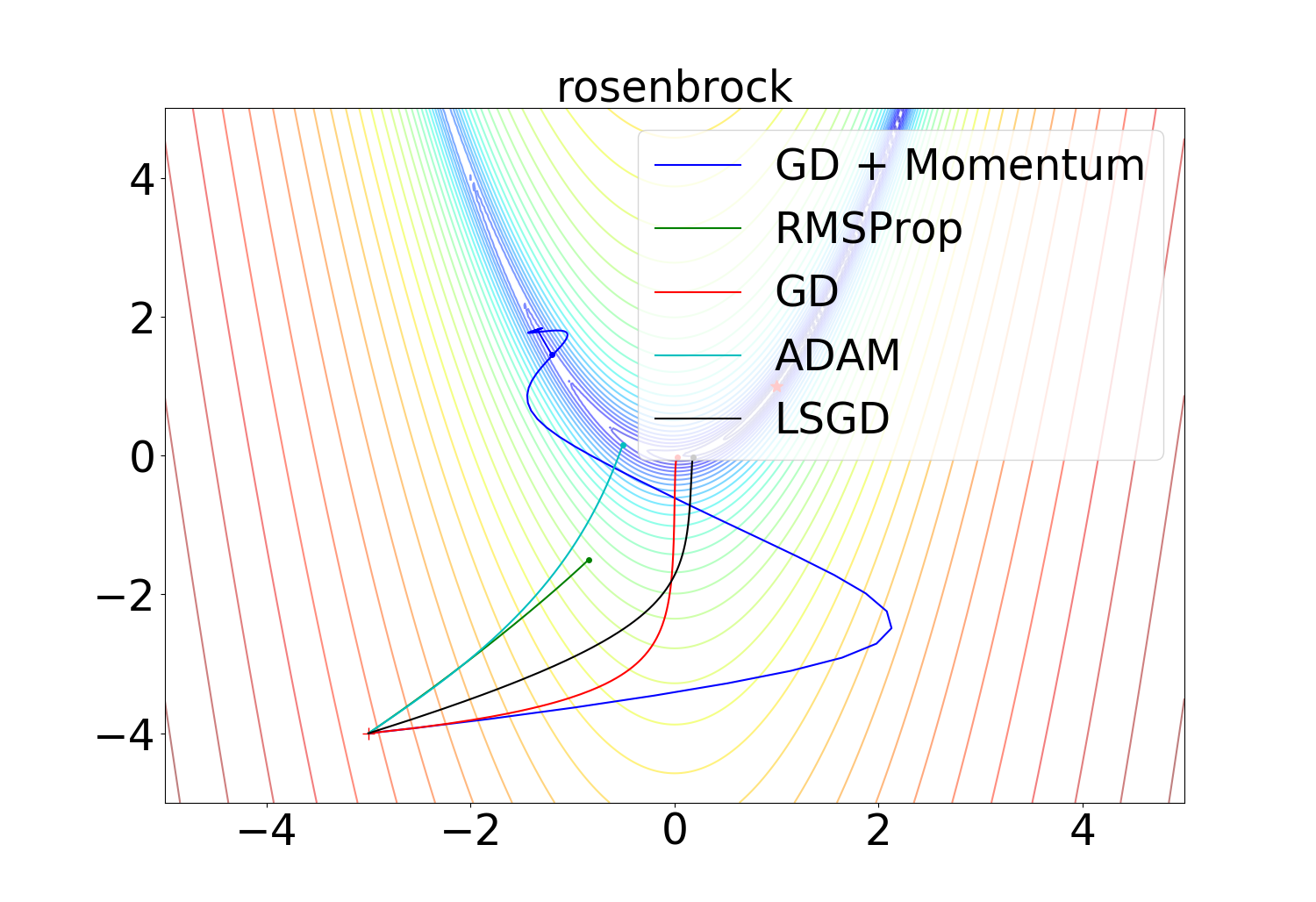} &
\includegraphics[scale=0.18]{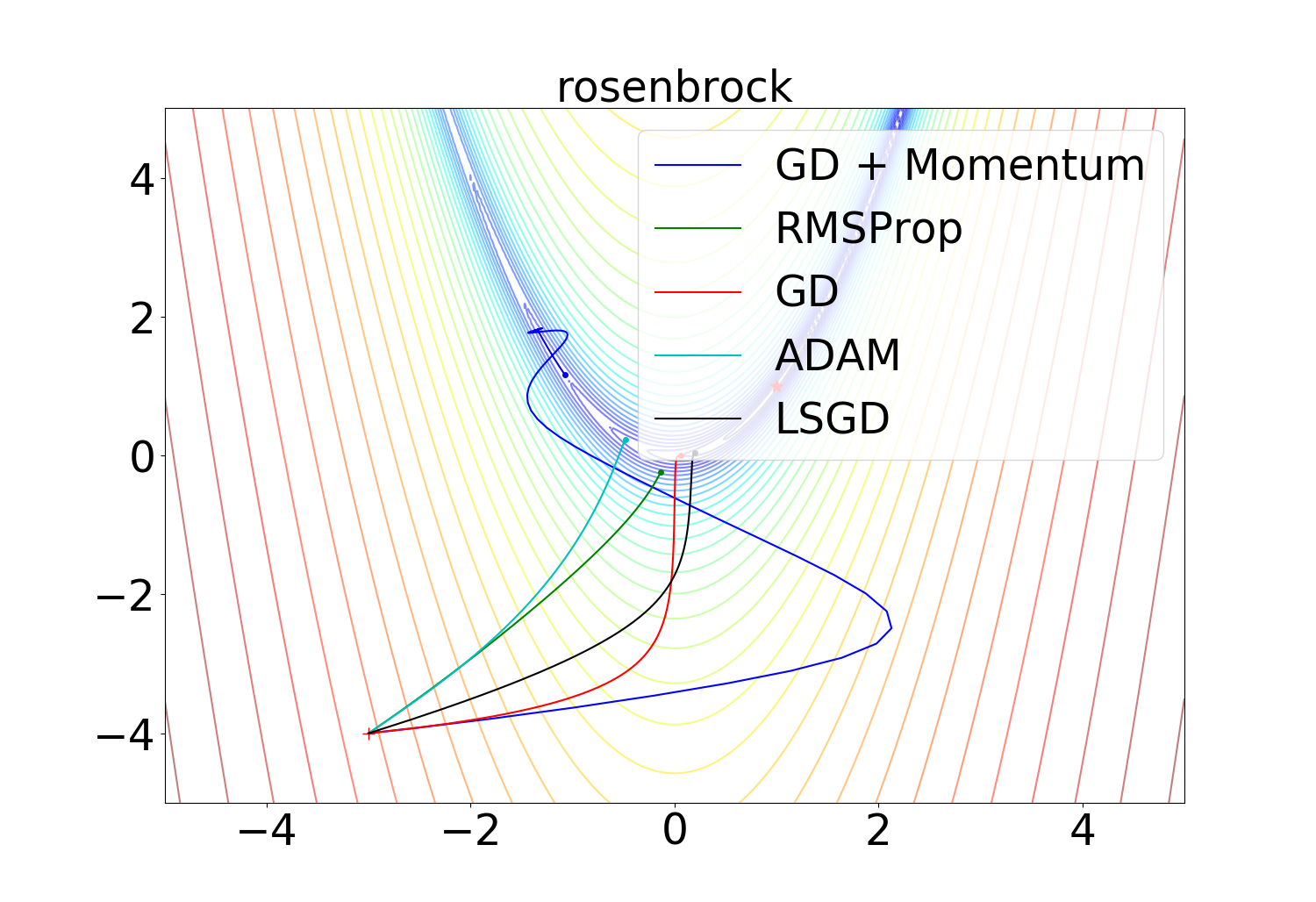}  \\
Iteration: 900 &(d) Iteration: 1200\\
\end{tabular}
\caption{Some snapshots of trajectories of different optimization algorithms on the Rosenbrock function.}
\label{fig:Snapshot}
\end{figure}

Furthermore, we will show that LSGD can be further accelerated by using Nesterov momentum. As show in Fig.~\ref{fig:Rosenbrock-HD}, the LSGD together with Nesterov momentum converges much faster than GD with momentum, especially for high dimensional Rosenbrock function.

\begin{figure}[!h]
\centering
\begin{tabular}{ccc}
\includegraphics[scale=0.18]{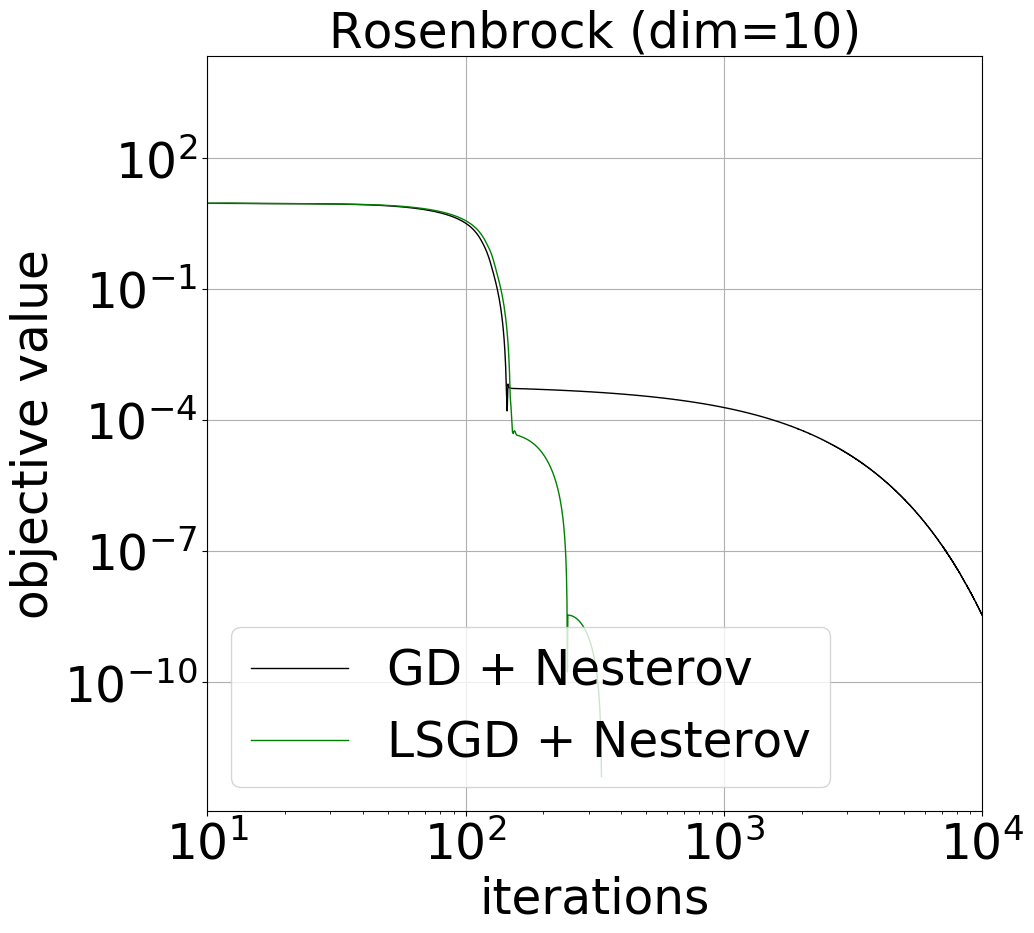} &
\includegraphics[scale=0.18]{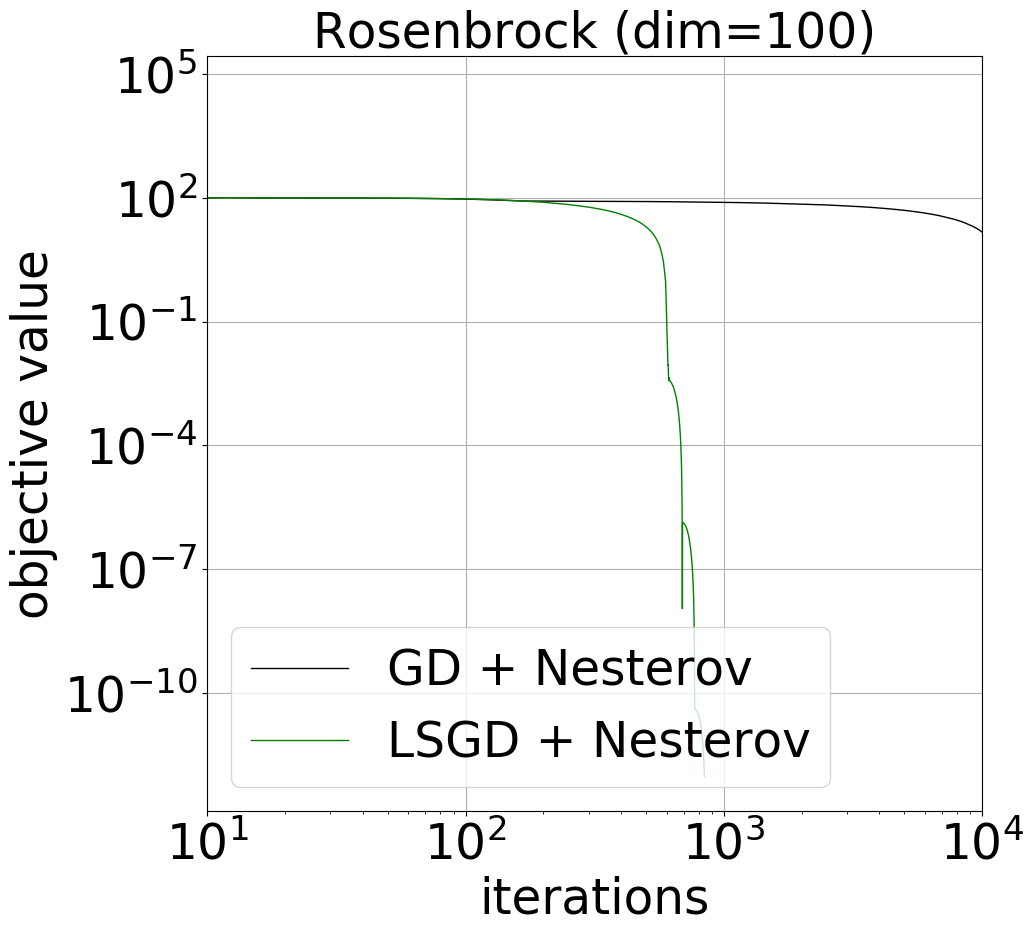} &
\includegraphics[scale=0.18]{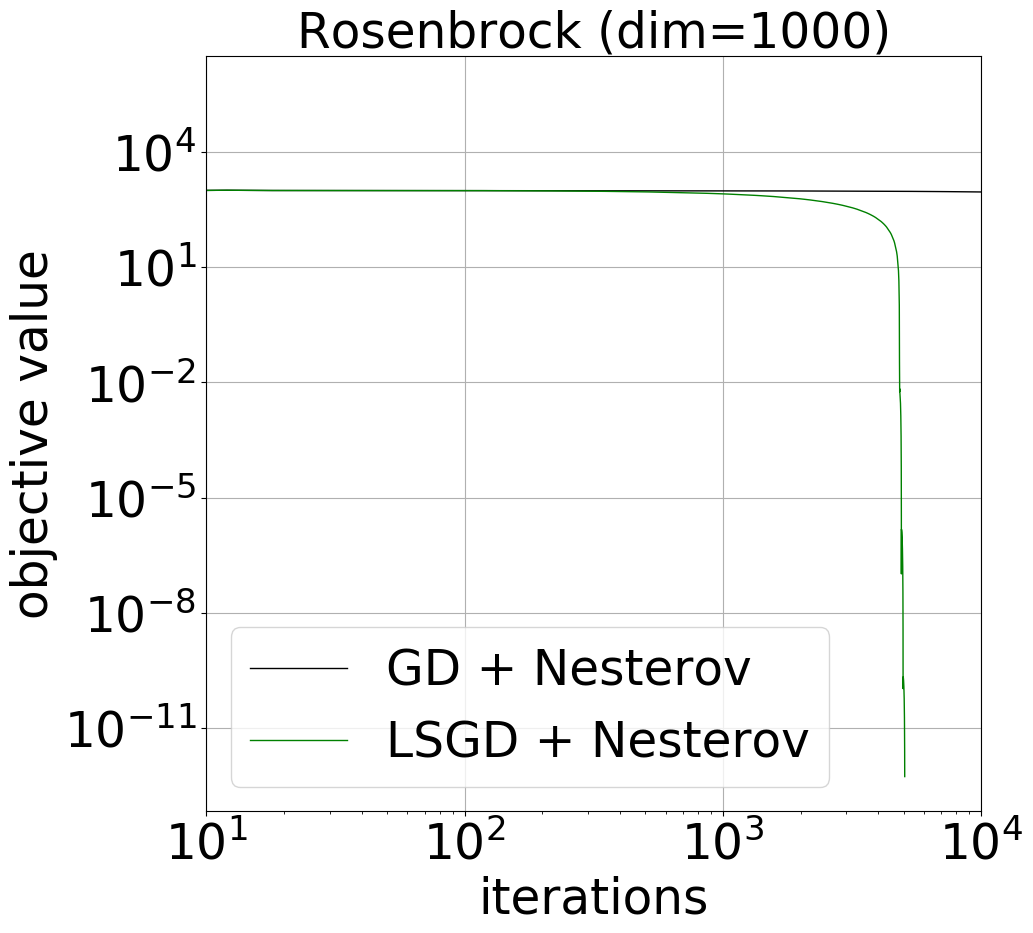}  \\
\end{tabular}
\caption{Iteration v.s. objective value for GD with Nesterov momentum and LSGD with Nesterov momentum.}
\label{fig:Rosenbrock-HD}
\end{figure}

\section{Application to Deep Learning}\label{section-deeplearning}
\subsection{Train neural nets with small batch size}
Many advanced artificial intelligence tasks make high demands on training neural nets with extremely small batch sizes. The milestone technique for this is group normalization \cite{Wu:2018GroupNormalization}. In this section, we show that LS-SGD successfully trains DNN with extremely small batch size. We consider LeNet-5 \cite{LeCun:1998} for MNIST classification. Our network architecture is as follows
\begin{eqnarray*}
\mbox{LeNet-5:}\  {\rm input}_{28 \times 28} \rightarrow {\rm conv}_{20, 5, 2} \rightarrow
{\rm conv}_{50, 5, 2} \rightarrow {\rm fc}_{512} \rightarrow {\rm softmax}.
\end{eqnarray*}
The notation ${\rm conv}_{c, k, m}$ denotes a 2D convolutional layer with $c$ output channels, each of which is the sum of a channel-wise convolution operation on the input using a learnable kernel of size $k \times k$, it further adds ReLU nonlinearity and max pooling with stride size $m$. ${\rm fc}_{512}$ is an affine transformation that transforms the input to a vector of dimension 512. Finally, the tensors are activated by a multi-class Logistic function. The MNIST data is first passed to the layer ${\rm input}_{28 \times 28}$, and further processed by this hierarchical structure. We run $100$ epochs of both SGD and LS-SGD with initial learning rate $0.01$ and divide by $5$ after 50 epochs, and use a weight decay of $0.0001$ and momentum of $0.9$. Figure. \ref{fig:SmallBatchSize}(a) plots the generalization accuracy on the test set with the LeNet5 trained with different batch sizes. For each batch size, LS-SGD with $\sigma=1.0$ keeps the testing accuracy more than $99.4\%$, SGD reduce the accuracy to $97\%$ when batch size 4 is used. The classification become just a random guess, when the model is trained by SGD with batch size 2. Small batch size leads to large noise in the gradient, which may make the noisy gradient not along the decent direction; however, Lapacian smoothing rescues this by decreasing the noise.


\begin{figure}[ht]
\centering
\begin{tabular}{cc}
\includegraphics[scale=0.26]{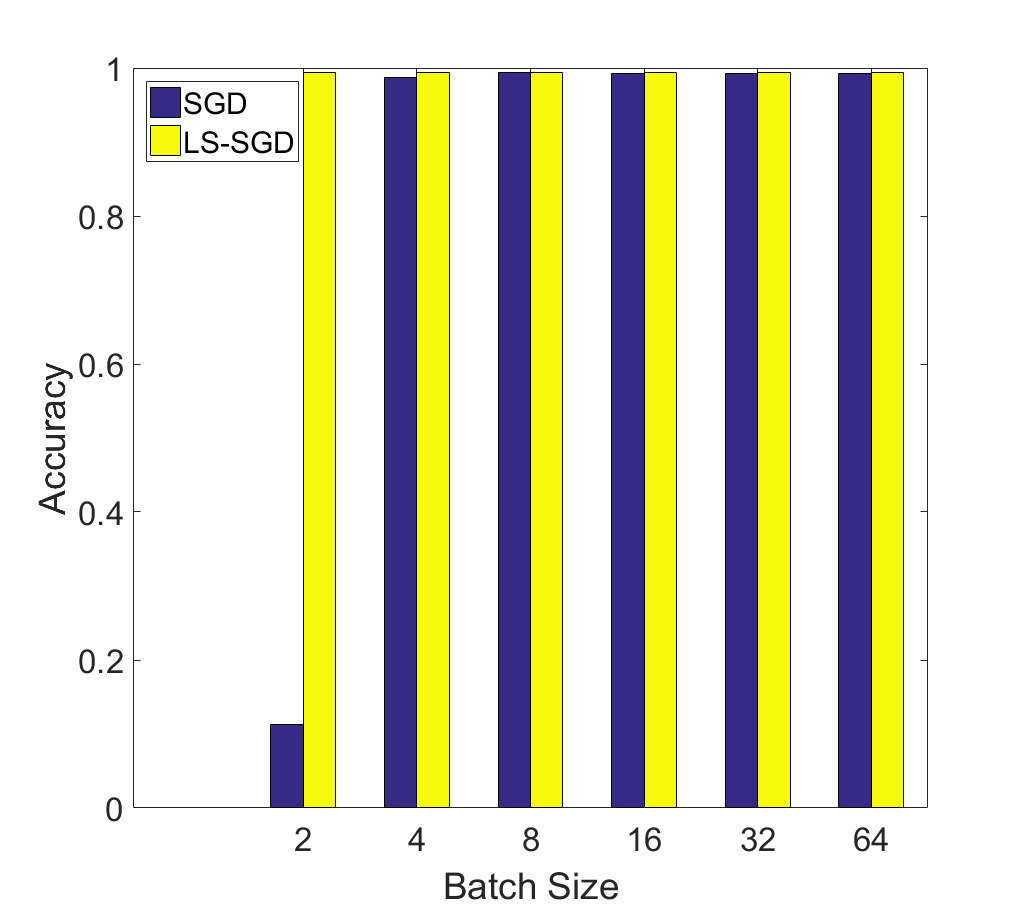}&
\includegraphics[scale=0.26]{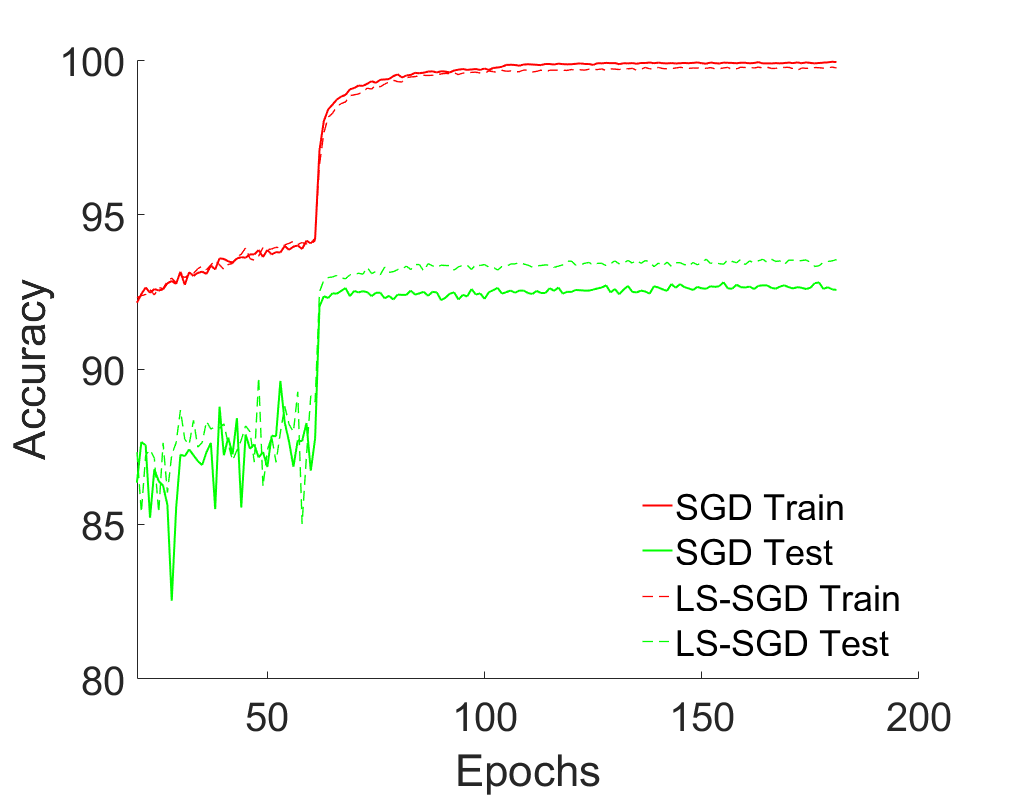}\\
\end{tabular}
\caption{(a). Testing accuracy of LeNet5 trained by SGD/LS-SGD on MNIST with various batch sizes. (b). The evolution of the pre-activated ResNet56's training and generalization accuracy by SGD and LS-SGD. (Start from the 20-th epoch.)}\label{fig:SmallBatchSize}
\end{figure}

\subsection{Improve generalization accuracy}
The skip connections in ResNet smooth the landscape of the loss function of the classical CNN \cite{ResNet,Li:2017}. This means that ResNet has fewer sharp minima. On Cifar10 \cite{Cifar:2009}, we compare the performance of LS-SGD and SGD on ResNet with the pre-activated ResNet56 as an illustration. We take the same training strategy as that used in \cite{ResNet}, except that we run $200$ epochs with the learning rate decaying by a factor of $5$ after every 40 epochs. For ResNet, instead of applying LS-SGD for all epochs, we only use LS-SGD in the first 40 epochs, and the remaining training is carried out by SGD (this will save the extra computational cost due to LS, and we noticed that the performance is similar to the case when LS is used for the whole training process). The parameter $\sigma$ is set to $1.0$. Figure~\ref{fig:SmallBatchSize}(b) depicts one path of the training and generalization accuracy of the neural nets trained by SGD and LS-SGD, respectively. It is seen that, even though the training accuracy obtained by SGD is higher than that by LS-SGD, the generalization is however inferior to that of LS-SGD. We conjecture that this is due to the fact that SGD gets trapped into some sharp but deeper minimum, which fits better than a flat minimum but generalizes worse. We carry out $25$ replicas of this experiments, the histograms of the corresponding accuracy are shown in Fig.~\ref{ResNet56-Training}.


\begin{figure}[!h]
\centering
\begin{tabular}{cc}
SGD & LS-SGD with $\sigma=1.0$  \\
\includegraphics[scale=0.45]{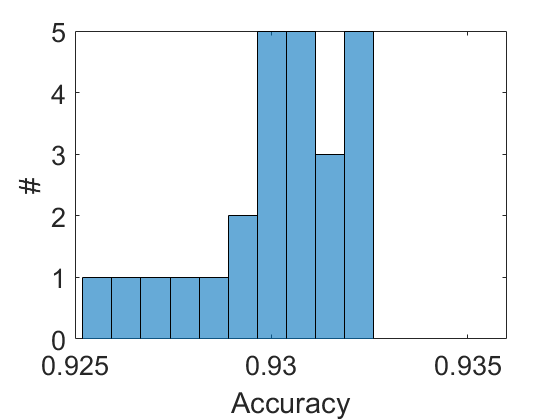} &
\includegraphics[scale=0.45]{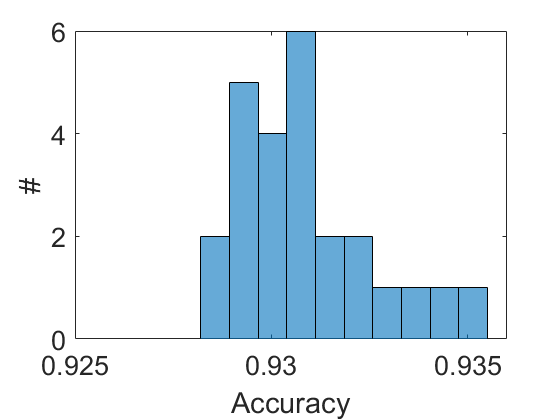} \\
\end{tabular}
\caption{The histogram of the generalization accuracy of the pre-activated ResNet56 on Cifar10 trained by SGD and LS-SGD over 25 independent experiments.}
\label{ResNet56-Training}
\end{figure}

\subsection{Training Wassersterin GAN}
Generative Adversarial Networks (GANs) \cite{GAN} are notoriously delicate and unstable to train \cite{arjovsky2017towards}. In \cite{Arjovsky:2017}, Wasserstein-GANs (WGANs) are introduced to combat the instability in the training GANs. In addition to being more robust in training parameters and network architecture, WGANs provide a reliable estimate of the Earth Mover (EM) metric which correlates well with the quality of the generated samples. Nonetheless, WGANs training becomes unstable with a large learning rate or when used with a momentum based optimizer \cite{Arjovsky:2017}. In this section, we demonstrate that the gradient smoothing technique in this paper alleviates the instability in the training, and improves the quality of generated samples. Since WGANs with weight clipping are typically trained with RMSProp \cite{tieleman2012lecture}, we propose replacing the gradient $g$ by a smoothed version $g_\sigma = \A_\sigma^{-1}g$, and also update the running averages using $g_\sigma$ instead of $g$. We name this algorithm LS-RMSProp. 
\medskip

To accentuate the instability in training and demonstrate the effects of gradient smoothing, we deliberately use a large learning rate for training the generator. We compare the regular RMSProp with the 
LS-RMSProp. The learning rate for the critic is kept small and trained approximately to convergence so that the critic loss is still an effective approximation to the Wasserstein distance. To control the number of unknowns in the experiment and make a meaningful comparison using the critic loss, we use the classical RMSProp for the critic, and only apply LS-RMSProp to the generator. 
\medskip

\begin{figure}[!h]
\centering
\begin{tabular}{c}
RMSProp \\
\includegraphics[width=90mm]{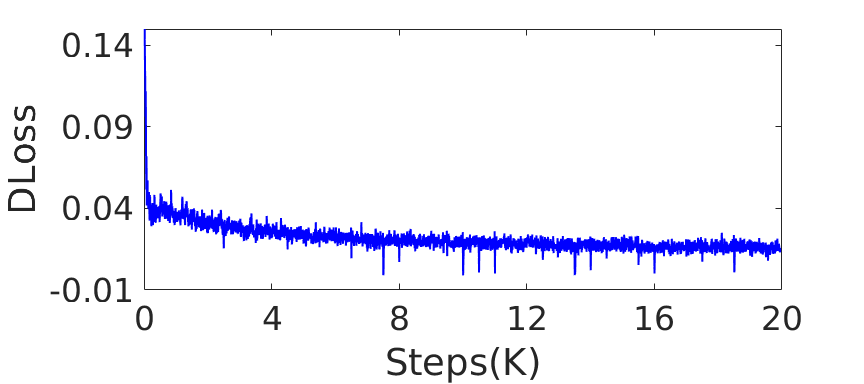} \\
LS-RMSProp, $\sigma=3.0$  \\
\includegraphics[width=90mm]{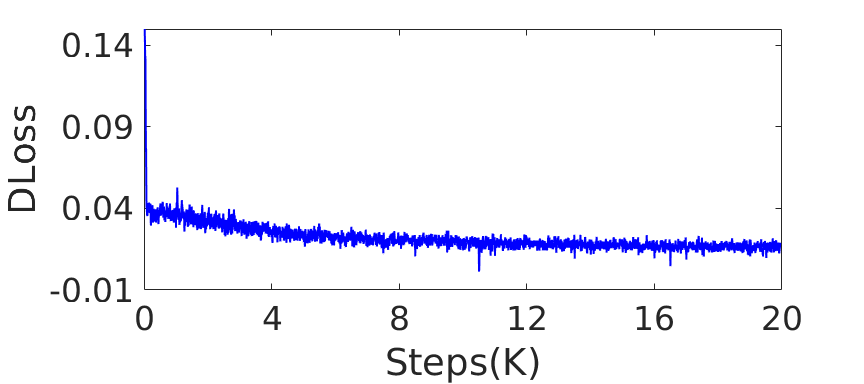} \\
\end{tabular}
\caption{Critic loss with learning rate $lrD = 0.0001$, $lrG = 0.005$ for RMSProp (top) and LS-RMSProp (bottom), trained for 20K iterations. We apply a mean filter of window size 13 for better visualization. The loss from LS-RMSProp is visibly less noisy.}
\label{fig:wgan-largelr-loss}
\end{figure}

\begin{figure}[!h]
\centering
\begin{tabular}{cc}
RMSProp & LS-RMSProp, $\sigma=3.0$ \\
\includegraphics[width=50mm]{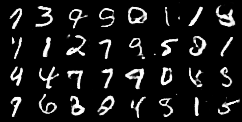} &
\includegraphics[width=50mm]{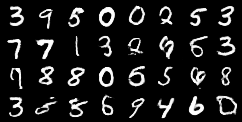} \\
(a) & (b)\\
RMSProp & LS-RMSProp, $\sigma=3.0$\\
\includegraphics[width=50mm]{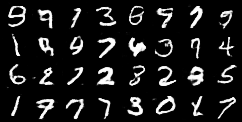} &
\includegraphics[width=50mm]{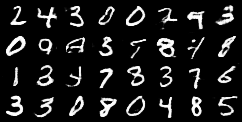}\\
(c) & (d)\\
\end{tabular}
\caption{Samples from WGANs trained with RMSProp (a, c) and LS-RMSProp (b, d).  The learning rate is set  to $lrD = 0.0001$, $lrG = 0.005$ for both RMSProp and LS-RMSProp in (a) and (b). And $lrD = 0.0001$, $lrG = 0.0001$ are used for  both RMSProp and LS-RMSProp in (c) and (d). The critic is trained for 5 iterations per step of the generator, and 200 iterations per every 500 steps of the generator.}
\label{fig:wgan-largelr-samples}
\end{figure}

We train the WGANs on the MNIST dataset using the DCGAN \cite{radford2015unsupervised} for both the critic and generator. In Figure \ref{fig:wgan-largelr-loss} (top),  we observe the loss for RMSProp trained with a large learning rate has multiple sharp spikes, indicating instability in the training process. The samples generated are also lower in quality, containing noisy spots as shown in Figure \ref{fig:wgan-largelr-samples} (a). In contrast, the curve of training loss for LS-RMSProp is smoother and exhibits fewer spikes. The generated samples as shown in Fig.~\ref{fig:wgan-largelr-samples} (b) are also of better quality and visibly less noisy. The generated characters shown in Fig.~\ref{fig:wgan-largelr-samples} (b) are more realistic compared to the ones shown in Fig.~\ref{fig:wgan-largelr-samples} (a).
The effects are less pronounced with a small learning rate, but still result in a modest improvement in sample quality as shown in Figure \ref{fig:wgan-largelr-samples} (c) and (d).
We also apply LS-RMSProp for training the critic, but do not see a clear improvement in the quality. This may be because the critic is already trained near optimality during each iteration, and does not benefit much from gradient smoothing.

\begin{figure}[!h]
\centering
\begin{tabular}{cc}
RMSProp & LS-RMSProp, $sigma=1.0$  \\
\includegraphics[width=72mm]{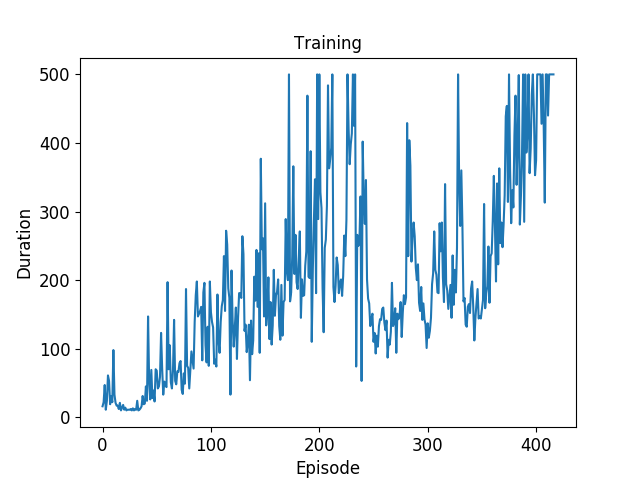} &
\includegraphics[width=72mm]{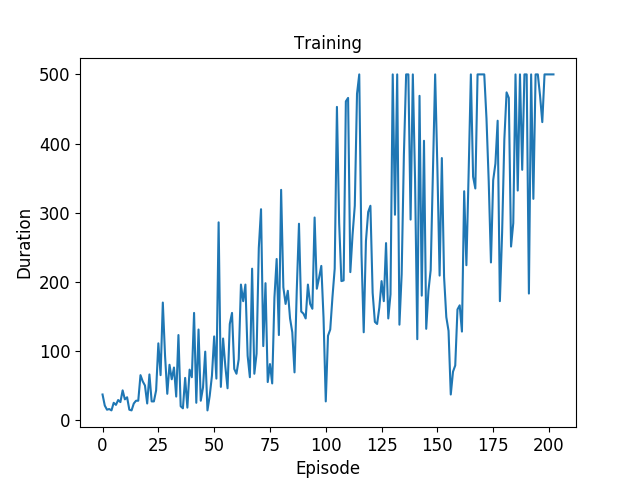} \\
\end{tabular}
\caption{Durations of the cartpole game in the training procedure. Left and right are training procedure by RMSProp and LS-RMSProp with $\sigma=1.0$, respectively.}
\label{fig:DRL}
\end{figure}

\subsection{Deep reinforcement learning}
Deep reinforcement learning (DRL) has been applied to playing games including Cartpole \cite{OpenAIGym:2016}, Atari \cite{Atari:2013}, Go \cite{AlphaGo:2016,Mnih:2015}. DNN plays a vital role in approximating the Q-function or policy function. We apply the Laplacian smoothed gradient to train the policy function to play the Cartpole game. We apply the standard procedure to train the policy function by using the policy gradient \cite{OpenAIGym:2016}. And we use the following network to approximate the policy function:
$$
{\rm input}_4 \rightarrow {\rm fc}_{20} \rightarrow {\rm relu} \rightarrow {\rm fc}_2 \rightarrow {\rm softmax}.
$$
The network is trained by RMSProp and LS-RMSProp with $\sigma=1.0$, respectively. The learning rate and other related parameters are set to be the default ones in PyTorch. The training is stopped once the average duration of 5 consecutive episodes is more than 490. In each training episode, we set the maximal steps to be 500. Left and right panels of Fig. \ref{fig:DRL} depict a training procedure by using RMSProp and LS-RMSProp, respectively. We see that Laplacian smoothed gradient takes fewer episodes to reach the stopping criterion. Moreover, we run the above experiments 5 times independently, and apply the trained model to play Cartpole. The game lasts more than 1000 steps for all the 5 models trained by LS-RMSProp, while only 3 of them lasts more than 1000 steps when the model is trained by vanilla RMSProp.

\section{Convergence Analysis}\label{section-convergence-analysis}
Note that the LS matrix $\A_\sigma^{-1}$ is positive definite and its largest and smallest eigenvalues are 1 and $\frac{1}{1+4\sigma}$, respectively. It is straightforward to show that all the convergence results for (S)GD still hold for LS(S)GD. In this section, we will show some additional convergence for LS(S)GD with a focus on LSGD, the corresponding results for LSSGD follow in a similar way.

\begin{proposition}\label{Hp-Convergence}
Consider the algorithm $\w^{k+1} = \w^k - \eta_k (\A_\sigma^n)^{-1} \nabla f(\w^k)$. Suppose $f$ is $L$-Lipschitz smooth and $0< \tilde{\eta} \leq \eta\leq \bar{\eta}< \frac{2}{L}$. Then $\lim_{t\to\infty} \|\nabla f(\w^k)\|\to 0$. Moreover, if the Hessian $\nabla^2 f$ of $f$ is continuous with $\w^*$ being the minimizer of $f$, and  $\bar{\eta}\|\nabla^2 f\|<1$,
then $\|\w^k-\w^*\|_{\A_\sigma^n}\to 0$ as $k\to \infty$, and the convergence is linear.
\end{proposition}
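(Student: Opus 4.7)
The plan is to treat LSGD as preconditioned gradient descent with preconditioner $\mM:=(\A_\sigma^n)^{-1}$. As noted just above, $\mM$ is SPD with $\lambda_{\max}(\mM)\le 1$ and $\lambda_{\min}(\mM)>0$; the crucial consequence is $\mM\preceq \mI$, hence $\mM^2\preceq\mM$ in the Loewner order.

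\textbf{Part 1 (vanishing gradient).} I would apply the $L$-smooth descent inequality to $f$ along the update $\w^{k+1}-\w^k=-\eta_k\mM\nabla f(\w^k)$ to get
\begin{equation*}
f(\w^{k+1}) \le f(\w^k) - \eta_k\|\nabla f(\w^k)\|_\mM^2 + \tfrac{L\eta_k^2}{2}\|\mM\nabla f(\w^k)\|^2 .
\end{equation*}
Bounding $\|\mM\vg\|^2=\vg^\top\mM^2\vg\le\vg^\top\mM\vg=\|\vg\|_\mM^2$ via $\mM^2\preceq\mM$, the step-size assumption $\tilde\eta\le\eta_k\le\bar\eta<2/L$ yields
\begin{equation*}
f(\w^{k+1}) \le f(\w^k) - \tilde\eta\bigl(1-\tfrac{L\bar\eta}{2}\bigr)\|\nabla f(\w^k)\|_\mM^2 .
\end{equation*}
Assuming $f$ is bounded below (implicit in the existence of a minimizer), telescoping gives $\sum_k\|\nabla f(\w^k)\|_\mM^2<\infty$, and since $\lambda_{\min}(\mM)>0$ this forces $\|\nabla f(\w^k)\|\to 0$.

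\textbf{Part 2 (local linear rate).} With $\mH:=\nabla^2 f(\w^*)$ and $\ve^k:=\w^k-\w^*$, continuity of $\nabla^2 f$ and Taylor expansion at $\w^*$ give
\begin{equation*}
\ve^{k+1}=(\mI-\eta_k\mM\mH)\ve^k+\eta_k\mM\,\vr(\ve^k),\qquad \|\vr(\ve)\|=o(\|\ve\|).
\end{equation*}
The key algebraic observation is the similarity $\mM^{-1/2}(\mI-\eta_k\mM\mH)=(\mI-\eta_k\tilde\mH)\mM^{-1/2}$ with $\tilde\mH:=\mM^{1/2}\mH\mM^{1/2}$, which, combined with $\|\cdot\|_{\A_\sigma^n}^2=\|\mM^{-1/2}\cdot\|^2$, gives the clean bound
\begin{equation*}
\|(\mI-\eta_k\mM\mH)\ve\|_{\A_\sigma^n}\le \|\mI-\eta_k\tilde\mH\|\cdot\|\ve\|_{\A_\sigma^n}.
\end{equation*}
Because $\tilde\mH$ is SPD (granted $\mH\succ\0$ at the minimizer) with eigenvalues in $(0,\|\mH\|]$, the hypothesis $\bar\eta\|\nabla^2 f\|<1$ forces $\eta_k\lambda_i(\tilde\mH)\in(0,1)$, whence $\|\mI-\eta_k\tilde\mH\|\le 1-\tilde\eta\lambda_{\min}(\tilde\mH)=:\rho<1$. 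Folding in the $o(\|\ve\|)$ remainder and the equivalence of $\|\cdot\|$ with $\|\cdot\|_{\A_\sigma^n}$, once $\w^k$ lies in a sufficiently small neighborhood of $\w^*$ one gets
\begin{equation*}
\|\w^{k+1}-\w^*\|_{\A_\sigma^n}\le(\rho+o(1))\,\|\w^k-\w^*\|_{\A_\sigma^n},
\end{equation*}
the desired linear convergence in the $\A_\sigma^n$-norm.

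\textbf{Main obstacle.} The genuinely delicate step is bridging Part~1's conclusion $\nabla f(\w^k)\to \0$ with Part~2's hypothesis that the iterates eventually enter a basin of $\w^*$. The statement is silent on this, and strictly speaking the proof also needs $\mH\succ\0$ to make $\rho<1$ nontrivial; I would make these assumptions explicit and, if a global statement is desired, supplement the argument with a coercivity/compactness step using the monotone decrease of $f(\w^k)$ from Part~1 to ensure $\w^*$ is the unique accumulation point of $\{\w^k\}$.
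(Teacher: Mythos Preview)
Your Part~1 is essentially identical to the paper's: both apply the descent lemma, use $\mM^2\preceq\mM$ (equivalently $\|(\A_\sigma^n)^{-1}\vg\|^2\le\|\vg\|^2_{(\A_\sigma^n)^{-1}}$) to absorb the quadratic term, telescope, and conclude.

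Part~2 is where the two routes diverge. The paper does \emph{not} Taylor-expand with an $o(\|\ve\|)$ remainder; instead it uses the exact integral mean value identity
\[
\nabla f(\w^k)-\nabla f(\w^*)=\Big(\int_0^1\nabla^2 f\big(\w^*+\tau(\w^k-\w^*)\big)\,d\tau\Big)(\w^k-\w^*),
\]
then performs exactly your similarity trick with $(\A_\sigma^n)^{\pm 1/2}$ to arrive at
\[
\|\w^{k+1}-\w^*\|_{\A_\sigma^n}\le\Big\|\mI-\eta_k(\A_\sigma^n)^{-1/2}\!\int_0^1\!\nabla^2 f(\cdots)\,d\tau\,(\A_\sigma^n)^{-1/2}\Big\|\,\|\w^k-\w^*\|_{\A_\sigma^n}.
\]
Because the hypothesis $\bar\eta\|\nabla^2 f\|<1$ is global and $\|(\A_\sigma^n)^{-1}\|=1$, this contraction estimate holds for \emph{every} iterate, with no remainder to control. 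Consequently, the ``main obstacle'' you flag---bridging $\nabla f(\w^k)\to\0$ to eventual entry into a basin of $\w^*$---simply does not arise in the paper's argument: the contraction is not local. Your Taylor-with-remainder approach is correct but self-inflicts the locality issue; switching to the integral form removes it cleanly.

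Your observation that strict linear convergence tacitly requires the (averaged) Hessian to be positive definite is well taken and applies equally to the paper's proof, which only verifies $\eta_k\|\nabla^2 f\|\le1$ and leaves the lower eigenvalue bound implicit.
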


\begin{proof}
By the Lipschitz continuity of $\nabla f$ and the descent lemma \cite{bertsekas1999nonlinear}, we have
\begin{align*}
    f(\w^{k+1}) & \; = f(\w^k - \eta_k(\A_\sigma^n)^{-1}\nabla f(\w^k)) \\
    & \; \leq f(\w^k) - \eta_k \langle \nabla f(\w^k), (\A_\sigma^n)^{-1}\nabla f(\w^k))\rangle + \frac{\eta^2_k L}{2} \|(\A_\sigma^n)^{-1} \nabla f(\w^k)\|^2 \\
     & \; \leq f(\w^k) - \eta_k \| \nabla f(\w^k)\|_{(\A_\sigma^n)^{-1}}^2  + \frac{\eta^2_k L}{2} \| \nabla f(\w^k)\|_{(\A_\sigma^n)^{-1}}^2 \\
     & \; \leq f(\w^k) - \tilde{\eta}\left(1-\frac{\bar{\eta} L}{2}\right) \| \nabla f(\w^k)\|_{(\A_\sigma^n)^{-1}}^2.
\end{align*}
Summing the above inequality over $k$, we have
$$
\tilde{\eta}\left(1-\frac{\bar{\eta} L}{2}\right) \sum_{k=0}^{\infty} \| \nabla f(\w^k)\|_{(\A_\sigma^n)^{-1}}^2 \leq f(\w^0) -\lim_{k\to\infty} f(\w^k)< \infty.
$$
Therefore, $ \| \nabla f(\w^k)\|_{(\A_\sigma^n)^{-1}}^2\to 0$, and thus $ \| \nabla f(\w^k)\|\to 0$.

For the second claim, we have 
\begin{align*}
&\w^{k+1} -\w^* \\
= & \; \w^k - \w^* -\eta_k (\A_\sigma^n)^{-1}(\nabla f(\w^k) - \nabla f(\w^*)) \\
= & \; \w^k - \w^* -\eta_k (\A_\sigma^n)^{-1} \left(\int_0^1 \nabla^2 f(\w^* + \tau (\w^{k+1} - \w^*))\cdot (\w^k - \w^*) \mathrm{d}\tau \right) \\
= & \; \w^k - \w^* -\eta_k (\A_\sigma^n)^{-1} \left(\int_0^1 \nabla^2 f(\w^* + \tau (\w^{k+1} - \w^*))\mathrm{d}\tau \cdot (\w^k - \w^*)\right) \\
= & \; (\A_\sigma^n)^{-\frac{1}{2}} \left(\I - \eta_k (\A_\sigma^n)^{-\frac{1}{2}} \int_0^1 \nabla^2 f(\w^* + \tau (\w^{k+1} - \w^*))\mathrm{d}\tau  (\A_\sigma^n)^{-\frac{1}{2}})\right) (\A_\sigma^n)^{\frac{1}{2}} (\w^k - \w^*)
\end{align*}

Therefore, 
$$
\|\w^{k+1} - \w^*\|_{\A_\sigma^n} \leq \left\|\I - \eta_t  (\A_\sigma^n)^{-\frac{1}{2}} \int_0^1 \nabla^2 f(\w^* + \tau (\w^{k+1} - \w^*))\mathrm{d}\tau  (\A_\sigma^n)^{-\frac{1}{2}} \right\|\|\w^k - \w^*\|_{\A_\sigma^n}.
$$
So if $\eta_k \|\nabla^2 f\| \leq \frac{1}{\|(\A_\sigma^n)^{-1}\|} = 1$, the result follows.
\end{proof}

\begin{remark}
The convergence result in Proposition~\ref{Hp-Convergence} is also call $H_\sigma^n$-convergence. This is because $\lla \vu, \A_\sigma^n \vu\rra = \|\vu\|^2 + \sigma\|\mD_+^n \vu\|^2 = \|\vu\|^2_{H_\sigma^n}$.
\end{remark}

\section{Discussion and Conclusion}\label{section-discussion-conclusion}
\subsection{Some more properties of Laplacian smoothing}
In Theorem~\ref{High-Prop-Estimate-L2}, we established a high probability estimate of the LS operator in reducing the $\ell_2$ norm of any given vector. The $\ell_1$ type of high probability estimation can be established in the same way. These estimates will be helpful to develop privacy-preserving optimization algorithms to train ML models that improve the utility of the trained models without sacrifice the privacy guarantee \cite{Wang:DPLSSGD:2019}.
\medskip

Regarding the $\ell_1$/$\ell_2$ estimates of the LS operator, we further have the following results.

\begin{customprop}{8}
Given vectors $\g$ and $\d = \A_\sigma^{-1}\g$, for any $p \in\mathbb{N}$, it holds that 
$
\|\D^p\d\|_1 \leq \|\D^p\g\|_1. 
$
The inequality is strict unless $\D^p \g$ is a constant vector. 
\end{customprop}

\begin{proof}
Observe that $\A_\sigma$ and $\mD_+$ commute; therefore, for any $p\in \mathbb{N}$,
$\A_\sigma(\mD_+^p\vd) = \mD_+^p\vg$. Thus
we have
$$
(1+2\sigma)(\D^p \d)_i = (\D^p \g)_i + \sigma (\D^p \d)_{i+1} + \sigma (\D^p\d)_{i-1}.
$$
So
$$
(1+2\sigma)|(\D^p \d)_i| \leq |(\D^p \g)_i| + \sigma |(\D^p \d)_{i+1}| + \sigma |(\D^p\d)_{i-1}|.
$$
The inequality is strict if there are sign changes among the $(\D^p \d)_{i-1}$, $(\D^p \d)_{i}$, $(\D^p \d)_{i+1}$.
Summing over $i$ and using periodicity, we have
$$
(1+2\sigma)\sum_{i=1}^m|(\D^p \d)_i| \leq \sum_{i=1}^m|(\D^p \g)_i| + 2\sigma \sum_{i=1}^m|(\D^p \d)_{i}|,
$$
and the result follows. The inequality is strict unless $\D^p \g$ is a constant vector.
\end{proof}

\begin{proposition}\label{thm:l2norm}
Given any vector $\vg\in \R^m$ and $\vd=(\A_\sigma^n)^{-1} \vg$, then
\begin{equation}\label{eqn:l2_reduction_formula}
\|\vg\|^2 = \|\vd\|^2 + 2\sigma \|\mD_+^\vn \vd\|^2 + \sigma^2 \|\mL^n \vd\|^2,
\end{equation}
the variance of $\vd$ is much less than that of $\vg$.
\end{proposition}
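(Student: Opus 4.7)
The plan is to expand $\|\vg\|^2$ directly using the definition of $\A_\sigma^n$ and then rewrite the cross term and the quadratic term using the circulant (and hence normal) structure of the periodic forward-difference matrix $\mD_+$. Recall the definition $\A_\sigma^n = \I + (-1)^n\sigma\mL^n$ given earlier, and the factorization $\mL = \mD_-\mD_+ = -\mD_+^T\mD_+$, which yields
$$
\A_\sigma^n = \I + \sigma(\mD_+^T\mD_+)^n,
$$
so that $\vg = \A_\sigma^n \vd = \vd + \sigma(\mD_+^T\mD_+)^n \vd$.

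The first step is to expand
$$
\|\vg\|^2 = \|\vd\|^2 + 2\sigma\,\langle \vd,(\mD_+^T\mD_+)^n\vd\rangle + \sigma^2\,\|(\mD_+^T\mD_+)^n\vd\|^2.
$$
For the cross term, I would observe that $\mD_+$ is a circulant matrix (because of the periodic boundary condition), hence normal, so $\mD_+$ and $\mD_+^T$ commute. Consequently $(\mD_+^T\mD_+)^n = (\mD_+^T)^n\mD_+^n$, and
$$
\langle \vd,(\mD_+^T\mD_+)^n\vd\rangle = \langle \mD_+^n\vd,\mD_+^n\vd\rangle = \|\mD_+^n\vd\|^2.
$$
For the quadratic term, I would use $\mL^n = (-1)^n(\mD_+^T\mD_+)^n$, which gives $\|(\mD_+^T\mD_+)^n\vd\|^2 = \|\mL^n\vd\|^2$. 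Plugging these two identities into the expansion yields exactly \eqref{eqn:l2_reduction_formula}.

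For the variance statement, I would invoke Proposition~\ref{prop:conser}, which shows that $\A_\sigma^{-1}$ (and therefore $(\A_\sigma^n)^{-1}$) preserves the componentwise mean, so $\vd$ and $\vg$ have the same mean $\bar{\mu}$. Then writing $\Var(\vx) = \tfrac{1}{m}\|\vx - \bar{\mu}\vone\|^2$, the identity \eqref{eqn:l2_reduction_formula} applied to $\vg - \bar{\mu}\vone$ and $\vd - \bar{\mu}\vone$ (noting that $\mD_+\vone = \0$, so the mean-shifted vectors satisfy the same relation) gives
$$
\Var(\vg) = \Var(\vd) + \tfrac{2\sigma}{m}\|\mD_+^n\vd\|^2 + \tfrac{\sigma^2}{m}\|\mL^n\vd\|^2 \geq \Var(\vd),
$$
with strict inequality unless $\vd$ lies in the kernel of $\mD_+^n$.

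The only subtle point is the commutativity step; everything else is a routine inner-product expansion. I do not expect any real obstacle, since the periodicity of $\mD_+$ makes it a circulant matrix and hence normal — the argument would fail for non-periodic (Dirichlet/Neumann) boundary conditions, where $\mD_+$ and $\mD_+^T$ need not commute, and one would only get an inequality rather than the exact identity.
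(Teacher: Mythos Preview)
Your argument is correct and essentially identical to the paper's: both expand $\|\A_\sigma^n\vd\|^2$ and use the commutativity of $\mD_+$ with its transpose (the paper phrases this as ``$\mD_-$ and $\mD_+$ commute'' and writes $\mL^n=\mD_-^n\mD_+^n$, but since $\mD_-=-\mD_+^\top$ this is the same step). Your explicit justification of commutativity via the circulant structure and your derivation of the variance inequality are welcome additions --- the paper asserts the variance claim without proof.
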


\begin{proof}Observe that $\vg = \A_\sigma^n \vd = \vd + (-1)^n\sigma \mL^n d$. Therefore,
\begin{equation}\label{eqn:gsquared}
\|\vg\|^2 = \left \langle \vd + (-1)^n\sigma \mL^n \vd , \vd + (-1)^n\sigma \mL^n \vd \right\rangle = \|\vd\|^2 + 2 (-1)^n \sigma \langle \vd, \mL^n \vd\rangle + \sigma^2 \|\mL^n \vd\|^2.
\end{equation}
Next, note $\mD_-$ and $\mD_+$ are commute; thus
\begin{equation}\label{eqn:Ln}
\mL^n = \underbrace{(\mD_-\mD_+)\cdots (\mD_-\mD_+)}_{n} = \underbrace{\mD_-\cdots \mD_-}_{n} \underbrace{\mD_+\cdots \mD_+}_{n}  = \mD_-^n \mD_+^n.
\end{equation}
Now, we have
\begin{equation}\label{eqn:dLd}
\langle \vd, \mL^n \vd\rangle =
\langle \vd, \mD_-^n \mD_+^n d \rangle =
\langle (\mD_-^n)^T\vd,  \mD_+^n \vd \rangle =
\langle (-1)^n \mD_+^n \vd,  \mD_+^n \vd \rangle =
(-1)^n \|\mD_+^n \vd\|^2,
\end{equation}
where we used Eq.~(\ref{eqn:Ln}) in the first equality and $\mD_-=-\mD_+^T$ in the second to last equality.

Substituting Eq.~(\ref{eqn:dLd}) into Eq.~(\ref{eqn:gsquared}), yields Eq.~(\ref{eqn:l2_reduction_formula}).
\end{proof}

\subsection{Connection to Hamilton-Jacobi PDEs}
The motivation for the proposed LS-SGD comes from the Hamilton-Jacobi PDE (HJ-PDE). Consider the following unusual HJ-PDE with the empirical risk function, $f(\w),$ as initial condition
\begin{equation}
\label{HJ-PDE}
\begin{cases}
u_t + \frac{1}{2} \big\langle \nabla_\w u,  \A_\sigma^{-1}\nabla_\w u \big\rangle = 0, & (\w, t) \in \Omega \times [0, \infty)\\
u(\w, 0) = f(\w), & \w \in \Omega
\end{cases}
\end{equation}
By the Hopf-Lax formula \cite{Evans:2010}, the unique viscosity solution to Eq.~(\ref{HJ-PDE}) is represented by
$$
u(\w, t) = \inf_{\vv} \Big\{ f(\vv) +  \frac{1}{2t} \big\langle \vv-\w,\A_\sigma (\vv-\w) \big\rangle \Big\}.
$$

This viscosity solution $u(\w, t)$ makes $f(\w)$ "more convex", an intuitive definition and theoretical explanation of "more convex" can be found in \cite{Chaudhari:2017DeepRelaxation}, by bringing down the local maxima while retaining and widening local minima. An illustration of this is shown in Fig.~\ref{fig:tshirt}. If we perform the smoothing GD with proper step size on the function $u(\w,t)$, it is easier to reach the global or at least a flat minima of the original nonconvex function $f(\w)$. 

\begin{figure}[ht]
\centering
\begin{tabular}{c}
\includegraphics[scale=0.5]{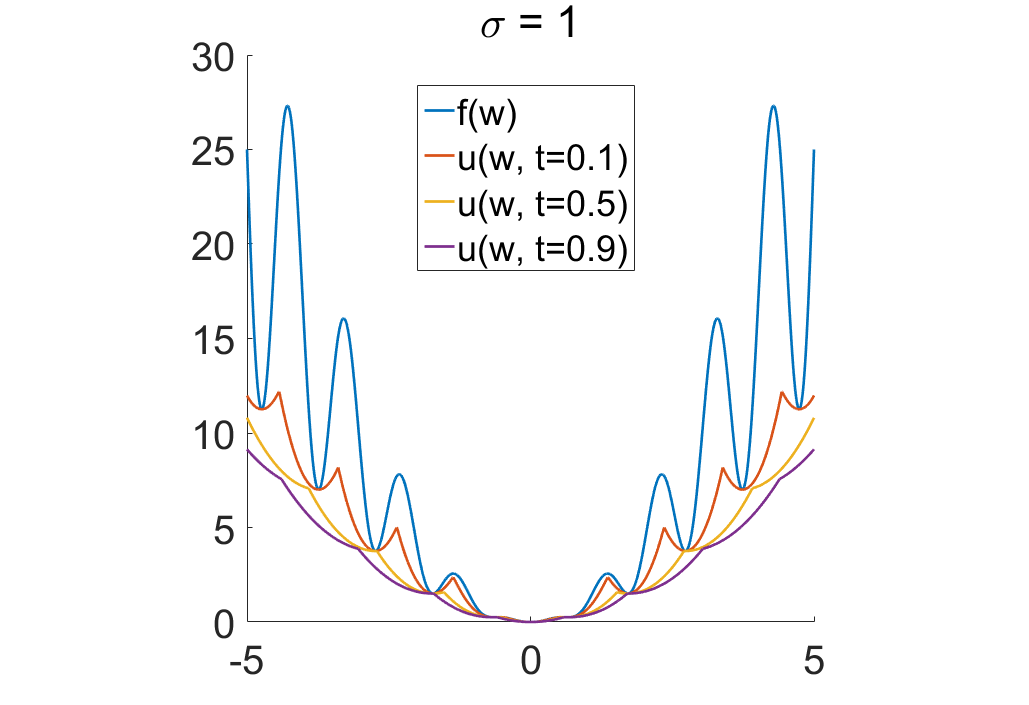} \\
\end{tabular}
\caption{$f(\w) = \|\w\|^2\big(1+\frac{1}{2} \sin(2\pi\|\w\|)\big)$ is made more convex by solving Eq.(\ref{HJ-PDE}). The plot shows the cross section of the 5D problem with $\sigma = 1$ and different $t$ values.}\label{fig:tshirt}
\end{figure}

\begin{prop}\label{prop:env}
Suppose $f(\w)$ is differentiable, the LS-GD on $u(\w,t)$
$$
\w^{k+1} = \w^k - t \A_\sigma^{-1}\nabla_\w u(\w^k,t)
$$
is equivalent to the smoothing implicit GD on $f(\w)$
\begin{equation}
\label{LS-Implicit}
\w^{k+1}  = \w^k - t \A_\sigma^{-1}\nabla f(\w^{k+1}).
\end{equation}
\end{prop}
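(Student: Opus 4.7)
The plan is to identify the minimizer in the Hopf-Lax representation of $u(\w,t)$, apply the envelope theorem to compute $\nabla_\w u(\w,t)$, and then observe that the first-order optimality condition for the minimizer is precisely the implicit update rule.

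First I would let $\vv^\star = \vv^\star(\w,t)$ denote a minimizer of
$$
\vv \mapsto f(\vv) + \frac{1}{2t}\langle \vv-\w, \A_\sigma(\vv-\w)\rangle,
$$
assuming for small enough $t$ that such a minimizer exists and depends (locally) smoothly on $\w$; this is the standard regularity hypothesis under which the Hopf-Lax formula yields a classical solution. Taking the gradient in $\vv$ and setting it to zero gives the first-order condition
$$
\nabla f(\vv^\star) + \frac{1}{t}\A_\sigma(\vv^\star - \w) = 0,
$$
which rearranges to $\vv^\star = \w - t\A_\sigma^{-1}\nabla f(\vv^\star)$.

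Next I would apply the envelope theorem to differentiate $u(\w,t)$ with respect to $\w$, treating $\vv^\star$ as independent of $\w$ at the optimum. This yields
$$
\nabla_\w u(\w,t) = -\frac{1}{t}\A_\sigma(\vv^\star - \w),
$$
and combining this with the first-order condition above immediately gives the key identity
$$
\nabla_\w u(\w,t) = \nabla f(\vv^\star(\w,t)).
$$
Substituting into the LS-GD update on $u$ produces
$$
\w^{k+1} = \w^k - t\A_\sigma^{-1}\nabla f(\vv^\star(\w^k,t)).
$$

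Finally I would close the loop by noting that the first-order condition, evaluated at $\w=\w^k$, reads exactly $\vv^\star(\w^k,t) = \w^k - t\A_\sigma^{-1}\nabla f(\vv^\star(\w^k,t))$, so $\vv^\star(\w^k,t)$ equals the left-hand side $\w^{k+1}$ of the update. Therefore $\nabla f(\vv^\star(\w^k,t)) = \nabla f(\w^{k+1})$, and the scheme becomes the smoothing implicit GD step $\w^{k+1} = \w^k - t\A_\sigma^{-1}\nabla f(\w^{k+1})$, as claimed.

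The main obstacle is the justification of the envelope-theorem step: one needs $u(\w,t)$ to be differentiable at $\w$, which in general requires either smoothness/convexity of $f$ or smallness of $t$ so that the quadratic penalty $\frac{1}{2t}\langle\cdot,\A_\sigma\cdot\rangle$ dominates and guarantees a unique smooth minimizer. Since the proposition merely assumes differentiability of $f$, I would note that the equivalence is to be understood in the regime where such a minimizer exists and is unique; this is the same implicit regularity used in the Hopf-Lax viscosity-solution discussion cited above.
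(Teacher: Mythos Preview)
Your proposal is correct and follows essentially the same route as the paper: both identify the Hopf--Lax minimizer $\vv^\star(\w,t)$, invoke the envelope theorem (which the paper calls the ``Euler--Lagrange equation'') to obtain $\nabla_\w u(\w,t) = -\tfrac{1}{t}\A_\sigma(\vv^\star-\w)$, and then recognize that the LS-GD update sets $\w^{k+1}=\vv^\star(\w^k,t)$, which together with the first-order condition for $\vv^\star$ yields the implicit step. Your added discussion of the regularity needed for the envelope step is a welcome clarification that the paper leaves implicit.
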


\begin{proof}
We define 
$$
z(\w,\vv,t):= f(\vv) + \frac{1}{2t}\langle \vv-\w, \A_\sigma(\vv-\w) \rangle, 
$$
and rewrite $u(\w,t) = \inf_{\vv} z(\w,\vv,t)$ as $z(\w,\vv(\w,t),t)$, where $\vv(\w,t) = \arg\min_{\vv} z(\w,\vv,t)$.
Then by the Euler-Lagrange equation,
\begin{align*}
\nabla_{\w} u(\w,t) = \nabla_{\w} z(\w,\vv(\w,t),t) = \mJ_\w\vv(\w,t) \nabla_{\vv} z(\w,\vv(\w,t),t) + \nabla_{\mathbf{w}} z(\w,\vv(\w,t),t),
\end{align*}
where $\mJ_\w\mathbf{v}(\w,t)$ is the Jacobian matrix of $\vv$ w.r.t. $\w$. Notice that $\nabla_{\vv} z(\w,\vv(\w,t),t) = \mathbf{0}$, 
$$
\nabla_{\w} u(\w,t) = \nabla_{\w} z(\w,\vv(\w,t),t) = -\frac{1}{t}\A_\sigma(\vv(\w,t)-\w).
$$
Letting $\w = \w^k$ and $\w^{k+1} = \vv(\w^k,t) = \arg\min_{\mathbf{v}} z(\w^k,\vv,t)$ in the above equalities, we have 
$$
\nabla_{\w} u(\w^k,t) = -\frac{1}{t}\A_\sigma(\w^{k+1}-\w^k).
$$ 
In summary, the gradient descent $\w^{k+1} = \w^k - t \A_{\sigma}^{-1} \nabla_{\w} u(\w^k,t)$ is equivalent to the proximal point iteration $\w^{k+1} = \arg\min_{\vv} f(\vv) + \frac{1}{2t}\langle \vv-\w^k, \A_\sigma(\vv -\w^k) \rangle$, which yields $\w^{k+1}  = \w^k - t \A_\sigma^{-1}\nabla f(\w^{k+1})$.
\end{proof}

The studied LS-GD algorithm is an explicit relaxation of the implicit algorithm in Eq.(\ref{LS-Implicit}).

\subsection{Conclusion}
Motivated by the theory of Hamilton-Jacobi partial differential equations, we proposed Laplacian smoothing gradient descent and its high order generalizations. This simple modification dramatically reduces the variance and optimality gap in stochastic gradient descent, allows us to take a larger step size, and helps to find better minima. Extensive numerical examples ranging from toy cases and shallow and deep neural nets to generative adversarial networks and deep reinforcement learning, all demonstrate the advantage of the proposed smoothed gradient. Several issues remain, in particular devising an on-the-fly adaptive method for choosing the smoothing parameter $\sigma$ instead of using a fixed value.

\section{Appendix}\label{section-appendix}

\subsection{Proof of Theorem~\ref{L2-Results-High-Prob}}
In this part, we will give a proof for Theorem~\ref{L2-Results-High-Prob}. 

\begin{lemma}\cite{TaoBlog}
\label{basic}
Let $t, u > 0$, $\vv$ be an $m$-dimensional standard normal random vector, and let $F:\mathbb{R}^m \rightarrow \mathbb{R}$ be a function such that $\|F(\vx) - F(\vy)\| \leq \|\vx - \vy\|$ for all $\vx$, $\vy \in \mathbb{R}^m$. Then
\begin{equation}
\label{Lemma-1-Eq}
\mathbb{P}\left( F(\vv) \geq \mathbb{E}F(\vv) + u \right) \leq \exp{\left(-tu+\frac{1}{2}\left(\frac{\pi t}{2}\right)^2\right)}.
\end{equation}
\end{lemma}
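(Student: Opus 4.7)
The plan is to prove this classical Gaussian concentration bound for $1$-Lipschitz functions via the Maurey--Pisier semigroup/rotation argument, which is precisely the route that produces the constant $\pi/2$ appearing in the statement (optimizing over $t$ gives the familiar $\exp(-2u^2/\pi^2)$ tail).

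The first step is the standard Chernoff reduction: by Markov's inequality,
\[
\mathbb{P}\bigl( F(\vv) - \mathbb{E}F(\vv) \geq u \bigr) \leq e^{-tu}\,\mathbb{E}\!\left[ e^{t(F(\vv) - \mathbb{E}F(\vv))} \right],
\]
so it suffices to show that the moment generating function is bounded by $\exp\bigl(\tfrac{1}{2}(\pi t/2)^2\bigr)$. To control it, I would introduce an independent copy $\vv'$ of $\vv$ and use Jensen's inequality on the exponential against $\vv'$ to replace the deterministic quantity $\mathbb{E}F(\vv)$ by the random $F(\vv')$:
\[
\mathbb{E}\!\left[ e^{t(F(\vv) - \mathbb{E}F(\vv))} \right] \leq \mathbb{E}\!\left[ e^{t(F(\vv) - F(\vv'))} \right].
\]

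The heart of the argument is the rotation trick. Define the interpolation $\vx(\theta) = \vv\cos\theta + \vv'\sin\theta$ for $\theta \in [0,\pi/2]$, and its ``derivative copy'' $\vx'(\theta) = -\vv\sin\theta + \vv'\cos\theta$. Since these are orthogonal rotations of $(\vv,\vv')$, for every fixed $\theta$ the pair $(\vx(\theta),\vx'(\theta))$ is again two independent standard normals in $\mathbb{R}^m$. Writing
\[
F(\vv') - F(\vv) = \int_0^{\pi/2} \nabla F(\vx(\theta))^{\t} \vx'(\theta)\, d\theta = \frac{\pi}{2}\int_0^{\pi/2} \nabla F(\vx(\theta))^{\t} \vx'(\theta)\, \frac{d\theta}{\pi/2},
\]
and applying Jensen's inequality with the probability measure $\tfrac{2}{\pi}d\theta$ on $[0,\pi/2]$ yields
\[
\mathbb{E}\!\left[ e^{t(F(\vv) - F(\vv'))} \right] \leq \frac{2}{\pi}\int_0^{\pi/2} \mathbb{E}\!\left[ \exp\!\left(-\tfrac{\pi t}{2}\,\nabla F(\vx(\theta))^{\t}\vx'(\theta)\right) \right] d\theta.
\]

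Finally, for each fixed $\theta$ I would condition on $\vx(\theta)$ and use the independence of $\vx'(\theta)$: the conditional MGF of a Gaussian linear form gives
\[
\mathbb{E}\!\left[ \exp\!\left(-\tfrac{\pi t}{2}\,\nabla F(\vx(\theta))^{\t}\vx'(\theta)\right)\,\Big|\,\vx(\theta)\right] = \exp\!\left(\tfrac{1}{2}\bigl(\tfrac{\pi t}{2}\bigr)^{2}\,\|\nabla F(\vx(\theta))\|^2\right),
\]
and the $1$-Lipschitz hypothesis bounds $\|\nabla F\|\leq 1$ almost everywhere (Rademacher's theorem; if $F$ is not smooth one first convolves with a Gaussian mollifier and passes to the limit). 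This closes the MGF bound, and combined with Chernoff gives the stated inequality.

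The main obstacle is the technical point about differentiability: $\nabla F$ need only exist almost everywhere, so one must justify the line-integral representation of $F(\vv')-F(\vv)$. I would handle this by a standard mollification argument, proving the bound for smooth $F$ with $\|\nabla F\|_\infty \leq 1$ and then passing to the Lipschitz limit, where the absence of a $\sigma$-dependence or any structural features of $F$ beyond Lipschitzness keeps the argument clean.
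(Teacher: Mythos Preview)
Your argument is correct and is precisely the Maurey--Pisier rotation trick that yields the $\pi/2$ constant. Note, however, that the paper does not supply its own proof of this lemma: it is stated with a citation to Tao's blog and used as a black box (the very next line reads ``Taking $t=\frac{4}{\pi^2}$ in Lemma~\ref{basic}\ldots''). The proof you have written is essentially the one found at that reference, so there is nothing to compare --- you have filled in exactly the argument the paper outsources.
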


Taking $t=\frac{4}{\pi^2}$ in Lemma~\ref{basic}, we obtain

\begin{lemma}
\label{basic2}
Let $u > 0$, $\vv$ be an $m$-dimensional standard normal random vector, and let $F:\mathbb{R}^m \rightarrow \mathbb{R}$ be a function such that $\|F(\vx) - F(\vy)\| \leq \|\vx - \vy\|$ for all $\vx$, $\vy \in \mathbb{R}^m$. Then
\begin{equation}
\label{Lemma-2-Eq}
\mathbb{P}\left( F(\vv) \geq \mathbb{E}F(\vv) + u \right) \leq \exp{\left( -\frac{2}{\pi^2}u^2 \right)}.
\end{equation}
\end{lemma}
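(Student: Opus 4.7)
The plan is to derive Lemma~\ref{basic2} as a direct corollary of Lemma~\ref{basic} by optimizing the free parameter $t$. Lemma~\ref{basic} supplies the family of bounds
\[
\mathbb{P}\bigl(F(\vv) \geq \mathbb{E}F(\vv) + u\bigr) \leq \exp\!\left(-tu + \frac{\pi^2 t^2}{8}\right),
\]
valid for every $t > 0$, and we are free to pick whichever $t$ makes the right-hand side smallest for the given $u > 0$. This is the standard Chernoff-style ``optimize over $t$'' trick.

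The exponent $-tu + \pi^2 t^2/8$ is a convex quadratic in $t$. First I would differentiate with respect to $t$ and set the derivative to zero, obtaining $-u + \pi^2 t/4 = 0$, so the optimizer is $t^{\star} = 4u/\pi^2$. Since $u > 0$ we have $t^\star > 0$, which makes this choice admissible in Lemma~\ref{basic}. (I read the preceding phrase ``Taking $t = 4/\pi^2$'' as a mild typo for $t = 4u/\pi^2$; only the latter produces the quadratic-in-$u$ exponent claimed in Lemma~\ref{basic2}.)

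Next I would substitute $t^\star$ back into the exponent and simplify:
\[
-t^\star u + \frac{\pi^2 (t^\star)^2}{8} = -\frac{4u^2}{\pi^2} + \frac{\pi^2}{8}\cdot \frac{16 u^2}{\pi^4} = -\frac{4u^2}{\pi^2} + \frac{2u^2}{\pi^2} = -\frac{2u^2}{\pi^2},
\]
which is exactly the exponent appearing in the statement of Lemma~\ref{basic2}. The entire argument is a one-line optimization and there is no substantive obstacle, provided one takes Lemma~\ref{basic} (a Gaussian concentration-type inequality for $1$-Lipschitz functions, traditionally proved via stochastic calculus or log-Sobolev techniques) as given.
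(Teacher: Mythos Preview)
Your proposal is correct and follows exactly the paper's approach of specializing Lemma~\ref{basic} at a particular value of $t$; you are also right that the paper's stated choice $t = 4/\pi^2$ is a typo for $t^\star = 4u/\pi^2$, since only the $u$-dependent choice yields the quadratic exponent $-2u^2/\pi^2$ in Lemma~\ref{basic2}.
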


\begin{lemma}\label{ld}
Let $\vv$ be an $m$-dimensional standard normal random vector. Let $1\leq p\leq\infty$. Let $0< u <\mathbb{E}\|\vv\|_{\ell_{p}}$. Let $\mT\in\mathbb{R}^{m\times m}$ be such that $\|\mT\vx\|_{\ell_{p}}\leq\|\vx\|_{\ell_{p}}$ for all $\vx\in\mathbb{R}^{m}$. Then
\[\mathbb{P}\left(\|\mT\vv\|_{\ell_{p}}\geq
\frac{\mathbb{E}\|\mT\vv\|_{\ell_{p}}+u}{\mathbb{E}\|\vv\|_{\ell_{p}}-u}
\|\vv\|_{\ell_{p}}\right)\leq 2\exp{\left(-\frac{2}{\pi^{2}}u^{2}\right)}.\]
\end{lemma}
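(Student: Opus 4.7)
The plan is to apply the Gaussian concentration inequality of Lemma~\ref{basic2} to two functionals of $\vv$ and combine the resulting one-sided tail bounds by a union bound. Specifically, I will set
\[
F_1(\vx) := \|\mT\vx\|_{\ell_p}, \qquad F_2(\vx) := \|\vx\|_{\ell_p},
\]
and verify that both are $1$-Lipschitz in the appropriate sense. For $F_2$ this is the standard reverse triangle inequality; for $F_1$ it follows from
$\|F_1(\vx)-F_1(\vy)\|\le \|\mT(\vx-\vy)\|_{\ell_p}\le\|\vx-\vy\|_{\ell_p}$, where the last inequality is precisely the contractivity hypothesis on $\mT$.

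Next, I will instantiate Lemma~\ref{basic2} twice. The first application to $F_1$ gives the upper-tail bound
\[
\mathbb{P}\bigl(\|\mT\vv\|_{\ell_p}\ge \mathbb{E}\|\mT\vv\|_{\ell_p}+u\bigr)\le \exp\!\left(-\tfrac{2}{\pi^2}u^2\right).
\]
Applying Lemma~\ref{basic2} to $-F_2$ (which is equally Lipschitz) converts into a lower-tail bound
\[
\mathbb{P}\bigl(\|\vv\|_{\ell_p}\le \mathbb{E}\|\vv\|_{\ell_p}-u\bigr)\le \exp\!\left(-\tfrac{2}{\pi^2}u^2\right),
\]
and this is where the hypothesis $u<\mathbb{E}\|\vv\|_{\ell_p}$ is used so that the event is nontrivial and the denominator in the conclusion is positive.

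Finally, a union bound shows that with probability at least $1-2\exp(-2u^2/\pi^2)$ both complementary events hold simultaneously, i.e.\ $\|\mT\vv\|_{\ell_p}< \mathbb{E}\|\mT\vv\|_{\ell_p}+u$ and $\|\vv\|_{\ell_p}> \mathbb{E}\|\vv\|_{\ell_p}-u$. On this good event I can rearrange to
\[
\|\mT\vv\|_{\ell_p}\;<\;\frac{\mathbb{E}\|\mT\vv\|_{\ell_p}+u}{\mathbb{E}\|\vv\|_{\ell_p}-u}\,\bigl(\mathbb{E}\|\vv\|_{\ell_p}-u\bigr)\;<\;\frac{\mathbb{E}\|\mT\vv\|_{\ell_p}+u}{\mathbb{E}\|\vv\|_{\ell_p}-u}\,\|\vv\|_{\ell_p},
\]
which is exactly the complement of the event bounded in the lemma. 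Taking complements yields the desired probability bound.

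The most delicate point is the Lipschitz verification needed to invoke Lemma~\ref{basic2}: the underlying Gaussian concentration is stated with respect to the Euclidean norm, while the natural Lipschitz bound on $F_1,F_2$ is in $\ell_p$. For $p\ge 2$ one has $\|\cdot\|_{\ell_p}\le\|\cdot\|_{\ell_2}$ so the reduction is free; for $p<2$ one should either absorb the embedding constant $m^{1/p-1/2}$ into $u$ or reinterpret Lemma~\ref{basic2} with the correct dual-norm Lipschitz constant. Once this technical point is handled, the remainder of the argument is a clean two-sided concentration plus union bound as above.
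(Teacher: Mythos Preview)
Your approach is essentially identical to the paper's: apply Lemma~\ref{basic2} once to $\|\mT\vv\|_{\ell_p}$ for the upper tail and once to $-\|\vv\|_{\ell_p}$ for the lower tail, then combine by a union bound. The paper's proof is terser and does not explicitly verify the Lipschitz condition or discuss the $\ell_p$ versus $\ell_2$ issue you flag; that caveat is a genuine technical point the paper glosses over (though the lemma is only invoked downstream with $p=2$, where it is moot).
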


\begin{proof}
By Lemma \ref{basic2},
\[\mathbb{P}(\|\mT\vv\|_{\ell_{p}}\geq\mathbb{E}\|\mT\vv\|_{\ell_{p}}+u)\leq e^{-\frac{2}{\pi^{2}}u^{2}}\]
and
\[\mathbb{P}(-\|\vv\|_{\ell_{p}}\geq-\mathbb{E}\|\vv\|_{\ell_{p}}+u)\leq e^{-\frac{2}{\pi^{2}}u^{2}}.\]
The second inequality gives
\[\mathbb{P}(\|\vv\|_{\ell_{p}}\leq\mathbb{E}\|\vv\|_{\ell_{p}}-u)\leq e^{-\frac{2}{\pi^{2}}u^{2}}.\]
Therefore,
\begin{align*}
&\mathbb{P}\left(\|\mT\vv\|_{\ell_{p}}\geq\frac{\mathbb{E}\|\mT\vv\|_{\ell_{p}}+u}{
\mathbb{E}\|\vv\|_{\ell_{p}}-u}\|\vv\|_{\ell_{p}}\right)\\\leq&
\mathbb{P}(\|\mT\vv\|_{\ell_{p}}\geq\mathbb{E}\|\mT\vv\|_{\ell_{p}}+u)+
\mathbb{P}(\|\vv\|_{\ell_{p}}\leq\mathbb{E}\|\vv\|_{\ell_{p}}-u)\leq 2e^{-\frac{2}{\pi^{2}}u^{2}}.
\end{align*}
\end{proof}

\begin{lemma}\label{expectationlp}
Let $1\leq p\leq 2$. Let $\mT\in\mathbb{R}^{m\times m}$. Let $\vv$ be an $m$-dimensional standard normal random vector. Then
\[\mathbb{E}\|\mT\vv\|_{\ell_{p}}\leq m^{\frac{1}{p}-\frac{1}{2}}(\mathrm{Trace}\,\mT^{*}\mT)^{\frac{1}{2}}\left(\mathbb{E}|\vv_{1}|^{p}\right)^{\frac{1}{p}},\]
where $\vv_1$ is the first coordinate of $\vv$.
\end{lemma}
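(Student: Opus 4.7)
The plan is to reduce the $\ell_p$ norm to a sum of coordinates, pull out the expectation, exploit Gaussianity row by row, and close up with a power-mean inequality.

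First, I would apply Jensen's inequality to the concave function $x\mapsto x^{1/p}$ (valid since $p\ge 1$):
\[
\mathbb{E}\|\mT\vv\|_{\ell_p} \;=\; \mathbb{E}\!\left(\sum_{i=1}^m |(\mT\vv)_i|^p\right)^{1/p} \;\le\; \left(\sum_{i=1}^m \mathbb{E}\,|(\mT\vv)_i|^p\right)^{1/p}.
\]
Next, I would use that each coordinate $(\mT\vv)_i = \sum_j T_{ij}\vv_j$ is a centered Gaussian with variance $\|\vt_i\|^2 := \sum_j T_{ij}^2$, where $\vt_i$ is the $i$-th row of $\mT$. Hence $(\mT\vv)_i \stackrel{d}{=} \|\vt_i\|\, Z$ with $Z \sim \mathcal{N}(0,1)$, so
\[
\mathbb{E}\,|(\mT\vv)_i|^p \;=\; \|\vt_i\|^p\,\mathbb{E}|\vv_1|^p.
\]
Summing over $i$ gives $\sum_i \mathbb{E}|(\mT\vv)_i|^p = \bigl(\sum_i \|\vt_i\|^p\bigr)\mathbb{E}|\vv_1|^p$.

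The final step is to bound $\sum_i \|\vt_i\|^p$ in terms of $\sum_i \|\vt_i\|^2 = \mathrm{Trace}(\mT^{*}\mT)$. Since $p/2 \le 1$, the power-mean inequality (equivalently, Jensen applied to the concave map $t\mapsto t^{p/2}$) yields
\[
\frac{1}{m}\sum_{i=1}^m \|\vt_i\|^p \;\le\; \left(\frac{1}{m}\sum_{i=1}^m \|\vt_i\|^2\right)^{p/2},
\]
which rearranges to $\sum_i \|\vt_i\|^p \le m^{1-p/2}\bigl(\mathrm{Trace}(\mT^{*}\mT)\bigr)^{p/2}$. Substituting back and taking the $1/p$ power produces exactly
\[
\mathbb{E}\|\mT\vv\|_{\ell_p} \;\le\; m^{1/p-1/2}\bigl(\mathrm{Trace}(\mT^{*}\mT)\bigr)^{1/2}\bigl(\mathbb{E}|\vv_1|^p\bigr)^{1/p}.
\]

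There is no real obstacle here beyond tracking the direction of the inequalities: both Jensen applications require the respective concavity, which holds precisely because $1\le p\le 2$. The proof uses no spectral information about $\mT$, which is why the Frobenius norm $\sqrt{\mathrm{Trace}(\mT^{*}\mT)}$ appears rather than the operator norm; the assumption $p\le 2$ is what ties the $\ell_p$ geometry to this $\ell_2$-type quantity through the power-mean step.
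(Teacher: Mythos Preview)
Your proof is correct and follows essentially the same approach as the paper: Jensen's inequality to move the expectation inside the $1/p$-th power, the Gaussian identity $\mathbb{E}|(\mT\vv)_i|^p=\|\vt_i\|^p\,\mathbb{E}|\vv_1|^p$ row by row, and then the power-mean (H\"older) bound $\sum_i \|\vt_i\|^p\le m^{1-p/2}(\mathrm{Trace}\,\mT^{*}\mT)^{p/2}$. The paper's write-up is more compressed but the three steps and the use of $1\le p\le2$ in the last one are identical.
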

\begin{proof}
We write $\mT=(\mT_{i,j})_{1\leq i,j\leq n}$. Then
\begin{eqnarray*}
\mathbb{E}\|\mT\vv\|_{\ell_{p}}&=&\mathbb{E}\left(\sum_{i=1}^{n}\left|\sum_{j=1}^{n}\mT_{i,j}\vv_{j}\right|^{p}\right)^{\frac{1}{p}}\\&\leq&
\left(\sum_{i=1}^{n}\mathbb{E}\left|\sum_{j=1}^{n}\mT_{i,j}\vv_{j}\right|^{p}\right)^{\frac{1}{p}}\\&=&
\left(\sum_{i=1}^{n}\left(\sum_{j=1}^{n}\mT_{i,j}^{2}\right)^{\frac{p}{2}}\mathbb{E}|\vv_{1}|^{p}\right)^{\frac{1}{p}}\\&\leq&
\left(n^{1-\frac{p}{2}}\left(\sum_{1\leq i,j\leq n}\mT_{i,j}^{2}\right)^{\frac{p}{2}}\mathbb{E}|\vv_{1}|^{p}\right)^{\frac{1}{p}}\\&=&
n^{\frac{1}{p}-\frac{1}{2}}\left(\text{Trace }\mT^{*}\mT\right)^{\frac{1}{2}}\left(\mathbb{E}|\vv_{1}|^{p}\right)^{\frac{1}{p}},
\end{eqnarray*}
where the second equality follows from the assumption that $\vv$ is an $m$-dimensional standard normal random vector.
\end{proof}

\begin{lemma}\label{expectedl2}
Let $\vv$ be an $m$-dimensional standard normal random vector. Then
\[\mathbb{E}\|\vv\|_{\ell_{2}}\geq\sqrt{m}-\pi.\]
\end{lemma}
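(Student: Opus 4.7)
The plan is to combine the exact second moment identity $\mathbb{E}\|\vv\|_{\ell_2}^2 = m$ with a variance bound obtained from Gaussian concentration, and then convert the resulting lower bound on the second moment of the mean into the desired additive lower bound on the mean itself.

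First I would apply Lemma~\ref{basic2} to $F(\vx) = \|\vx\|_{\ell_2}$, which is $1$-Lipschitz, to get one-sided concentration. Applying the same lemma to $-F$ (also $1$-Lipschitz) yields the reverse inequality, so that
\[
\mathbb{P}\bigl(|\|\vv\|_{\ell_2} - \mathbb{E}\|\vv\|_{\ell_2}| \geq u\bigr) \leq 2\exp\!\Bigl(-\tfrac{2}{\pi^2} u^2\Bigr), \qquad u > 0.
\]
Next I would compute the variance by integrating the tail: writing $Y = \|\vv\|_{\ell_2} - \mathbb{E}\|\vv\|_{\ell_2}$,
\[
\mathrm{Var}(\|\vv\|_{\ell_2}) = \mathbb{E}[Y^2] = \int_0^\infty 2u\,\mathbb{P}(|Y|\geq u)\,du \leq \int_0^\infty 4u\exp\!\Bigl(-\tfrac{2}{\pi^2} u^2\Bigr) du = \pi^2,
\]
where the last equality is the substitution $t = 2u^2/\pi^2$.

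Then I would use the identity $(\mathbb{E}\|\vv\|_{\ell_2})^2 = \mathbb{E}\|\vv\|_{\ell_2}^2 - \mathrm{Var}(\|\vv\|_{\ell_2}) = m - \mathrm{Var}(\|\vv\|_{\ell_2}) \geq m - \pi^2$, so that $\mathbb{E}\|\vv\|_{\ell_2} \geq \sqrt{m-\pi^2}$ when $m \geq \pi^2$. Finally, the elementary inequality $\sqrt{m-\pi^2} \geq \sqrt{m} - \pi$ (equivalent to $2\pi\sqrt{m} \geq 2\pi^2$, i.e., $m \geq \pi^2$) closes the case $m \geq \pi^2$, while for $m < \pi^2$ the bound $\sqrt{m} - \pi < 0 \leq \mathbb{E}\|\vv\|_{\ell_2}$ is trivial.

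The only nontrivial step is the variance bound, and the main care needed is in the tail integration constant; everything else is bookkeeping with Jensen-type identities. I do not expect any serious obstacle.
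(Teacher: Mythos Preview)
Your proposal is correct and follows essentially the same approach as the paper: bound $\mathrm{Var}(\|\vv\|_{\ell_2})$ by $\pi^2$ via the two-sided concentration from Lemma~\ref{basic2} and tail integration, then combine with $\mathbb{E}\|\vv\|_{\ell_2}^2=m$. The only cosmetic differences are that the paper integrates $\mathbb{P}(|W-\mathbb{E}W|\geq\sqrt{u})$ rather than $2u\,\mathbb{P}(|W-\mathbb{E}W|\geq u)$, and it applies $\sqrt{a-b}\geq\sqrt{a}-\sqrt{b}$ with $b=\mathrm{Var}(W)$ directly (avoiding your case split on $m\gtrless\pi^2$).
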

\begin{proof}
By Lemma \ref{basic2},
\[\mathbb{P}(\|\vv\|_{\ell_{2}}\geq\mathbb{E}\|\vv\|_{\ell_{2}}+u)\leq e^{-\frac{2}{\pi^{2}}u^{2}}\]
and
\[\mathbb{P}(-\|\vv\|_{\ell_{2}}\geq-\mathbb{E}\|\vv\|_{\ell_{2}}+u)\leq e^{-\frac{2}{\pi^{2}}u^{2}}.\]
Thus,
\[\mathbb{P}(|\|\vv\|_{\ell_{2}}-\mathbb{E}\|\vv\|_{\ell_{2}}|\geq u)\leq 2e^{-\frac{2}{\pi^{2}}u^{2}}.\]
Consider the random variable $W=\|\vv\|_{\ell_{2}}$. We have
\[\mathbb{E}|W-\mathbb{E}W|^{2}=\int_{0}^{\infty}\mathbb{P}(|W-\mathbb{E}W|\geq\sqrt{u})\,du\leq\int_{0}^{\infty}2e^{-\frac{2}{\pi^{2}}u}\,du=\pi^{2}.\]
Since $\mathbb{E}|W-\mathbb{E}W|^2 = \mathbb{E}W^2 - (\mathbb{E}W)^2$, we have
\[\mathbb{E}W\geq(\mathbb{E}W^{2})^{\frac{1}{2}}-(\mathbb{E}|W-\mathbb{E}W|^{2})^{\frac{1}{2}}\geq\sqrt{m}-\pi.\]
\end{proof}

\begin{lemma}\label{main-lemma}
Let $0<\epsilon<1-\frac{\pi}{\sqrt{m}}$. Let $\sigma>0$. Let
\[\beta=\frac{1}{m}\sum_{i=1}^{m}\frac{1}{1+2\sigma-\sigma z_{i}-\sigma\overline{z_{i}}},\]
where $z_{1},\ldots,z_{m}$ are the $m$ roots of unity. Let $\mB$ be the circular shift operator on $\mathbb{R}^{m}$. Let $\vv$ be an $m$-dimensional standard normal random vector. Then
\[\mathbb{P}\left(\|((1+2\sigma)\mI-\sigma \mB-\sigma \mB^{*})^{-1/2}\vv\|_{\ell_{2}}\geq\frac{\sqrt{\beta}+\epsilon}{1-\frac{\pi}{\sqrt{m}}-\epsilon}
\|\vv\|_{\ell_{2}}\right)\leq2e^{-\frac{2}{\pi^{2}}m\epsilon^{2}}.\]
\end{lemma}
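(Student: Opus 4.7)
The plan is to identify the operator $\mathbf{T} := ((1+2\sigma)\mathbf{I} - \sigma \mathbf{B} - \sigma \mathbf{B}^{*})^{-1/2}$ with $\mathbf{A}_\sigma^{-1/2}$ (since the displayed matrix is exactly $\mathbf{A}_\sigma$ from \eqref{eq:tri-diag} written in terms of the circular shift), and then invoke Lemma \ref{ld} with $p = 2$, plugging in the quantitative bounds for $\mathbb{E}\|\mathbf{T}\vv\|_{\ell_2}$ and $\mathbb{E}\|\vv\|_{\ell_2}$ supplied by Lemmas \ref{expectationlp} and \ref{expectedl2}.

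First I would check the two hypotheses of Lemma \ref{ld}. The circulant $\mathbf{A}_\sigma$ is diagonalized by the DFT with eigenvalues $1 + 2\sigma - \sigma z_i - \sigma \overline{z_i}$, all of which are $\geq 1$ since $z_i + \overline{z_i} = 2\cos\theta_i \leq 2$. Therefore $\mathbf{T}$ is symmetric with eigenvalues in $(0,1]$, which gives the contraction property $\|\mathbf{T}\vx\|_{\ell_2} \leq \|\vx\|_{\ell_2}$. I would then set $u := \sqrt{m}\,\epsilon$; the assumption $\epsilon < 1 - \pi/\sqrt{m}$ combined with Lemma \ref{expectedl2} yields $u < \sqrt{m} - \pi \leq \mathbb{E}\|\vv\|_{\ell_2}$, completing the verification of the hypotheses.

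Next, applying Lemma \ref{expectationlp} with $p = 2$ gives
$$
\mathbb{E}\|\mathbf{T}\vv\|_{\ell_2} \leq (\mathrm{Trace}\,\mathbf{T}^{*}\mathbf{T})^{1/2}(\mathbb{E}|\vv_1|^2)^{1/2} = (\mathrm{Trace}\,\mathbf{A}_\sigma^{-1})^{1/2} = \sqrt{m\beta},
$$
where I use $\mathbb{E}|\vv_1|^2 = 1$ and the fact that the trace of $\mathbf{A}_\sigma^{-1}$ equals the sum of reciprocal eigenvalues, which is $\sum_i (1 + 2\sigma - \sigma z_i - \sigma \overline{z_i})^{-1} = m\beta$. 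Together with $\mathbb{E}\|\vv\|_{\ell_2} \geq \sqrt{m} - \pi$ from Lemma \ref{expectedl2}, the threshold in Lemma \ref{ld} is bounded above by
$$
\frac{\mathbb{E}\|\mathbf{T}\vv\|_{\ell_2} + u}{\mathbb{E}\|\vv\|_{\ell_2} - u} \leq \frac{\sqrt{m\beta} + \sqrt{m}\,\epsilon}{\sqrt{m} - \pi - \sqrt{m}\,\epsilon} = \frac{\sqrt{\beta} + \epsilon}{1 - \pi/\sqrt{m} - \epsilon}.
$$
Hence the event in the claim is contained in the event controlled by Lemma \ref{ld}, and the stated tail bound $2e^{-(2/\pi^{2})u^2} = 2e^{-(2/\pi^{2})m\epsilon^2}$ follows.

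There is no single hard step here; the argument is essentially an assembly of the three preceding lemmas. The one piece that requires bookkeeping care is the direction of the threshold inequality: one must use the \emph{upper} bound on $\mathbb{E}\|\mathbf{T}\vv\|_{\ell_2}$ and the \emph{lower} bound on $\mathbb{E}\|\vv\|_{\ell_2}$ so that the threshold in Lemma \ref{ld} is dominated by the cleaner expression $(\sqrt{\beta} + \epsilon)/(1 - \pi/\sqrt{m} - \epsilon)$, ensuring containment of events and hence monotonicity of the probabilities in the right direction.
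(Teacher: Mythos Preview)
Your proposal is correct and follows essentially the same route as the paper: set $\mT = ((1+2\sigma)\mI - \sigma\mB - \sigma\mB^*)^{-1/2}$, apply Lemma~\ref{ld} with $p=2$ and $u=\sqrt{m}\,\epsilon$, bound $\mathbb{E}\|\mT\vv\|_{\ell_2}$ via Lemma~\ref{expectationlp} using $\mathrm{Trace}\,\mT^*\mT = m\beta$, and bound $\mathbb{E}\|\vv\|_{\ell_2}$ from below via Lemma~\ref{expectedl2}. If anything, your version is more careful than the paper's, since you explicitly verify the contraction hypothesis $\|\mT\vx\|_{\ell_2}\leq\|\vx\|_{\ell_2}$ and the condition $u < \mathbb{E}\|\vv\|_{\ell_2}$, both of which the paper leaves implicit.
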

\begin{proof}
Let $\mT=((1+2\sigma)\mI-\sigma \mB-\sigma \mB^{*})^{-1/2}$.
Taking $u=\sqrt{m}\epsilon$ in Lemma \ref{ld}, we have
\[\mathbb{P}\left(\|\mT\vv\|_{\ell_{2}}\geq
\frac{\mathbb{E}\|\mT\vv\|_{\ell_{2}}+\sqrt{m}\epsilon}{\mathbb{E}\|\vv\|_{\ell_{2}}-\sqrt{m}\epsilon}
\|\vv\|_{l^{2}}\right)\leq 2e^{-\frac{2}{\pi^{2}}m\epsilon^{2}}.\]
By Lemma \ref{expectationlp}, $\mathbb{E}\|\mT\vv\|_{\ell_{2}}\leq(\mathrm{Trace}\,\mT^{*}\mT)^{\frac{1}{2}}$. we have $\mathrm{Trace}\,\mT^{*}\mT=m\beta$. It is easy to show that $\mathrm{Trace}\,\mT^{*}\mT)=m\beta$ So $\mathbb{E}\|\mT\vv\|_{\ell_{2}}\leq\sqrt{m\beta}$. Also by Lemma \ref{expectedl2},
$\mathbb{E}\|\vv\|_{\ell_{2}}\geq\sqrt{m}-\pi$. Therefore,
\[\mathbb{P}\left(\|((1+2\sigma)\mI-\sigma \mB-\sigma \mB^{*})^{-1}\vv\|_{\ell_{2}}\geq\frac{\sqrt{\beta}+\epsilon}{1-\frac{\pi}{\sqrt{m}}-\epsilon}
\|\vv\|_{\ell_{2}}\right)\leq2e^{-\frac{2}{\pi^{2}}m\epsilon^{2}}.\]
\end{proof}

\begin{proof}[Proof of Theorem~\ref{L2-Results-High-Prob}]
Theorem~\ref{L2-Results-High-Prob} follows from Lemma~\ref{main-lemma} by substituting $\frac{\vv}{\|\vv\|_{\ell_2}}$ and using homogeneity and direct calculations.
\end{proof}

\subsection{Proof of Theorem~\ref{Theorem-Variance-Reduction}}
In this part, we will give a proof for Theorem~\ref{Theorem-Variance-Reduction}. 

\begin{lemma}[\cite{BIB:Bhatia97}]\label{lemma:eigenval_product}
Let $\prec_{w}$ denotes weak majorization. Denote eigenvalues of Hermitian matrix $\mX$, by $\lambda_1(\mX)\geq\ldots\geq \lambda_m(\mX)$. For every two Hermitian positive definite matrices $\mA$ and $\mB$, we have
$$
(\lambda_1(\A\B),\cdots,\lambda_m(\A\B)) \prec_w (\lambda_1(\A)\lambda_1(\B),\cdots,\lambda_m(\A)\lambda_m(\B)).
$$   
In particular, 
$$
\sum_{j=1}^{m} \lambda_j(\A\B) \leq \sum_{j=1}^{m}\lambda_j(\A)\lambda_j(\B).
$$
\end{lemma}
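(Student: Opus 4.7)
The plan is to prove the weak-majorization statement by first establishing the stronger \emph{log}-majorization of the eigenvalues, and then invoking the classical fact that log-majorization of positive decreasing sequences implies weak majorization. Throughout the eigenvalues are listed in decreasing order, as in the statement.

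First, I would reduce to a Hermitian form. Although $\mA\mB$ is generally non-Hermitian, the similarity $\mA\mB = \mA^{1/2}(\mA^{1/2}\mB\mA^{1/2})\mA^{-1/2}$ shows that $\mA\mB$ has the same spectrum as the Hermitian positive definite matrix $\mA^{1/2}\mB\mA^{1/2}$. In particular $\lambda_i(\mA\mB)>0$ for every $i$, and these eigenvalues coincide with the singular values of $\mA^{1/2}\mB\mA^{1/2}$, which lets me work in the singular-value world throughout.

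Next, I would invoke Horn's multiplicative inequality for singular values, $\sigma_{i+j-1}(\mX\mY)\le \sigma_i(\mX)\sigma_j(\mY)$, applied (together with the standard exterior-power/Ky Fan argument) to successive factorizations of $\mA^{1/2}\mB\mA^{1/2}$. The standard induction yields
\begin{equation*}
\prod_{i=1}^{k}\lambda_i(\mA\mB)\;\le\;\prod_{i=1}^{k}\lambda_i(\mA)\,\lambda_i(\mB),\qquad k=1,\ldots,m,
\end{equation*}
with equality at $k=m$ from multiplicativity of the determinant. This is precisely the log-majorization $(\lambda_i(\mA\mB))\prec_{\log}(\lambda_i(\mA)\lambda_i(\mB))$.

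Finally, I would pass from log-majorization to weak majorization. With $a_i=\lambda_i(\mA\mB)$ and $b_i=\lambda_i(\mA)\lambda_i(\mB)$ (positive, decreasingly ordered), the classical Weyl--P\'olya lemma says that $\prod_{i\le k}a_i\le\prod_{i\le k}b_i$ for all $k$ implies $\sum_{i\le k}\phi(a_i)\le\sum_{i\le k}\phi(b_i)$ for every convex nondecreasing $\phi:(0,\infty)\to\R$. Taking $\phi(x)=x$ gives the desired $\prec_w$ statement, and the ``in particular'' clause is just the case $k=m$. The main obstacle is this last step: each ingredient is textbook (this is essentially Theorem II.3.5 in Bhatia), but upgrading a multiplicative inequality to an additive one requires a genuine convexity argument --- most cleanly via the test functions $\phi_t(x)=(\log x - t)_+\circ\exp$ or, equivalently, Ky Fan's maximum principle on compound matrices --- so in practice one would cite Bhatia rather than reproduce the argument in full.
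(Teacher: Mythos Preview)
Your proof sketch is correct and follows the standard route found in Bhatia's \emph{Matrix Analysis}: reduce $\mA\mB$ to the Hermitian positive definite matrix $\mA^{1/2}\mB\mA^{1/2}$ via similarity, establish log-majorization of the eigenvalues through Horn's multiplicative inequality (or equivalently the exterior-power argument), and then invoke the Weyl--P\'olya lemma to pass from log-majorization to weak majorization. Each step is sound.

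The comparison with the paper is short: the paper does not prove this lemma at all. It is stated with a direct citation to Bhatia and used as a black box in the proof of the variance-reduction theorem. So you have supplied more than the paper does, namely an actual argument, and your argument is essentially the one Bhatia gives (around Theorem~III.4.5 and Corollary~III.4.6 together with the log-majorization~$\Rightarrow$~weak-majorization implication of Example~II.3.5). There is nothing to correct; if anything, your closing remark that one would ``cite Bhatia rather than reproduce the argument in full'' is exactly what the paper opted to do.
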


\begin{proof}[proof of Theorem~\ref{Theorem-Variance-Reduction}]
Let $\lambda_1\geq\ldots\geq \lambda_m$ denote the eigenvalues of $\Sigma$. The eigenvalues of $(A_\sigma^{n})^{-2}$ are given by $\{[1+4^n\sigma \sin^{2n}(\pi j/m)]^{-2}\}_{j=0}^{j=m-1}$, which we denote by $1=\alpha_1\geq\ldots\geq \alpha_m\geq (1+4^n\sigma)^{-2}$. 
We have
\begin{equation}\label{eqn:var_without_lsgd}
\sum_{j=1}^{m}\Var[\vn_j] = \trace(\Sigma) = \sum_{j=1}^{m} \lambda_j.
\end{equation}
On the other hand we also have
\begin{equation}\label{eqn:var_with_lsgd}
\sum_{j=1}^{m}\Var[(\A_\sigma^n)^{-1} \vn_j] = \trace((\A_\sigma^n)^{-1}\Sigma(\A_\sigma^n)^{-1}) = \trace((\A_\sigma^n)^{-2}\Sigma) \leq \sum_{j=1}^{m} \alpha_j \lambda_j, 
\end{equation}
where the last inequality is by lemma \ref{lemma:eigenval_product}. Now, 
\begin{align*}
\sum_{j=1}^{m} \lambda_j - \sum_{j=1}^{m} \alpha_j \lambda_j &= \sum_{j=1}^{m} (1-\alpha_j)\lambda_j  \\
&\geq \lambda_m (m - \sum_{j=1}^{m}  \alpha_j) \\
& = \frac{\lambda_1}{\kappa} (m - \sum_{j=1}^{m}  \alpha_j) \\
& \geq \frac{\sum_{j=1}^{m}\lambda_j}{m\kappa} (m - \sum_{j=1}^{m}  \alpha_j) \\ 
\end{align*}
Rearranging and simplifying above implies that
\begin{equation*}
\sum_{j=1}^{m}\alpha_j\lambda_j \leq (\sum_{j=1}^{m} \lambda_j)(1-\frac{1}{\kappa}+\frac{ \sum_{j=1}^{m}  \alpha_j}{m\kappa }).
\end{equation*} 
Substituting Eq.~(\ref{eqn:var_without_lsgd}) and Eq.~(\ref{eqn:var_with_lsgd}) in the above inequality, yields Eq.~(\ref{eqn:var_reduction_formula2}).
\end{proof}

\subsection{Proof of Lemma~\ref{thm:closed_formula}}
To proof Lemma~\ref{thm:closed_formula}, we first introduce the following lemma.

\begin{lemma}\label{lemma:DTFT}
For $0\leq \theta \leq 2\pi$, suppose 
$$
F(\theta) =  \frac{1}{1+2\sigma(1-\cos(\theta))},
$$
has the discrete-time Fourier transform of series $f[k]$. Then, for integer $k$,
$$
f[k] = \frac{\alpha^{|k|}}{\sqrt{4\sigma+1}}
$$
where 
$$
\alpha = \frac{2\sigma+1 - \sqrt{4\sigma+1}}{2\sigma}
$$
\end{lemma}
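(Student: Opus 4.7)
The plan is to compute the inverse discrete-time Fourier transform by contour integration. Starting from
\[
f[k] = \frac{1}{2\pi}\int_0^{2\pi} \frac{e^{ik\theta}}{1+2\sigma(1-\cos\theta)}\, d\theta,
\]
I would substitute $z=e^{i\theta}$, so that $\cos\theta = (z+z^{-1})/2$ and $d\theta = dz/(iz)$. After clearing the factor $z^{-1}$ from the denominator, this becomes a contour integral on the unit circle:
\[
f[k] = \frac{1}{2\pi i}\oint_{|z|=1} \frac{z^k}{-\sigma z^2 + (1+2\sigma)z - \sigma}\, dz.
\]

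Next, I would factor the quadratic. Its roots multiply to $1$ (Vieta) and sum to $(1+2\sigma)/\sigma$, so the roots are reciprocals $\alpha$ and $\alpha^{-1}$, where $\alpha = \frac{1+2\sigma - \sqrt{1+4\sigma}}{2\sigma}$. A quick estimate shows $0<\alpha<1$ for $\sigma>0$ (for instance, $(1+2\sigma)^2 - (1+4\sigma) = 4\sigma^2$, and both $1+2\sigma\pm \sqrt{1+4\sigma}$ are positive), so exactly one pole, at $z=\alpha$, lies inside the unit disk.

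For $k\geq 0$ the integrand is holomorphic away from $z=\alpha, \alpha^{-1}$, and by the residue theorem
\[
f[k] = \frac{\alpha^k}{-\sigma(\alpha-\alpha^{-1})}.
\]
Using $\alpha-\alpha^{-1} = -\sqrt{1+4\sigma}/\sigma$, the denominator simplifies to $\sqrt{1+4\sigma}$, giving $f[k] = \alpha^k/\sqrt{1+4\sigma}$. For $k<0$ I would appeal to the fact that $F(\theta)$ is real and even in $\theta$, which implies $f[-k]=f[k]$; alternatively one can substitute $z\mapsto 1/z$ in the contour integral and collect the residue at $\alpha^{-1}$, obtaining the same answer with $|k|$ in the exponent.

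The calculation is essentially mechanical residue bookkeeping. The only subtlety is verifying that the relevant root satisfies $0<\alpha<1$ so that the pole is correctly inside the unit disk, and handling the $k<0$ case, which is cleanest via the evenness of $F(\theta)$.
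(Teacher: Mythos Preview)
Your proposal is correct and follows essentially the same approach as the paper: substitute $z=e^{i\theta}$ to convert the Fourier integral into a contour integral on the unit circle, factor the resulting quadratic, identify the single pole inside the disk, and apply the residue theorem, handling $k<0$ via the real-valuedness/evenness of $F$. The only cosmetic differences are that you phrase the roots as reciprocals $\alpha,\alpha^{-1}$ via Vieta (the paper writes $\alpha_\pm$) and you give a slightly more explicit justification that $0<\alpha<1$.
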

\begin{proof}
By definition, 
\begin{equation}\label{eqn:f_k_real}
f[k] = \frac{1}{2\pi} \int_{0}^{2\pi}  F(\theta) e^{ik\theta} \,d\theta=  \frac{1}{2\pi} \int_{0}^{2\pi}   \frac{e^{ik\theta}}{1+2\sigma(1-\cos(\theta))} \,d\theta.
\end{equation}
Computing Eq.~(\ref{eqn:f_k_real}) using Residue Theorem is a well-known technique in complex analysis. First, note that because $F(\theta)$ is real valued, $f[k]=f[-k]$; therefore, it suffices to compute Eq.~(\ref{eqn:f_k_real}) for nonnegative $k$. Set $z=e^{i\theta}$. Observe that $\cos(\theta)=0.5(z+1/z)$ and $dz=iz d\theta$. Substituting in Eq.~(\ref{eqn:f_k_real}) and simplifying yields that
\begin{equation}\label{eqn:f_k_complex}
f[k] = \frac{-1}{2\pi i \sigma}\oint \frac{z^k}{(z-\alpha_{-})(z-\alpha_{+}) } \,dz,
\end{equation}
where the integral is taken around the unit circle, and $\alpha_{\pm}= \frac{2\sigma+1 \pm \sqrt{4\sigma+1}}{2\sigma}$ are the roots of quadratic $-\sigma z^2 +(2\sigma+1)z -\sigma$. Note that $\alpha_{-}$ lies within the unit circle; whereas, $\alpha_{+}$ lies outside of the unit circle. Therefore, because $k$ is nonnegative, $\alpha_{-}$ is the only singularity of the integrand in Eq.~(\ref{eqn:f_k_complex}) within the unit circle. A straightforward application of the Residue Theorem yields that 
$$
f[k] = \frac{- \alpha_{-}^{k}}{\sigma (\alpha_{-}-\alpha_{+})} = \frac{\alpha^{k}}{\sqrt{4\sigma+1}}.
$$  
This completes the proof.
\end{proof}

Next, we give a proof for Lemma~\ref{thm:closed_formula}.
\begin{proof}[Proof of Lemma~\ref{thm:closed_formula}]
First observe that we can re-write the left hand side of Eq.~(\ref{eqn:closed_formula}) as
\begin{equation}
\frac{1}{m}\sum_{j=0}^{m-1} \frac{1}{1+2\sigma(1-\cos(\frac{2\pi j}{m}))}. 
\end{equation}
It remains to show that the above summation is equal to the right hand side of Eq.~(\ref{eqn:closed_formula}).
This follows by lemmas \ref{lemma:DTFT} and standard sampling results in Fourier analysis (i.e. sampling $\theta$ at points $\{2\pi j/m\}_{j=0}^{m-1}$). Nevertheless, we provide the details here for completeness: Observe that that the inverse discrete-time Fourier transform of
$$
G(\theta) = \sum_{j=0}^{m-1}\delta(\theta-\frac{2\pi j }{m}).
$$
is given by
$$
g[k] = 
\begin{cases}
m/2\pi \qquad &\text{if $k$ divides $m$,}\\
0 \qquad  &\text{otherwise.}
\end{cases}
$$   
Furthermore, let 
$$
F(\theta) =  \frac{1}{1+2\sigma(1-\cos(\theta))},
$$
and use $f[k]$ to denote its inverse discrete-time Fourier transform.
Now,
\begin{align*}
\frac{1}{m}\sum_{j=0}^{m-1} \frac{1}{1+2\sigma(1-\cos(\frac{2\pi j}{m}))} &= \frac{1}{m} \int_0^{2\pi} F(\theta)G(\theta) \\
&= \frac{2\pi}{m}  \DTFT^{-1}[F\cdot G][0]  \\
&= \frac{2\pi}{m}  (\DTFT^{-1}[F] * \DTFT^{-1}[G])[0] \\
&=  \frac{2\pi}{m}   \sum_{r=-\infty}^{\infty} f[-r]g[r] \\
&=  \frac{2\pi}{m}  \sum_{\ell=-\infty}^{\infty} f[-\ell m]  \frac{m}{2\pi} \\
&=  \sum_{\ell=-\infty}^{\infty} f[-\ell m].
\end{align*}
The proof is completed by substituting the result of lemma \ref{lemma:DTFT} in the above sum and simplifying.
\end{proof}

\clearpage
\section*{Acknowledgments}
This material is based on research sponsored by the Air Force Research Laboratory under grant numbers FA9550-18-0167 and MURI FA9550-18-1-0502, the Office of Naval Research under grant number N00014-18-1-2527, the U.S. Department of Energy under grant number DOE SC0013838, and by the National Science Foundation under grant number DMS-1554564, (STROBE). We would like to thank Jialin Liu and professors Pratik Chaudhari, Adam Oberman and Ming Yan for stimulating discussions.


\end{document}